\documentclass{article}


\usepackage[utf8]{inputenc} 
\usepackage[T1]{fontenc}    
\usepackage[normalem]{ulem}
\usepackage[margin=1in]{geometry}
\usepackage[numbers,sort]{natbib}
\usepackage{amsmath}
\usepackage{amsthm, amssymb, thmtools, thm-restate}
\usepackage{bbm}
\usepackage{minitoc}
\usepackage{subfigure}
\usepackage{algorithm}
\usepackage{algorithmicx}
\usepackage{algpseudocode}
\usepackage{nicefrac}
\usepackage{booktabs}       
\usepackage{amsfonts}       
\usepackage{nicefrac}       
\usepackage{microtype}      
\usepackage{xcolor}         
\usepackage{url}            
\usepackage{hyperref}
\usepackage{enumitem}
\usepackage[capitalise]{cleveref}
\usepackage{authblk}
\bibliographystyle{unsrtnat}

\def\*#1{\boldsymbol{#1}}

\newtheorem{lemma}{Lemma}

\newtheorem{assumption}{Assumption}
\crefname{assumption}{Assumption}{Assumptions}

\crefname{example}{Example}{Examples}

\crefname{thm}{Theorem}{Theorems}

\crefname{lem}{Lemma}{Lemmas}

\crefname{prop}{Proposition}{Propositions}

\crefname{cor}{Corollary}{Corollaries}

\allowdisplaybreaks

\hypersetup{
    colorlinks,
    linkcolor={blue!50!black},
    citecolor={blue!50!black},
    urlcolor={blue!80!black}
}

\title{Maximizing Communication Efficiency for Large-scale Training via 0/1 Adam}

\newcommand{\myalgo}{\textbf{0/1 Adam}}
\usepackage[textsize=tiny]{todonotes}

\allowdisplaybreaks

\author[2]{Yucheng Lu\thanks{Corresponds to: yl2967@cornell.edu.}}
\author[1]{Conglong Li}
\author[1]{Minjia Zhang}
\author[2]{Christopher De Sa}
\author[1]{Yuxiong He}
\affil[1]{Microsoft}
\affil[2]{Department of Computer Science, Cornell\ University}
\date{}

\begin{document}

\maketitle

\begin{abstract}
    1-bit gradient compression and local steps are two representative techniques that enable drastic communication reduction in distributed SGD. Their benefits, however, remain an open question on Adam-based large model pre-training (e.g. BERT and GPT). In this paper, we demonstrate the non-linearity in Adam causes slow convergence even when 1-bit compression or local steps are individually applied. To alleviate this limitation, we propose \textbf{0/1 Adam} that linearizes each Adam step via approximating its optimizer states using their stale estimates and linear correlation. \textbf{0/1 Adam} performs an Adam-like step to preserve the adaptivity, while its linearity allows utilizing 1-bit compression and local steps simultaneously for wall-clock time speed up. We provide convergence guarantee for \textbf{0/1 Adam} on smooth non-convex objectives. On various large-scale benchmarks such as BERT-Base, BERT-Large, GPT-2 pre-training and ImageNet, we demonstrate on up to 128 GPUs that \textbf{0/1 Adam} is able to reduce up to 87\% of data volume, 54\% of communication rounds, and achieve up to 2$\times$ higher training throughput and end-to-end training time reduction compared to the state-of-the-art baseline 1-bit Adam; while enjoying the same statistical convergence speed and end task model accuracy on GLUE dataset and ImageNet validation set. 
\end{abstract}

\section{Introduction}
\label{sec:intro}
Over the past few years, we have witnessed outstanding performance of foundation models on many applications. However, these models, including BERT~\cite{devlin2018bert} and GPT~\cite{gpt-2,gpt-3}, usually have hundreds of millions or even billions of parameters and require to be trained on massive GPUs. For example, the largest dense transformer model, 530B MT-NLG~\cite{mt-nlg}, was trained over 4000 GPUs in more than a month. At this scale, the expensive communication overhead across computing processors and servers hinders the scalability \citep{alistarh2017qsgd}.

\emph{1-bit gradient compression} and \emph{local steps} are two representative methods to mitigate the communication bottleneck. 1-bit compression drastically reduces the communication volume by quantizing each value in gradients with ultra-low bits (i.e., as low as 1 bit)~\cite{seide20141,bernstein2018signsgd}; and local steps alternatively saves the bandwidth by periodically skipping communication rounds\citep{stich2018local}.
While these techniques demonstrate tremendous success on distributed SGD, their benefits over large-scale Adam-based model training, such as for BERT and GPT pre-training, remains an open question \citep{kingma2014adam,wang2019signadam++}.
Comparing to SGD where the model parameters are linearly dependent on the gradients, the non-linearity in Adam updates \citep{kingma2014adam} limits the direct usage of compression or local steps.
In particular, this non-linearity incurs two challenges: 1) when aggressively compressing the gradient such as with 1-bit quantizer, all the coordinate-wise effect learning rate will become the same value, so that Adam no longer enjoys adaptive and fast convergence; 2) to ensure all parallel workers reach consensus on the optimizer states, which is critical for convergence, the existence of non-linearity incurs the overhead of iteratively synchronizing the states when using local steps.

\citet{tang20211} undertook the first investigation of fixing this non-linearity towards compression and proposed \textbf{1-bit Adam}. The algorithm follows a two-stage training paradigm: first run Adam with full-precision communication (\emph{full-precision stage}\footnote{In the original 1-bit Adam paper, this stage is referred to as warmup stage. We use a slightly different term to avoid confusion with learning rate warmup.}); and then switch to 1 bit when the variance becomes stable (\emph{compression stage}).
While this paradigm avoids compressing non-linear information with a one-time frozen variance,
the experimental results from \citep{tang20211} indicate the full-precision stage still incurs non-trivial overhead. 
Furthermore, 1-bit Adam is restricted in the scope of gradient compression, and cannot be trivially adapted when other techniques are used, such as local steps. Besides, the empirical success of \citep{tang20211} was not substantiated on generative models (GPT-style models), for instance, 175B GPT-3~\cite{gpt-3}, 530B MT-NLG~\cite{mt-nlg}, etc.

In this paper, we address this gap by proposing {\myalgo}. {\myalgo} breaks the barrier of non-linearity from two aspects: first it adaptively freezes variance, so that given agreement on a stale variance state, the parallel workers only need to communicate momentum that \emph{is} linearly dependent on the model update; This technique allows reducing the previous two-stage compression scheme to a unified single stage;  2) it leverages the insight that in adjacent Adam steps, the changes to optimizer states are generally bounded, so that with frozen variance, parallel workers can linearly approximate momentum and parameter updates locally without additional synchronization. This further pushes the limit of communication reduction towards its extreme, achieving the state-of-the-art speed up for large-scale model training.   
To summarize, our contributions are as follows:
\begin{itemize}[nosep,leftmargin=12pt]
    \item We propose {\myalgo}, a novel optimization method that addresses the non-linearity challenges in Adam when applying aggressive 1-bit quantization and local steps (Section~\ref{sec:algorithm}).
    \item We provide convergence guarantee of {\myalgo} on smooth and non-convex objectives (Section~\ref{sec:theory}).
    \item We conduct experiments on a wide range of large-scale model training tasks, including BERT-Base, BERT-Large, GPT-2 pre-training and ImageNet. We demonstrate on up to 128 GPUs that {\myalgo} is able to reduce up to 87\% of data volume, 54\% of communication rounds, and achieve up to 2$\times$ higher throughput and training time reduction compared to the state-of-the-art 1-bit Adam without compromising end-to-end model accuracy (Section~\ref{sec:experiment}). 
    \item The 0/1 Adam optimizer and corresponding experimental scripts (e.g. BERT pre-training and GLUE finetuning) have been open sourced in a deep learning optimization library called DeepSpeed\footnote{https://github.com/microsoft/DeepSpeed}.
\end{itemize}
\section{Related Work}
\label{sec:related work}
\textbf{Communication-efficient training.}
There has been various lines of research focusing on improving communication efficiency in large-scale training, such as using asynchrony \citep{niu2011hogwild,lian2015asynchronous,xie2020zeno++}, decentralization \citep{lian2017can,lu2021optimal}, gradient quantization \citep{alistarh2017qsgd,wen2017terngrad}, gradient sparsification \citep{wangni2017gradient,wang2018atomo}, local steps \citep{stich2018local,lin2018don}, etc. 
In this paper we study the aggressive 1-bit compression, which was first introduced in \citep{seide20141} to speed up speech model training, where an algorithm called 1-bit SGD is proposed. After that, \citet{wen2017terngrad} proposes adding 0 as an additional numerical level and \citet{liu2018signsgd} discusses the use of zero-th order oracle in 1-bit SGD. \citet{chen2019distributed,balles2018dissecting,xu2019signprox} study the correlation and combination between 1-bit SGD and other techniques. Convergence analysis on 1-bit SGD is given in \citep{bernstein2018signsgd,karimireddy2019error,safaryan2021stochastic}. 
\citet{bernstein2018signsgd2,sohn2019election,le2020distributed,lyu2021dp} investigate the robustness of 1-bit SGD.
Among all the variants of 1-bit communication, the design with error feedback mechanism has shown to work best both empirically \citep{seide20141} and theoretically \citep{karimireddy2019error}.
Other lines of research applies 1-bit communication to various scenarios such as federated learning \citep{jin2020stochastic,yue2021federated}, decentralized learning \citep{lu2020moniqua,koloskova2019decentralized}, meta learning \citep{fan2021sign}, etc. Perhaps the closest works to this paper are \citep{tang20211,li20211}, which propose using two-stage training to enable 1-bit Adam and 1-bit Lamb, respectively. Different from those two work, 0/1 Adam addresses non-linearity challenges in adaptive optimizers by considering both extreme quantization and local steps. Furthermore, we also study how to apply extreme communication compression on GPT-style models, which to the best our knowledge is still under-explored.  

\textbf{Adaptive learning rate optimizers.}
One of the most popular adaptive optimizers is Adam, which was first introduced in \citep{kingma2014adam}. It uses both first and second moment information of stochastic gradient to perform optimizer steps and has shown significant benefits on training deep learning models. \citet{reddi2019convergence} spots the issue of Adam convergence and provides a variant called AMSGrad while \citet{zaheer2018adaptive} argues the Adam only converges with large batch sizes.
Multiple lines of theoretical study on Adam are given in \citep{fang2019convergence,alacaoglu2020new,defossez2020simple}.
Additionally,
\citet{chen2018convergence,zhou2018convergence,lu2020mixml,danilova2020recent,zou2019sufficient} provide more general analysis on Adam-type optimizers.
Subsequently, other variants of Adam are proposed in \citep{luo2019adaptive,chen2019zo,huang2018nostalgic,wang2019sadam, zhou2018adashift, zhuang2021momentum,zhuang2020adabelief}. 
Unlike these methods, which focus on improving the convergence of generic optimizations for DNN models, our work studies how to maximize the communication efficiency of Adam in large-scale distributed training settings. 

\section{A Closer Look at Non-linearity in Adam}
\label{sec:motivation}
\begin{figure*}[t!]
  \centering
  \subfigure[$\|\*v_t-\*v_{t-1}\|$]{
  \includegraphics[width=0.23\textwidth]{./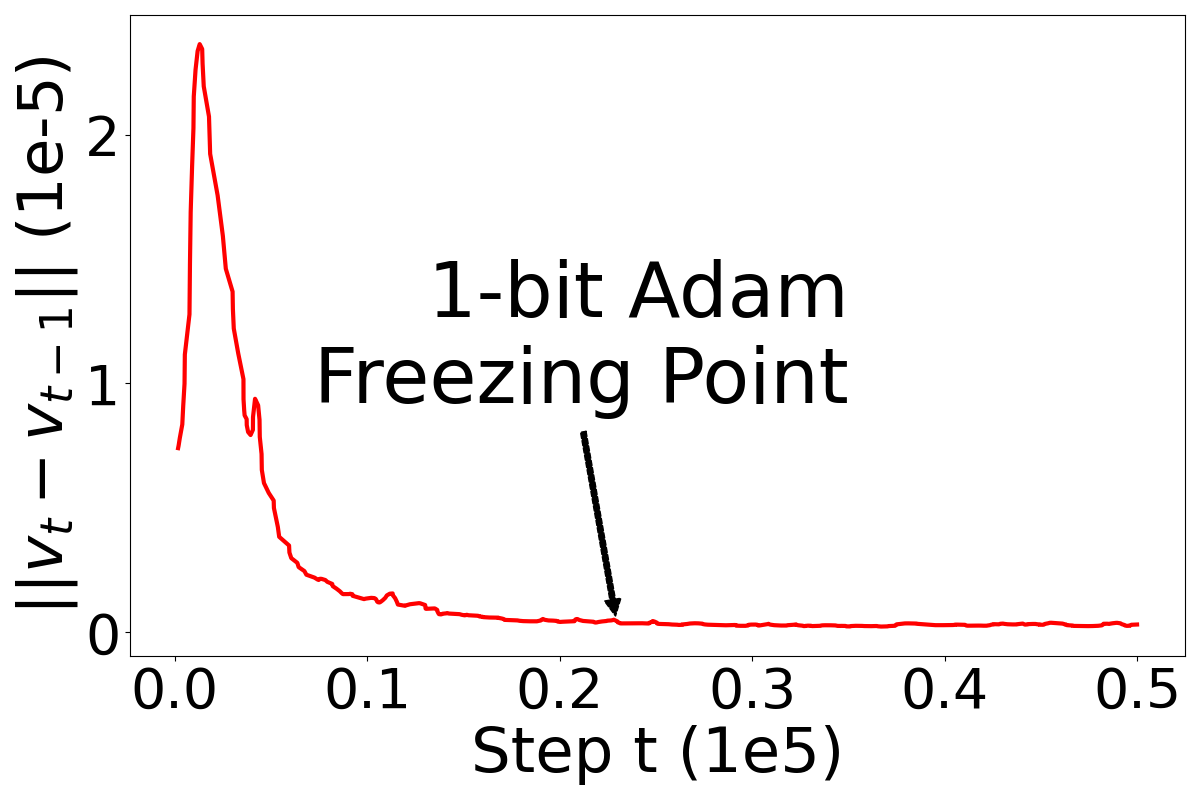}\label{fig:profile_bert_large:var_diff_time}}
  \subfigure[$\|\*v_t^{(0)}-\*v_t\|$]{
  \includegraphics[width=0.23\textwidth]{./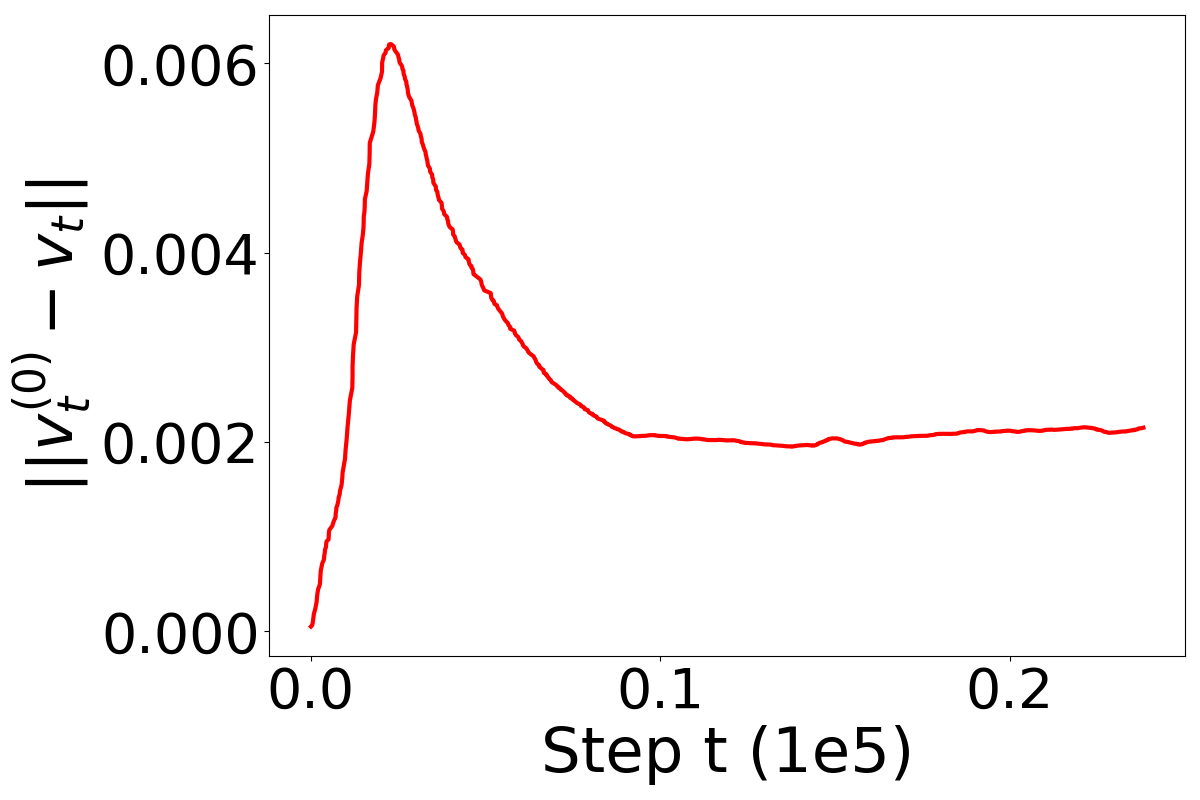}\label{fig:profile_bert_large:var_diff_local}}
  \subfigure[$\|\*m_t-\*m_{t-1}\|$]{
  \includegraphics[width=0.23\textwidth]{./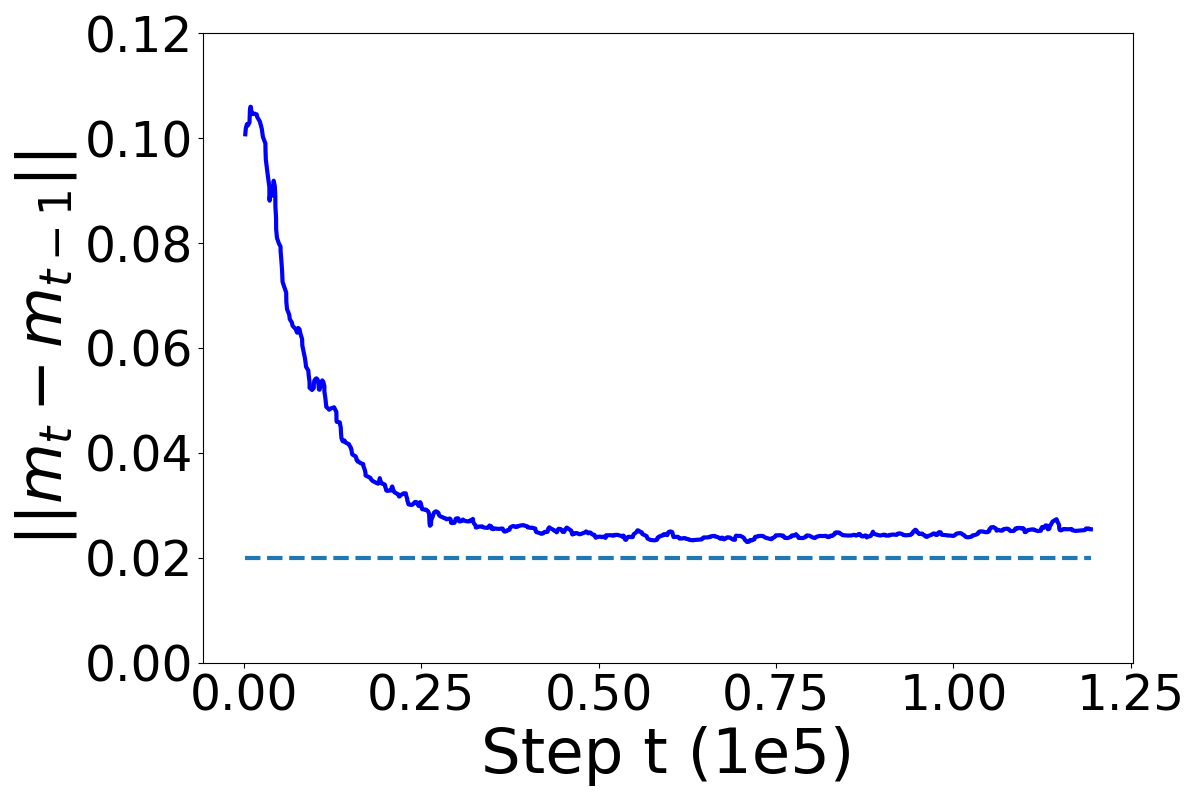}\label{fig:profile_bert_large:mom_diff_time}}
  \subfigure[$\|\*m_t^{(0)}-\*m_t\|$]{
  \includegraphics[width=0.23\textwidth]{./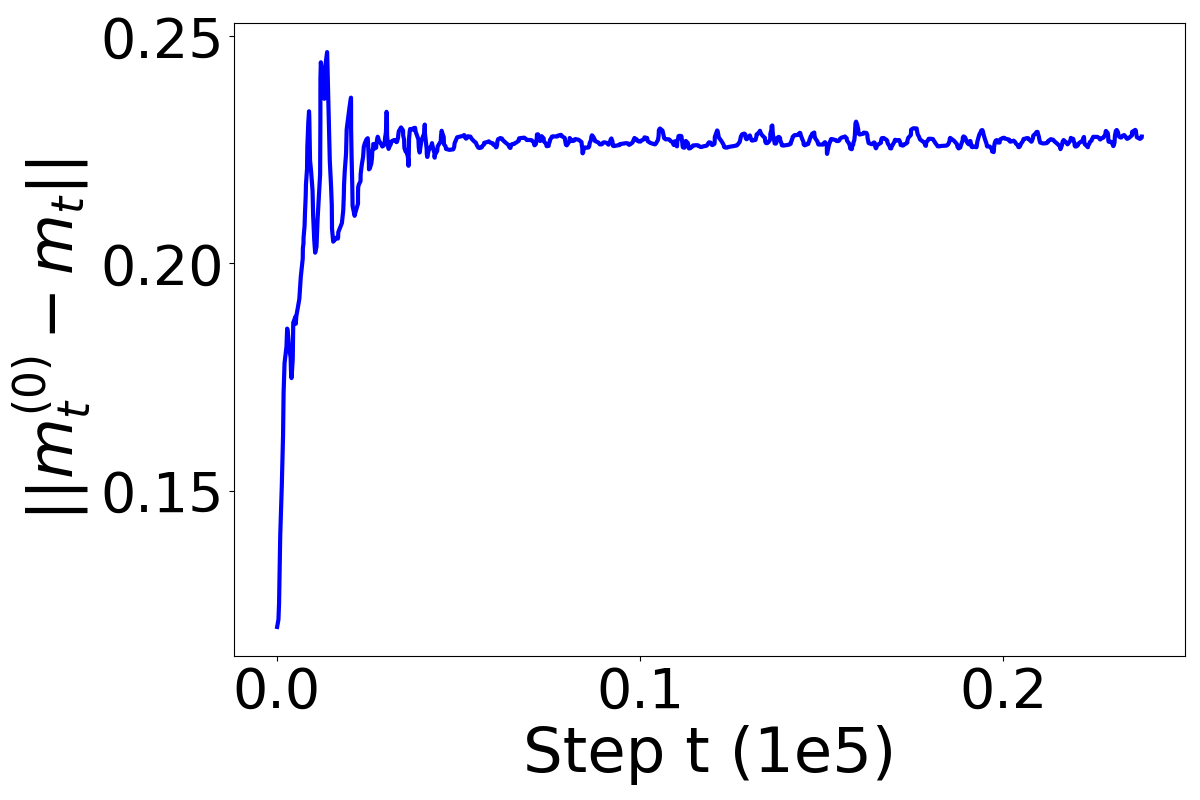}\label{fig:profile_bert_large:mom_diff_local}}
  \caption{Momentum and variance Profiling for BERT-Large sequence 128 pretraining with original Adam using 64 GPUs. For variance, we profile two types of metrics: the first is the difference between local and global variance: $\|\*v_t^{(0)}-\*v_t\|$, where $\*v_t^{(0)}$ and $\*v_t$ denotes the variance term computed via local gradient on worker-0 and the gradient from full-precision AllReduce, respectively. We also profile the variance difference in adjacent step $\|\*v_t-\*v_{t-1}\|$. Similarly, we profile the same two metrics for the momentum.}
  \label{fig:profile_bert_large}
\end{figure*}

In this section, we provide a more formal description on the problem setting and illustrate the limitations from the original Adam and the state-of-the-art 1-bit Adam \citep{tang20211}.

\textbf{Problem Formulation.}
In this paper, we consider the following optimization problem:
\begin{align}
    \min_{\*x\in\mathbb{R}^d}f(\*x) = \mathbb{E}_{\zeta\sim\mathcal{D}}f(\*x;\zeta).
\end{align}
where $\*x$ denotes the $d$-dimensional model. $\mathcal{D}$ denotes the training set and $f(\*x;\zeta)$ is the loss incurred over sample $\zeta$ given model parameters $\*x$.
The structure of the problem naturally captures many of the model training problems.

\textbf{The non-linearity in Adam.}
At step $t\geq 0$, denote $\*x_t$ and $\*g_t$ as the model parameters and stochastic gradient computed at step $t$, respectively. The update formula of SGD and Adam\footnote{Note that in Adam, operations like division should act element-wise.} can be summarized as:
\begin{align}
\label{equa:SGD_update}
\text{SGD update: } & \*x_{t+1} \leftarrow \*x_t - \gamma \*g_t. \\
\text{Adam update: } & \*m_{t+1} \leftarrow \beta_1\*m_{t} + (1-\beta_1)\*g_t, \nonumber \\
& \*v_{t+1} \leftarrow \beta_2\*v_{t} + (1-\beta_2)(\*g_t)^2, \nonumber \\
    \label{equa:Adam_update}
    & \*x_{t+1} \leftarrow \*x_t - \underbrace{\frac{\gamma}{\sqrt{\*v_t + \epsilon}}}_{\text{effective learning rate}} \cdot \*m_t,
\end{align}
where $\gamma$ is the learning rate, $\epsilon$ is a small constant to prevent zero division, $\beta_1$ and $\beta_2$ are tunable decaying factors.
The linearity in SGD update implies when using compression or local steps, the potential noise from (accumulated) gradients is in the order of $O(\gamma)$, which approaches zero when learning rate is decaying or set to be small. By comparison, the two auxiliary optimizer states in Adam, momentum ($\*m$) and variance ($\*v$), introduce non-linearity in the model update. 

Equation~(\ref{equa:Adam_update}) gives the formula of Adam when running it sequentially. In a distributed setting with $n$ workers, $\*g_t$ in Equation~(\ref{equa:Adam_update}) is often computed in parallel on different workers. Mathematically, if we denote $\*g_t^{(i)}$ as the stochastic gradient computed on the $i$-th worker at step $t$, then distributed Adam can be written as replacing $\*g_t$ with $1/n\sum_{i=1}^n{\*g_t^{(i)}}$ in Equation~(\ref{equa:Adam_update}) as follows:
\begin{align*}
\*m_{t+1} \leftarrow \beta_1\*m_{t} + (1-\beta_1)\left(1/n\sum\nolimits_{i=1}^n{\*g_t^{(i)}}\right), \hspace{1mm} \*v_{t+1} \leftarrow \beta_2\*v_{t} + (1-\beta_2)\left(1/n\sum\nolimits_{i=1}^n{\*g_t^{(i)}}\right)^2.
\end{align*}

\textbf{Issue with non-linearity on 1-bit  compression.}
The main bottleneck in running distributed Adam is the accumulation of $1/n\sum\nolimits_{i=1}^{n}\*g_t^{(i)}$ since the gradients are usually high-dimensional. 
Based on the profiling results from \citep{tang20211,li20211}, the communication of gradients could take up to 94\% of the total training time on modern clusters.
Gradient compression mitigates this issue by sending and averaging gradients with fewer bits. However, in Adam this causes the loss on the learning rate adaptivity. Consider using the aggressive
1-bit compression \citep{liu2018signsgd}, which sends each gradient with only signs and a shared, usually the average over all the coordinates, magnitude. More specifically, denote $\mathcal{C}[\cdot]$ as the 1-bit compression, then
\begin{align}
\label{equa:1bitcompression}
    \mathcal{C}[\*a] = \frac{\|\*a\|_1}{d} \cdot \text{sign}(\*a), \forall \*a\in\mathbb{R}^d.
\end{align}
It is straightforward to observe that naively applying 1 bit to compress gradients in the original Adam loses coordinate-wise adaptivity since sharing magnitude makes all the coordinates-wise learning rate $\gamma/\sqrt{\*v_t+\epsilon}$ the same value. This makes Adam no difference than momentum SGD.

\textbf{Issue with non-linearity on local steps.}
In SGD (Equation~(\ref{equa:SGD_update})), the model updates are linearly dependent on the gradients and has zero additional states. It implies with local steps, the parallel workers can entirely reach consensus after a single round of synchronization, even with compression \citep{basu2020qsparse}. However, in Adam simply synchronizing the model can still leave the momentum and variance out-of-sync. This makes parallel workers fail to capture the global adaptivity when the system scales up.
To give a more concrete example, we profile a full run of BERT-Large pre-training with original Adam, and summarize different metrics of momentum and variance in Figure~\ref{fig:profile_bert_large}. As shown in Figure~\ref{fig:profile_bert_large:mom_diff_local} and \ref{fig:profile_bert_large:var_diff_local}, the difference between local and global optimizer states, momentum and variance, remain constants and do not decrease to zero.

\textbf{1-bit Adam and its limitations.}
1-bit Adam \citep{tang20211} is a state-of-the-art solution that addresses non-linearity in 1-bit compression. 1-bit Adam adopts a pre-conditioned variance state from running original Adam for $T_0$ steps first. 
The intuition there is that at later stage of training, the variance state becomes stable so that $\*v_{T_0}$ can be a good approximation of variance state for the remaining steps. 
As paritally illustrated in Section~\ref{sec:intro}, the full-precision stage of 1-bit Adam still presents non-trivial overhead. For instance: as illustrated in \citep{tang20211}, when training BERT-Large on 64 GPUs using Ethernet, while the full-precision stage contains 15\% of the total steps, it can take more than 50\% of the entire training in terms of the wall-clock time\footnote{Concretely, it shows in \citep{tang20211} Section 7.1 that to train BERT-Large on 64 GPUs using Ethernet, the full-precision Adam takes 174.3 hours in total while 1-bit Adam takes 51.5 hours. By a simple calculation, we know that full-precision stage of 1-bit Adam takes approximately 26.37 hours while the compression stage takes 25.13 hours.}.
Additionally, 1-bit Adam is restricted in the scope of compression, how it handles other techniques such as local steps remains open question.

\begin{algorithm}[t!]
\small
	\caption{Proposed {\myalgo} Algorithm}\label{algo:localstep01}
	\begin{algorithmic}[1]
		\Require local model on the $i$-th node $\*x^{(i)}_{0}$, learning rate $\{\gamma_t\}_{t=1}^{T}$, $\*m_0=\*0$, $\*v_0=\*0$, auxiliary buffer $\*u_0=\*0$, total number of iterations $T$, decaying factor $\beta_1$, $\beta_2$ from Adam, numerical constant $\epsilon$, variance update step index set $\mathcal{T}_{\*v}$, synchronization step index set $\mathcal{T}_{\*u}$, the most recent step with synchronization $t'=0$.
		\For{$t=0, \cdots, T-1$}
		    \State Compute local stochastic gradient $\*g^{(i)}_t$.
		    \State Update momentum: $\*m^{(i)}_{t+\frac{1}{2}} = \beta_1\*m^{(i)}_{t} + (1-\beta_1){\*g}^{(i)}_t$.
		    \State Update model: $\*x^{(i)}_{t+\frac{1}{2}} = \*x^{(i)}_{t} - \gamma_t{\*m}^{(i)}_t/\sqrt{\*v_t+\epsilon}$.
		    \State Update buffer: $\*u^{(i)}_{t+\frac{1}{2}} = \*u^{(i)}_{t} + \gamma_t{\*m}^{(i)}_t$.
		    \If{$t\in\mathcal{T}_{\*u}$}
    		    \State Perform 1-bit AllReduce: $\overline{\*u}_{t+\frac{1}{2}}$ = \textbf{1bit-AllReduce} $\left(\*u_{t+\frac{1}{2}}^{(i)}\right)$.
    		    \State Approximate momentum with compressed buffer: $\*m^{(i)}_{t+1} = \overline{\*u}_{t+\frac{1}{2}}/\sum_{h=t'}^{t}\gamma_h$.
    		    \State Update model with compressed buffer: $\*x^{(i)}_{t+1} = \*x^{(i)}_{t'} - \overline{\*u}_{t+\frac{1}{2}}/\sqrt{\*v_t+\epsilon}$.
    		    \State Reset the auxiliary buffer: $\*u^{(i)}_{t+1} = \*0$.
    		    \State Update the synchronization step: $t'=t$.
    		\Else
    		    \State  $\*x^{(i)}_{t+1}=\*x^{(i)}_{t+\frac{1}{2}}$; $\*m^{(i)}_{t+1}=\*m^{(i)}_{t+\frac{1}{2}}$; $\*u^{(i)}_{t+1}=\*u^{(i)}_{t+\frac{1}{2}}$.
    	    \EndIf
    	    \If{$t\in\mathcal{T}_{\*v}$}
		        \State Perform full-precision AllReduce: $\overline{\*g}_t$ = \textbf{AllReduce} $\left(\*g_t^{(i)}\right)$.
		        \State Update the variance: $\*v_{t+1} = \beta_2\*v_{t} + (1-\beta_2)(\overline{\*g}_t)^2$.
		    \Else
		        \State Use the stale variance for the next iteration: $\*v_{t+1} = \*v_{t}$.
		    \EndIf
		\EndFor
		\State \textbf{return} $\*x_T$.
	\end{algorithmic}
\end{algorithm}

\section{\myalgo}
\label{sec:algorithm}
In this section, we give the full description of {\myalgo}. To maximize the communication efficiency, ideally we want an algorithm that enables adaptive convergence like Adam, while allowing aggressive compression (e.g. 1 bit) and requires no additional synchronization on the optimizer states when using local steps. {\myalgo} solves this problem from two aspects.

\textbf{Adaptive Variance Freezing.}
To begin with,
{\myalgo} creates a linear environment that freezes the variance adaptively. The intuition is leveraged from the observation in Figure~\ref{fig:profile_bert_large:var_diff_time}: the change of variance over steps in Adam is generally smooth. While 1-bit Adam captures a reasonable variance estimate via one-time freezing, it is reasonable to also presume that before its freezing point, the variance within several adjacent steps will stay close due to its smoothness.
This motivates us to extend the one-time freezing policy in 1-bit Adam into an adaptive one, by letting workers agree upon the freezing points from a given step index set $\mathcal{T}_{\*v}\subseteq\{0, \cdots, T-1\}$.
The frozen variance creates multiple intervals over training, during which the workers have agreement on the denominator (Equation~(\ref{equa:Adam_update})) and the only uncertainty is then left in the nominator that is linearly dependent on the model update, just like SGD.



\textbf{Including 1-bit Compression and Local Steps.}
With frozen variance, we make another observation based on Equation~(\ref{equa:Adam_update}) that the model difference on workers will be linearly dependent to the momentum. So that, the momentum can be approximated locally rather than synchronized additionally based on the communicated model difference, given the premise that the change of momentum is not abrupt within close steps. Formally, denote $\*x_t^{(i)}$, $\*m_t^{(i)}$, $\*v_t^{(i)}$ as the model, momentum, variance on worker $i$ at step $t$, respectively. Suppose all the workers are synchronized at step $t'$, then with frozen variance $\*v$ over all the workers,
\begin{align*}
\*u_t^{(i)} &= \sum\nolimits_{k=t'}^{t}\gamma_k\*m_k^{(i)} && \text{Actual sent tensors in the communication.} \\
\*x_{t+1}^{(i)} &= \*x_{t'}^{(i)} - \frac{1/n\sum\nolimits_{i=1}^{n}\*u_t^{(i)}}{\sqrt{\*v+\epsilon}} && \text{Sync model parameters with the sent tensors.} \\
\*m_{t+1}^{(i)} &\approx \frac{1/n\sum\nolimits_{i=1}^{n}\*u_t^{(i)}}{\sum\nolimits_{k=t'}^{t}\gamma_k} && \text{Approximate momentum via linear estimates via sent tensor.}
\end{align*}
where we omit the compression part for brevity.
Combined with compression, we provide the full description of {\myalgo}\footnote{The name comes from the fact that the algorithm can potentially reduce the per-parameter volume to some number between 0 and 1 bit on average.} in Algorithm~\ref{algo:localstep01}.
Note that here we defer the details of 1-bit compression to Appendix~\ref{appendix:sec:algorithm} and treat it as a black-box procedure named \textbf{1bit-AllReduce} while the original full-precision AllReduce is referred to as \textbf{AllReduce}. 

We also remark that although both techniques appear to be natural, to the best of our knowledge,  we are the first to apply them to addressing the non-linearity challenges in 1-bit compression and local steps for maximizing the communication efficiency of Adam optimizer.


\section{Convergence Analysis}
\label{sec:theory}
In this section, we provide the convergence guarantee for 
{\myalgo} (Algorithm~\ref{algo:localstep01}) under arbitrary freezing policy $\mathcal{T}_{\*v}$ and local steps policy $\mathcal{T}_{\*u}$. In the main paper, we provides the convergence rate in the general case. However, different $\mathcal{T}_{\*v}$ or $\mathcal{T}_{\*u}$ gives us the opportunity to obtain tighter bounds. We leave these discussion in the appendix.
We start by making the following assumptions.

\begin{assumption}
\label{assum:smooth}
\textbf{Lipschitzian gradient:} $f(\cdot)$ is assumed to be  with $L$-Lipschitzian gradients, which means $\quad \|\nabla f(\*{x}) - \nabla f(\*{y}) \| \leq L \|\*{x} - \*{y} \|, \forall \*{x},\forall \*{y}$.
\end{assumption}

\begin{assumption}
\label{assume:variance}
\textbf{Bounded variance:}
The stochastic gradient computed on each worker is unbiased and has bounded variance:$\mathbb E_{\zeta\sim\mathcal{D}}\|\nabla f(\*{x};\zeta) - \nabla f(\*{x})\|^2 \leq \sigma^2,\quad\forall \*{x}$.
\end{assumption}

\begin{assumption}
\label{assume:g_bound}
\textbf{Bounded gradient:}
    The infinity norm of stochastic gradient is bounded by a constant $G_\infty>0$ such that $\left\| \*g_t\right\|_\infty \leq G_\infty, \forall t$.
\end{assumption}

\begin{assumption}
\label{assume:local:compression}
\textbf{Compression error in Algorithm~\ref{algo:localstep01}:} For arbitrary $\*x\in\mathbb{R}^d$, there exists a constant $\Delta$, such that the output of compressor $\mathcal{C}[\cdot]$ has the following error bound: $\mathbb{E}\left\| \mathcal{C}[\*x] - \*x \right\|^2 \leq \Delta^2$.
\end{assumption}

\begin{assumption}
\label{assume:localstep_bound}
Given ordered set $\mathcal{T}_{\*u}$, denote $t_j$ as the $j$-th element in $\mathcal{T}_{\*u}$, we assume there exists a constant $H\geq 0$, it holds that $\max_{1\leq j<|\mathcal{T}_{\*u}|}(t_{j+1} - t_j)\leq H$.
\end{assumption}

\textbf{Remarks on the assumptions.}
Assumption~\ref{assum:smooth}, \ref{assume:variance} and \ref{assume:g_bound} are standard in the domain of non-convex optimization.
Comparing with the 1-bit Adam paper \citep{tang20211}, we do not explicitly assume the uniform lower bound on the variance coordinate, i.e., $\*e_j^\top\*v>v_{\min}>0, \forall j$ for some constant $v_{\min}$. Instead we assume an infinity-norm bound on the gradient as in Assumption~\ref{assume:g_bound} which is more realistic. Assumption~\ref{assume:local:compression} is also assumed in \citep{tang20211}, in the appendix we discuss the variant of {\myalgo} that converges with weaker condition on the $\mathcal{C}$.

The convergence for Algorithm~\ref{algo:localstep01} is then given in the follow theorem.
\begin{restatable}{thm}{thmzerooneadam}
\label{thm:local01}
Under Assumption~\ref{assum:smooth} to \ref{assume:localstep_bound}, let $m=|\mathcal{T}_{\*v}|$, select $\beta_1,\beta_2\in[0,1)$ that fulfills $m\leq \log(1-\beta_1)/\log(\beta_2)$,
if we run Algorithm~\ref{algo:localstep01} with a constant learning rate: for all $t\geq 0$
\begin{align*}
    \gamma_t = \min\left\{ \sqrt{\frac{n}{\sigma^2T}}, \frac{1}{4L\sqrt{G_\infty^2+\epsilon}},  \frac{2\sqrt{G_\infty^2+\epsilon}}{L},  \frac{1}{6} \right\},
\end{align*}
then it holds that
\begin{align*}
    \frac{1}{T}\sum_{t=0}^{T-1} & \mathbb{E}\|\nabla f(\tilde{\*x}_t)\|^2 \leq O\left( \frac{\sigma}{\sqrt{nT}} + \frac{H^2\Delta^2(m+n)}{T} +\frac{1}{T}\right),
\end{align*}
where $\tilde{\*x}_t=1/n\sum_{i=1}^{n}{\*x}_t^{(i)}$ and we omit $f(\*0)-\inf_{\*x\in\mathbb{R}^d}f(\*x)$, $G_\infty$, $d$, $\epsilon$, $\beta_1$, $\beta_2$ and $L$ as constants.
\end{restatable}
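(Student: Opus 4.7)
The plan is to mimic the standard descent-lemma analysis for distributed adaptive optimizers, extending the argument from the 1-bit Adam paper to cope simultaneously with the variance-freezing schedule $\mathcal{T}_{\*v}$ and the local-step schedule $\mathcal{T}_{\*u}$. I would begin by introducing the virtual averaged iterate $\tilde{\*x}_t = (1/n)\sum_{i=1}^n \*x_t^{(i)}$ and inspecting Algorithm~\ref{algo:localstep01} to verify that, whether $t\in\mathcal{T}_{\*u}$ or not, the average obeys $\tilde{\*x}_{t+1} - \tilde{\*x}_t = -\gamma_t \tilde{\*m}_t/\sqrt{\*v_t+\epsilon}$ up to an error supported on sync steps that captures the 1-bit compression; here $\tilde{\*m}_t = (1/n)\sum_i \*m_t^{(i)}$. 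With this in hand, $L$-smoothness gives
\begin{align*}
\mathbb{E}[f(\tilde{\*x}_{t+1})] \leq \mathbb{E}[f(\tilde{\*x}_t)] - \gamma_t\, \mathbb{E}\bigl\langle \nabla f(\tilde{\*x}_t),\, \tilde{\*m}_t/\sqrt{\*v_t+\epsilon}\bigr\rangle + \tfrac{L}{2}\mathbb{E}\|\tilde{\*x}_{t+1}-\tilde{\*x}_t\|^2,
\end{align*}
and Assumption~\ref{assume:g_bound} provides the coordinatewise envelope $1/\sqrt{G_\infty^2+\epsilon} \leq 1/\sqrt{\*v_t+\epsilon} \leq 1/\sqrt{\epsilon}$, which is precisely where I avoid the $v_{\min}$ hypothesis of prior work.

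Next I would decompose the inner-product error into four sources: (i) the bias of using the momentum $\tilde{\*m}_t$ in place of $\nabla f(\tilde{\*x}_t)$, handled by unrolling the momentum recursion and applying Young's inequality to trade a fraction of $\|\nabla f(\tilde{\*x}_t)\|^2$ for accumulated stochastic noise that averages to $\sigma^2/n$; (ii) the consensus drift $\|\*x_t^{(i)} - \tilde{\*x}_t\|$, which is zero right after a sync and grows at rate $\gamma_t G_\infty$, hence is at most $O(\gamma_t H G_\infty)$ by Assumption~\ref{assume:localstep_bound}; (iii) the compression error, which by Assumption~\ref{assume:local:compression} contributes $O(\Delta^2)$ per element of $\mathcal{T}_{\*u}$; and (iv) the staleness of the frozen variance between consecutive freezing steps. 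For (iv), the constraint $m \leq \log(1-\beta_1)/\log(\beta_2)$ is exactly what is needed to show, via the recursion $\*v_{t+1}=\beta_2\*v_t + (1-\beta_2)\overline{\*g}_t^2$, that $|\sqrt{\*v_t+\epsilon}-\sqrt{\*v_\tau+\epsilon}|$ remains small relative to the $\beta_1^t$ decay of the momentum bias, so the effective learning rate $\gamma_t/\sqrt{\*v_t+\epsilon}$ can be treated as approximately constant across each freezing interval.

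The main obstacle will be the interaction between (ii) and (iii): between two consecutive syncs the buffer $\*u_t^{(i)}$ accumulates $\gamma_h \*m_h^{(i)}$ over up to $H$ local steps, so naively bounding the compression noise step-by-step would scale like $H\Delta^2$ per iteration and give an $H\Delta^2$ (not $H^2\Delta^2/T$) term. To get the sharper bound I plan to treat the entire block of local steps between two elements of $\mathcal{T}_{\*u}$ as a single composite update on $\tilde{\*x}$, apply $L$-smoothness over the block rather than per step, and show that the compression error enters only once per block with magnitude $\Delta^2$ while the drift contributes a self-bounding $L^2 H^2 G_\infty^2$ term that can be absorbed into the $(L/2)\|\tilde{\*x}_{t+1}-\tilde{\*x}_t\|^2$ descent slack via the bound $\gamma_t \leq 1/(4L\sqrt{G_\infty^2+\epsilon})$.

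Finally I would telescope the resulting inequality from $t=0$ to $T-1$, divide by $T$, and substitute
\begin{align*}
\gamma_t = \min\Bigl\{ \sqrt{n/(\sigma^2 T)},\ 1/(4L\sqrt{G_\infty^2+\epsilon}),\ 2\sqrt{G_\infty^2+\epsilon}/L,\ 1/6 \Bigr\}.
\end{align*}
The first entry yields the leading $\sigma/\sqrt{nT}$ term from (i); the second lets the $(L/2)\|\tilde{\*x}_{t+1}-\tilde{\*x}_t\|^2$ term absorb half the descent direction so that the inner product produces a clean $-\gamma_t\|\nabla f(\tilde{\*x}_t)\|^2/(2\sqrt{G_\infty^2+\epsilon})$; and the remaining two entries keep the higher-order drift, compression, and variance-staleness contributions confined to an $H^2\Delta^2(m+n)/T + 1/T$ residual, reproducing the claimed rate after rescaling by the uniform lower bound on $1/\sqrt{\*v_t+\epsilon}$.
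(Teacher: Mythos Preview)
Your plan diverges from the paper's argument in two structural ways, and at least the first is a genuine gap rather than a stylistic choice.

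\textbf{Missing auxiliary sequence.} The paper does not apply smoothness to $\tilde{\*x}_t$ directly. It introduces
\[
\tilde{\*y}_t \;=\; \tilde{\*x}_t \;-\; \frac{\gamma\,\tilde{\*m}_t}{(1-\beta_1)\sqrt{\*v_t+\epsilon}} \;-\; \frac{\gamma\,\*\delta_t}{\sqrt{\*v_t+\epsilon}},
\]
where $\*\delta_t$ is the aggregated error-feedback residual from Algorithm~\ref{algo:EF}. On every step with $\*v_{t+1}=\*v_t$ this sequence satisfies $\tilde{\*y}_{t+1}-\tilde{\*y}_t = -\gamma\,\tilde{\*g}_t/\sqrt{\*v_t+\epsilon} + \*q_t$, with $\*q_t=0$ except on sync steps. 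The device simultaneously removes the momentum bias (your source~(i)) and telescopes the compression error (your source~(iii)); there is no need to unroll the momentum recursion. Your ``unroll and Young'' plan for~(i) is fragile here because the momentum in Algorithm~\ref{algo:localstep01} is \emph{not} the usual recursion: line~8 overwrites $\*m^{(i)}_{t+1}$ with a compressed block-average at every $t\in\mathcal{T}_{\*u}$, so there is no geometric sum to unroll cleanly. Likewise, your block-level smoothness idea would only produce bounds on $\|\nabla f(\tilde{\*x}_t)\|^2$ at the block endpoints in $\mathcal{T}_{\*u}$, not at every $t$ as the theorem requires.

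\textbf{Variance-update steps are handled by a separate crude bound, not a staleness estimate.} Within a freezing interval the variance is \emph{exactly} constant, so there is no source~(iv) on those steps. The only steps where $\*v_{t+1}\neq\*v_t$ are the $m=|\mathcal{T}_{\*v}|$ steps in $\mathcal{T}_{\*v}$; the paper treats each of them with a one-line crude estimate of the form $\mathbb{E}[f(\tilde{\*y}_{t+1})-f(\tilde{\*y}_t)]\le O(\gamma)$ and then adds $\tfrac{\gamma}{4\sqrt{G_\infty^2+\epsilon}}\|\nabla f(\tilde{\*x}_t)\|^2$ to both sides, which is precisely where the $m/T$ contribution comes from. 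The condition $m\le\log(1-\beta_1)/\log\beta_2$, i.e.\ $\beta_2^m\ge 1-\beta_1$, is not used to compare $\sqrt{\*v_t+\epsilon}$ across intervals as you suggest; it is invoked only at the very end to replace the factor $\beta_2^{-m}$ (arising from the lower bound $\*v_t\ge\beta_2^m\*v_1$) by $(1-\beta_1)^{-1}$, so that $\beta_2^m$ can be absorbed into the hidden constants.
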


Theorem~\ref{thm:local01} shows that {\myalgo} Algorithm~\ref{algo:localstep01} essentially admits
the same convergence rate as distributed SGD in the sense that it achieves linear speed up, at rate $1/O(\sqrt{nT})$. The effect of compression ($\Delta$) and local steps ($H$) only appears on a non-dominating term.
\section{Profiling Results for Fixed Cost of Communication}
\label{appendix:sec:experiment}
\begin{table}[ht!]
  \caption{Profiling on Ethernet cluster the time taken in computation and others (including initialization of a communication round and compression) during one 1-bit AllReduce round at different scales.}
  \centering
  \begin{tabular}{l|llll}
  \hline
  \textbf{ImageNet} & 4 node (16 GPUs) & 8 node (32 GPUs) & 16 node (64 GPUs) & 32 node (128 GPUs) \\
  Computation      & 73ms & 68ms & 44ms & 51ms \\
  Others      & 8ms & 6ms & 21ms & 19ms \\
  \hline
  \textbf{BERT-Base} & 4 node (16 GPUs) & 8 node (32 GPUs) & 16 node (64 GPUs) & 32 node (128 GPUs) \\
  Computation      & 941ms & 490ms & 263ms & 162ms \\
  Others      & 153ms & 250ms & 397ms & 658ms \\
  \hline
  \textbf{BERT-Large} & 4 node (16 GPUs) & 8 node (32 GPUs) & 16 node (64 GPUs) & 32 node (128 GPUs) \\
  Computation      & 1840ms & 970ms & 640ms & 332ms \\
  Others      & 340ms & 510ms & 590ms & 931ms \\
  \hline
  \end{tabular}
\end{table}
\section{Conclusion}
In this paper, we study the challenges of using 1-bit communication on Adam, and limitations of the state-of-the-art 1-bit Adam algorithm. We propose an algorithm named {\myalgo} that adopts two novel design: adaptive variance state freezing and 1-bit sync. We provide convergence proof for {\myalgo} and measure its effectiveness over baseline Adam and 1-bit Adam on various benchmarks, including BERT-Base/Large, GPT-2 pretraining and ImageNet.

\bibliography{main}
\newpage
\appendix
\onecolumn

\section{Additional Experimental Details}
\label{appendix:sec:exp}
\textbf{Training Parameters.}
For BERT pretraining, we follow the settings from \citep{devlin2018bert} and let the learning rate linearly increases to $4\times 10^{-4}$ as a warmup in the first 12.5K steps, then decays into 0.99 of the original after
every 520 steps. We set $\beta_1=0.9$ and $\beta_2=0.999$ for all the algorithms. We adopt the batch size of 4096. For 1-bit Adam, we follow the hyperparameters given in \citep{tang20211} and set the full-precision stage for 1-bit Adam as 16K and 23K on BERT-Base and BERT-Large, respectively. 
All the hyperparameters used here (e.g. learning rate) strictly follow \citep{tang20211} for fair comparison.
For ImageNet, we follow the example script from Pytorch\footnote{\url{https://github.com/pytorch/examples/blob/master/imagenet/main.py}} and use batch size of 256 and a milestone decay learning rate schedule: starting at 1e-4 and decay by a factor of 10 at epoch 30 and 60, with 90 epochs in total. We set 10 epochs (50050 steps) as the full-precision stage for 1-bit Adam.
For GPT-2 we set batch size to be 512, and use 300K training steps (158B tokens). The learning rate schedule follows a linear warmup of 3K steps and a single cycle consine decay over the remaining 297K steps ($1\times 10^{-5}\min$). For 1-bit Adam, we set its full-precision stage length to be 80K steps, and for the {\myalgo}, we follow the same learning rate based policy from BERT on $\mathcal{T}_{\*v}$ and $\mathcal{T}_{\*u}$. 

For GLUE benchmarks we use Adam optimizer and perform single-task training on the dev set. Following the setup in the BERT paper~\citep{devlin2018bert} and 1-bit Adam paper \citep{tang20211}, we search over the hyperparameter space with batch sizes $\in\{8,16,32\}$ and learning rates $\{1\times 10^{-5},3\times 10^{-5},5\times 10^{-5},8\times 10^{-5}\}$.

The convergence plots for GPT-2 pre-training are given in Figure~\ref{exp:fig:gpt_converge}.
\begin{figure}[t!]
  \centering
  \includegraphics[width=0.49\textwidth]{./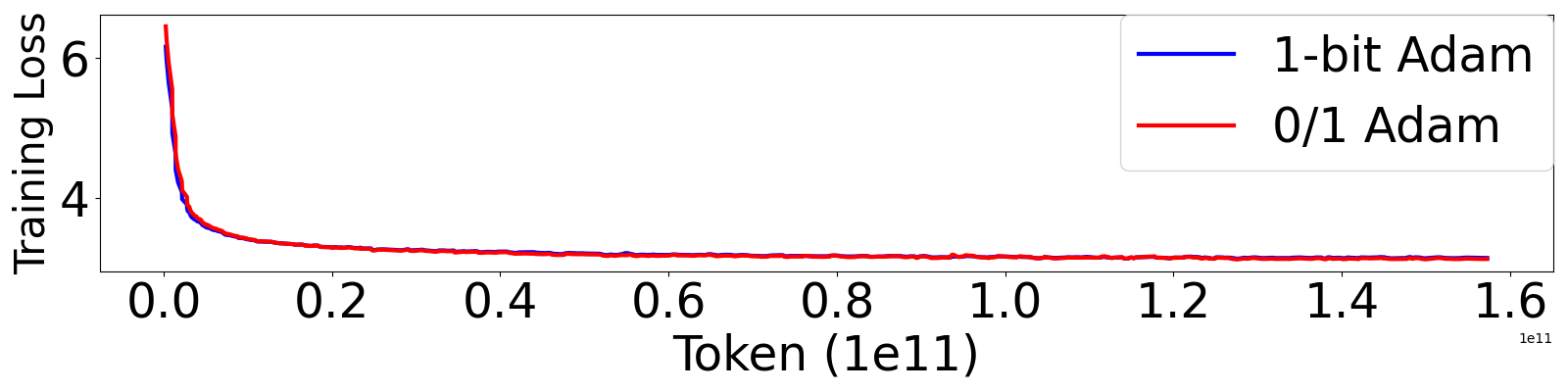}
  \includegraphics[width=0.49\textwidth]{./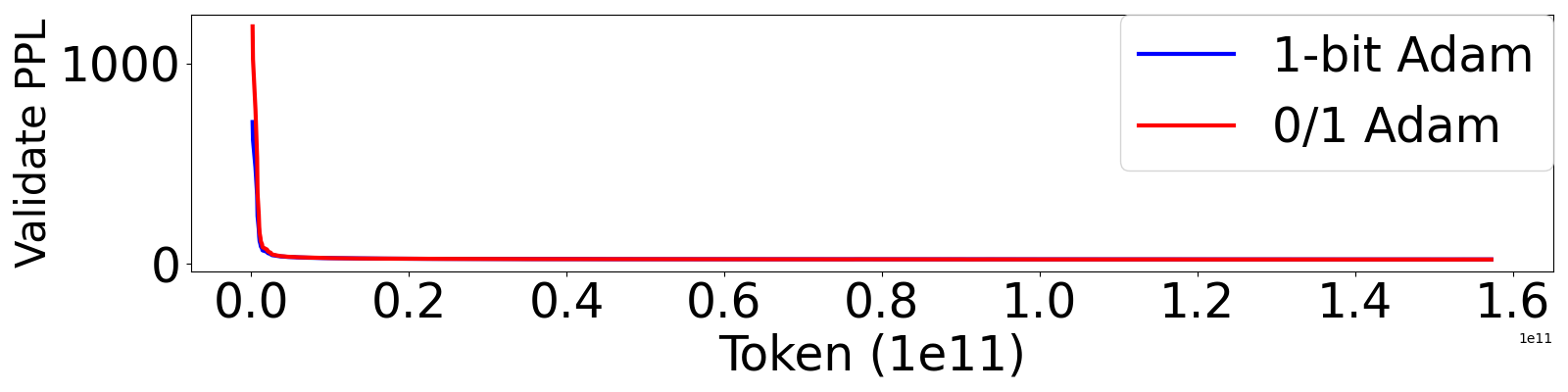}
  \caption{Training loss (left) and validation perplexity (right) with respect to Tokens for 1-bit Adam and {\myalgo}.}
  \label{exp:fig:gpt_converge}
\end{figure}


\section{Theoretical Analysis}
\label{appendix:sec:proof}
\subsection{Analysis to an intermediate version of {\myalgo}}
\begin{algorithm}[ht!]
\small
	\caption{Generic framework of applying 1-bit communication to Adam with frozen variance state. 1-bit Adam can be viewed as a special case of setting $\mathcal{T}_{\*v}=\{0,\cdots,T_0-1\}$ where $T_0$ denotes its total number of steps in the full-precision stage.}\label{algo:basic01}
	\begin{algorithmic}[1]
		\Require initialized model on worker $i$: $\*x_{0}^{(i)}$, learning rate $\{\gamma_t\}_{t=1}^{T}$, $\*m_0=\*0$, $\*v_0=\*0$, total number of iterations $T$, decaying factor $\beta_1$, $\beta_2$ from Adam, numerical constant $\epsilon$, variance update step index set $\mathcal{T}_{\*v}$.
		\For{$t=0, \cdots, T-1$}
		    \State Locally compute stochastic gradient $\*g^{(i)}_t$ over $\*x_{t}^{(i)}$.
		    \If{$t\in\mathcal{T}_{\*v}$}
		        \State $\overline{\*g}_t$ = \textbf{AllReduce} $\left(\*g_t^{(i)}\right)$.
		        \State Set $\*v_{t+1} = \beta_2\*v_{t} + (1-\beta_2)(\overline{\*g}_t)^2$.
		    \Else
		        \State $\overline{\*g}_t$ = \textbf{Compressed-AllReduce} $\left(\*g_t^{(i)}\right)$.
		        \State Set $\*v_{t+1} = \*v_{t}$.
		    \EndIf
		    \State Update momentum: $\*m_{t+1} = \beta_1\*m_{t} + (1-\beta_1)\overline{\*g}_{t}$.
		    \State Update model: $\*x_{t+1}^{(i)} = \*x_{t}^{(i)} - \gamma_t\*m_{t}/\sqrt{\*v_{t}+\epsilon}$.
		\EndFor
		\State \textbf{return} $\*x_T^{(i)}, \forall i$.
	\end{algorithmic}
\end{algorithm}
We start from a special case of {\myalgo} that compresses gradients without local steps. This is given in Algorithm~\ref{algo:basic01}.
Note that the following proof will use Algorithm~\ref{algo:EF} to replace \textbf{Compressed-AllReduce} in Algorithm~\ref{algo:basic01}, as introduced in Section~\ref{appendix:sec:algorithm}.

Algorithm~\ref{algo:basic01} allows us to work with weaker assumption as given in the following
\begin{assumption}
\label{assume:compression}
\textbf{Compression error in Algorithm~\ref{algo:basic01}:}
    For arbitrary $\*x\in\mathbb{R}^d$, there exists a constant $0\leq \omega < 1$, such that the output of compressor $\mathcal{C}[\cdot]$ has the following error bound:
    \begin{align*}
        \mathbb{E}\left\| \mathcal{C}[\*x] - \*x \right\|^2 \leq \omega \left\| \*x\right\|^2.
    \end{align*}
\end{assumption}

\begin{restatable}{thm}{thmzerooneadambasic}
\label{thm:basic01}
Under Assumption~\ref{assum:smooth}, \ref{assume:variance}, \ref{assume:g_bound}, and \ref{assume:compression}, let $m=|\mathcal{T}_{\*v}|$, and select $\beta_1, \beta_2\in[0,1)$ such that $m\leq\log(1-\beta_1)/\log(\beta_2)$.
If we run Algorithm~\ref{algo:basic01} with a constant learning rate: for all $t\geq 0$
\begin{align*}
    \gamma_t = \min\left\{ \sqrt{\frac{n}{\sigma^2T}}, \frac{1}{2L\sqrt{G_\infty^2+\epsilon}}, \frac{1}{125}  \right\},
\end{align*}
then it holds that
\begin{align*}
    \frac{1}{T}\sum_{t=0}^{T-1} \mathbb{E}\|\nabla f(\*x_t)\|^2 \leq O\left( \frac{\sigma}{\sqrt{nT}} + \frac{m+n}{(1-\omega)^4T} +\frac{1}{T}\right),
\end{align*}
where we omit $f(\*0)-\inf_{\*x\in\mathbb{R}^d}f(\*x)$,  $G_\infty$, $d$, $\epsilon$, $\beta_1$, $\beta_2$ and $L$ as constants.
\end{restatable}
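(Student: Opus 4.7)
The overall strategy is to view Algorithm~\ref{algo:basic01} as a single-sequence Adam iteration driven by a noisy gradient surrogate $\overline{\*g}_t$ returned either by \textbf{AllReduce} (on $t\in\mathcal{T}_{\*v}$) or by Algorithm~\ref{algo:EF} otherwise, and to charge the gap between $\overline{\*g}_t$ and the true gradient $\nabla f(\*x_t)$ to a Lyapunov function. Because all workers share $\*m_t$, $\*v_t$ and a common initialization in Algorithm~\ref{algo:basic01}, the iterates $\*x_t^{(i)}$ are identical across $i$, which eliminates the consensus term that would appear in a local-steps analysis and reduces the problem to a single trajectory. The entry point is the smoothness descent inequality applied to $\*x_{t+1}=\*x_t-\gamma_t\*m_t/\sqrt{\*v_t+\epsilon}$; Assumption~\ref{assume:g_bound} pins the coordinatewise preconditioner in $[1/\sqrt{G_\infty^2+\epsilon},\,1/\sqrt{\epsilon}]$, so the effective learning rate acts as a $\gamma_t$-scaled operator of bounded norm.

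First I would unroll the error-feedback gap. Under Assumption~\ref{assume:compression}, the buffer recursion $\*\delta_{t+1}=(\*z_t+\*\delta_t)-\mathcal{C}[\*z_t+\*\delta_t]$ together with Young's inequality yields the standard bound $\mathbb{E}\|\*\delta_t\|^2\leq \frac{2\omega}{(1-\omega)^2}\max_{s\leq t}\mathbb{E}\|\*z_s\|^2$, which I apply independently to the worker-side buffers $\*\delta_t^{(i)}$ and to the server-side buffer $\overline{\*\delta}_t$ inside Algorithm~\ref{algo:EF}; composing them by triangle inequality gives $\mathbb{E}\|\overline{\*g}_t-(1/n)\sum_i\*g_t^{(i)}\|^2 = O((1-\omega)^{-2})\cdot(G_\infty^2 d+\sigma^2)$, with one factor of $(1-\omega)^{-2}$ supplied by each feedback layer. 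I would then build a Lyapunov function $\Phi_t=f(\*x_t)+A\,\mathbb{E}\|\*m_t-\nabla f(\*x_t)\|^2+B\bigl(\mathbb{E}\|\*\delta_t^{(i)}\|^2+\mathbb{E}\|\overline{\*\delta}_t\|^2\bigr)$ and tune $A=O((1-\omega)^{-2})$ so that the Adam momentum-lag cross term cancels against the EF-driven noise; this second layer of $(1-\omega)^{-2}$-scaling is precisely what upgrades the compression contribution from $(1-\omega)^{-2}$ to the sharp $(1-\omega)^{-4}$ that appears in the final rate.

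Next I would handle the stale variance. At each of the $m$ indices in $\mathcal{T}_{\*v}$, $\*v_t$ can jump coordinatewise by at most $(1-\beta_2)G_\infty^2$, and between such indices it is frozen; the elementary inequality $|1/\sqrt{a+\epsilon}-1/\sqrt{b+\epsilon}|\leq|a-b|/(2\epsilon^{3/2})$ converts each jump into a preconditioner-mismatch error whose cumulative contribution telescopes to $O(m)$ over the $T$ steps, producing the additive $m/T$ piece. The hypothesis $m\leq\log(1-\beta_1)/\log\beta_2$ is equivalent to $\beta_2^m\geq 1-\beta_1$, which keeps the variance-smoothing factor under control so this contribution is additive rather than compounding in $T$. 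Telescoping $\Phi_{t+1}-\Phi_t$ over $t=0,\dots,T-1$, decomposing $\mathbb{E}\|\overline{\*g}_t-\nabla f(\*x_t)\|^2\leq \sigma^2/n+(\text{EF term})$ (the first piece from averaging $n$ independent samples), and substituting the prescribed $\gamma_t$ yields the three advertised terms: $\sigma/\sqrt{nT}$ from the stochastic variance, $n/((1-\omega)^4 T)$ from the EF contribution (the $n$ reflects the $n$-worker aggregation inside the server-side feedback), and $m/((1-\omega)^4 T)$ from the variance-freezing events.

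The hardest step I anticipate is the simultaneous tuning of $A$ and $B$ so that the cross terms produced by the coordinatewise preconditioner $1/\sqrt{\*v_t+\epsilon}$ acting on both the momentum lag $\*m_t-\nabla f(\*x_t)$ and on the two nested error-feedback buffers cancel exactly. The adaptive rescaling means the worker and server buffers effectively appear at different scales in the descent inequality, so a naive choice of constants loses an additional factor of $(1-\omega)^{-1}$ or $(1-\omega)^{-2}$; closing the bound at precisely $(1-\omega)^{-4}$ rather than a looser power is the one calculation that does not follow by direct concatenation of the standard EF-SGD and Adam arguments.
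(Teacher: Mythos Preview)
Your route differs substantially from the paper's, and the mechanism you propose for the compression term would not close as stated. The paper does not build a Lyapunov of the form $f(\*x_t)+A\|\*m_t-\nabla f(\*x_t)\|^2+B\|\*\delta\|^2$; it defines a single auxiliary iterate
\[
\*y_t \;=\; \*x_t-\frac{\gamma\*m_t}{(1-\beta_1)\sqrt{\*v_t+\epsilon}}-\frac{\gamma\*\delta_t}{\sqrt{\*v_t+\epsilon}},
\]
with $\*\delta_t$ the aggregated worker-plus-server error, and verifies that on every reuse step ($t\notin\mathcal{T}_{\*v}$, so $\*v_{t+1}=\*v_t$) the error-feedback identity $\overline{\*g}_t=\*g_t+\*\delta_t-\*\delta_{t+1}$ collapses the update to $\*y_{t+1}-\*y_t=-\gamma\*g_t/\sqrt{\*v_t+\epsilon}$ \emph{exactly}. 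Heavy-ball momentum and both EF layers telescope away simultaneously, so there is no cross-term tuning; the only residual is $\|\*y_t-\*x_t\|^2=O(\gamma^2)\bigl(\|\*m_t\|^2+\|\*\delta_t\|^2\bigr)$. The factor $(1-\omega)^{-4}$ is produced entirely by the two-layer error bound (the server compresses a vector whose squared norm is already $O((1-\omega)^{-2})$, so its own error picks up a second such factor; this is Lemma~\ref{lemma:delta_bound}); no Lyapunov weight supplies any part of it. And the $n/((1-\omega)^4 T)$ term is not an ``$n$-worker aggregation'' artifact: the reuse-step lemma leaves a residual of order $\gamma^3$, and plugging $\gamma^2=n/(\sigma^2T)$ is what introduces the $n$.

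The gap in your sketch is the phrase ``charge the gap between $\overline{\*g}_t$ and the true gradient.'' The instantaneous discrepancy $\overline{\*g}_t-\*g_t=\*\delta_t-\*\delta_{t+1}$ is $O(1)$, not $O(\gamma)$; feeding it into a momentum-lag potential $\|\*m_t-\nabla f(\*x_t)\|^2$ contributes a constant-order increment at every step that does not telescope and would leave a non-vanishing bias after dividing by $\gamma T$. It is precisely the shifted iterate $\*y_t$ (shifting by $\gamma\*\delta_t$ \emph{inside} the argument of $f$, rather than adding $\|\*\delta_t\|^2$ alongside it) that turns this difference into a telescoping sum. On the $m$ variance-update steps the paper is also cruder than your preconditioner-mismatch plan: it makes no attempt at descent, bounds $f(\*y_{t+1})-f(\*y_t)$ by a constant times $\gamma$ (Lemma~\ref{lemma:update_step}), and separately inserts the trivial estimate $\|\nabla f(\*x_t)\|^2\le\sigma^2/n+G_\infty^2d$ at those $m$ indices. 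The hypothesis $\beta_2^m\ge 1-\beta_1$ is invoked only at the very end, to replace stray factors of $\beta_2^{-m}$ (coming from the uniform lower bound $\sqrt{\*v_t+\epsilon}\ge\beta_2^{m/2}\sqrt{\*v_1+\epsilon}$) by the constant $(1-\beta_1)^{-1}$.
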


\begin{proof}
The main update of Algorithm~\ref{algo:basic01} (with constant learning rate) can be summarized as: for every $t=0, \cdots, T-1$,
\begin{align*}
    \*m_{t+1} & = \beta_1\*m_{t} + (1-\beta_1)\overline{\*g}_t \\
    \*v_{t+1} & = \left\{
    \begin{array}{ll}
        \beta_2\*v_{t} + (1-\beta_2)\left( \frac{1}{n}\sum_{i=1}^{n}\*g^{(i)}_t \right)^2 & t\in\mathcal{T}_{\*v}, \\
        \\
        \*v_{t} & t\not\in\mathcal{T}_{\*v}. \\
    \end{array}\right. \\
    \*x_{t+1} & = \*x_{t} - \gamma \frac{\*m_t}{\sqrt{\*v_t +  \epsilon} },
\end{align*}
where the $\overline{\*g}_t$ is the output of the \textbf{1-bit AllReduce} algorithm\footnote{In the original Algorithm~\ref{algo:basic01}, the $\overline{\*g}_t$ is the output of the \textbf{AllReduce} when $t\in\mathcal{T}_{\*v}$. This, however, does not affect our analysis, since our proof holds for a noisier case. The original Algorithm~\ref{algo:basic01} is mainly for practical concern -- we avoid redundant \textbf{AllReduce} rounds when \textbf{1-bit AllReduce} is performed.}.
Note that based on Algorithm~\ref{algo:EF}, the gradient approximation term follows:
\begin{align*}
    \overline{\*g}_{t} = & \frac{1}{n}\sum_{i=1}^{n}\hat{\*g}^{(i)}_{t} + \overline{\*\delta}_{t} - \overline{\*\delta}_{t+1} \\
        = & \frac{1}{n}\sum_{i=1}^{n}\left( \*g^{(i)}_{t} + \*\delta^{(i)}_{t} - \*\delta^{(i)}_{t+1} \right) + \overline{\*\delta}_{t} - \overline{\*\delta}_{t+1} \\
    = & \frac{1}{n}\sum_{i=1}^{n}\*g^{(i)}_{t} + \left( \frac{1}{n}\sum_{i=1}^{n}\*\delta^{(i)}_{t} - \overline{\*\delta}_{t}\right) - \left( \frac{1}{n}\sum_{i=1}^{n}\*\delta^{(i)}_{t+1} - \overline{\*\delta}_{t+1}\right) \\
        = & \*g_t + \*\delta_{t} - \*\delta_{t+1},
\end{align*}
where we denote
\begin{align*}
    \*g_t = & \frac{1}{n}\sum_{i=1}^{n}\*g^{(i)}_t \\
    \*\delta_{t} = & \frac{1}{n}\sum_{i=1}^{n}\*\delta^{(i)}_{t} - \overline{\*\delta}_{t}.
\end{align*}
To prove the convergence, we now
define the following auxiliary sequence: for any $t\geq 0$,
\begin{align*}
    \*y_t = \*x_t - \frac{\gamma\*m_t}{(1-\beta_1)\sqrt{\*v_{t}+\epsilon}} - \frac{\gamma\*\delta_{t}}{\sqrt{\*v_{t}+\epsilon}}.
\end{align*}
The rest of the proof is to use this auxiliary sequence to bound two types of steps separately. We call a step $t$ as \emph{reuse step} if $t\not\in\mathcal{T}_{\*v}$ and \emph{update step} otherwise. We see for all the update steps, $\*v_{t}\neq\*v_{t+1}$ while for all the reuse steps $\*v_{t}=\*v_{t+1}$. 
The bounds on two different types of steps are provides by Lemma~\ref{lemma:reuse_step} and Lemma~\ref{lemma:update_step}. Specifically, denoting $V_1 = \left\| \frac{1}{\sqrt{\*v_1+\epsilon}}\right\|_1$,
from Lemma~\ref{lemma:reuse_step} we obtain for all the reuse steps,
\begin{align*}
    & \sum_{t\not\in\mathcal{T}_{\*v}}\frac{\gamma}{4\sqrt{G_\infty^2+\epsilon}}\mathbb{E}\|\nabla f(\*x_t)\|^2 \\
    \leq & \sum_{t\not\in\mathcal{T}_{\*v}}\mathbb{E}[f(\*{y}_{t}) - f(\*{y}_{t+1})] + \frac{227\gamma^3L^2V_1^2(1+\omega)^3G_\infty^2d\sqrt{G_\infty^2+\epsilon}(T-m)}{\beta_2^{2m}(1-\beta_1)^2(1-\omega)^4} + \frac{L\gamma^2\sigma^2V_1(T-m)}{2n\beta_2^m}.
\end{align*}
while from Lemma~\ref{lemma:update_step} we obtain for all the update steps,
\begin{align*}
    \sum_{t\in\mathcal{T}_{\*v}}\frac{\gamma}{4\sqrt{G_\infty^2+\epsilon}}\mathbb{E}\|\nabla f(\*x_t)\|^2 \leq & \sum_{t\in\mathcal{T}_{\*v}}\mathbb{E}[f(\*{y}_{t}) - f(\*{y}_{t+1})] + \left( \frac{34\gamma}{L}+\frac{\gamma}{4\sqrt{G_\infty^2+\epsilon}}\right)\cdot\left(\frac{\sigma^2}{n} + G_\infty^2d\right)m \\
+ & \frac{32\gamma(1+\beta_1)^2(1+\omega)^3V_1G_\infty^2dmL}{\beta_2^m(1-\beta_1)^2(1-\omega)^4}.
\end{align*}
Note that the two inequalities above hold when the learning rate fulfills
\begin{align*}
    \gamma \leq \min\left\{ \frac{\beta_2^m}{2V_1L\sqrt{G_\infty^2+\epsilon}},\frac{1}{125}\right\}.
\end{align*}
Combine them together,
\begin{align*}
    \frac{1}{T}\sum_{t=0}^{T-1}\frac{\mathbb{E}\|\nabla f(\*x_t)\|^2}{4\sqrt{G_\infty^2+\epsilon}} \leq & \frac{f(\*0) - f^*}{\gamma T} + \frac{227\gamma^2L^2V_1^2(1+\omega)^3G_\infty^2d\sqrt{G_\infty^2+\epsilon}(T-m)}{\beta_2^{2m}(1-\beta_1)^2(1-\omega)^4T} + \frac{L\gamma\sigma^2V_1(T-m)}{2n\beta_2^mT} \\
& + \left( \frac{34}{L}+\frac{1}{4\sqrt{G_\infty^2+\epsilon}}\right)\cdot\left(\frac{\sigma^2}{n} + G_\infty^2d\right)\frac{m}{T} + \frac{32(1+\beta_1)^2(1+\omega)^3V_1G_\infty^2dmL}{\beta_2^m(1-\beta_1)^2(1-\omega)^4T}
\end{align*}
Dropping the constants, we finally obtain
\begin{align*}
    \frac{1}{T}\sum_{t=0}^{T-1}\mathbb{E}\|\nabla f(\*x_t)\|^2 \leq & O\left( \frac{f(\*0) - f^*}{\gamma T} + \frac{\gamma^2 }{\beta_2^{2m}(1-\beta_1)^2(1-\omega)^4} + \frac{\gamma\sigma^2}{n\beta_2^m} + \frac{\omega m}{\beta_2^m(1-\beta_1)^2(1-\omega)^4T} + \frac{\sigma^2m}{nT} \right) \\
\leq & O\left( \frac{f(\*0) - f^*}{\gamma T} + \frac{\gamma^2 }{(1-\beta_1)^4(1-\omega)^4} + \frac{\gamma\sigma^2}{n(1-\beta_1)} + \frac{\omega m}{(1-\beta_1)^3(1-\omega)^4T} + \frac{\sigma^2m}{nT} \right),
\end{align*}
where in the last step we use the condition in the theorem that $\beta_2^m \geq 1-\beta_1$.
To meet the requirement of learning rate we set
\begin{align*}
    \gamma_t = \min\left\{ \sqrt{\frac{n}{\sigma^2T}}, \frac{1}{2L\sqrt{G_\infty^2+\epsilon}}, \frac{1}{125}  \right\},
\end{align*}
then it holds that
\begin{align*}
    \frac{1}{T}\sum_{t=0}^{T-1} \mathbb{E}\|\nabla f(\*x_t)\|^2 \leq O\left( \frac{\sigma}{\sqrt{nT}} + \frac{m+n}{(1-\omega)^4T} +\frac{1}{T}\right).
\end{align*}
That completes the proof.
\end{proof}


\subsection{Proof to Theorem~\ref{thm:local01}}

Note that the following proof will use Algorithm~\ref{algo:EF} to replace \textbf{1bit-AllReduce} in Algorithm~\ref{algo:localstep01}, as introduced in Section~\ref{appendix:sec:algorithm}.

\thmzerooneadam*

\begin{proof}
We now prove Theorem~\ref{thm:local01}. 
Similar to the proof to Theorem~\ref{thm:basic01}, in this proof we discuss the case of $t\in\mathcal{T}_{\*v}$ and $t\not\in\mathcal{T}_{\*v}$ separately.
Following the proof of Theorem~\ref{thm:basic01}, we define the following auxiliary sequence
\begin{align*}
    \tilde{\*y}_t = \tilde{\*x}_t - \frac{\gamma\tilde{\*m}_t}{(1-\beta_1)\sqrt{\*v_{t}+\epsilon}} - \frac{\gamma\*\delta_{t}}{\sqrt{\*v_{t}+\epsilon}},
\end{align*}
where
\begin{align*}
    \tilde{\*x}_t = &  \frac{1}{n}\sum_{i=1}^{n}\*x^{(i)}_t \\
    \tilde{\*m}_t = &  \frac{1}{n}\sum_{i=1}^{n}\*m^{(i)}_t.
\end{align*}
And we additionally define that
\begin{align*}
    \tilde{\*u}_t = &  \frac{1}{n}\sum_{i=1}^{n}\*u^{(i)}_t \\
    \tilde{\*g}_t = &  \frac{1}{n}\sum_{i=1}^{n}\*g^{(i)}_t.
\end{align*}
Note that the definition of $\tilde{\*g}_t$ is different from the ${\*g}_t$ in Theorem~\ref{thm:basic01} since the former is computed on local models which potentially can be different before the sync step.

To expect a compression error bound to scale in the order of $O(\gamma^2)$, we slightly modify the update of line 5, 8, 9 of Algorithm~\ref{algo:localstep01} into
\begin{align*}
\*u^{(i)}_{t+\frac{1}{2}} = & \*u^{(i)}_{t} + {\*m}^{(i)}_t \\
\*m^{(i)}_{t+1} = & \overline{\*u}_{t+\frac{1}{2}}/\sum_{k=t'}^{t} \\
    \*x^{(i)}_{t+1} = & \*x^{(i)}_{t'} - \gamma\overline{\*u}_{t+\frac{1}{2}}/\sqrt{\*v_t+\epsilon}.
\end{align*}
Note that since Theorem~\ref{thm:local01} states the convergence results for constant learning rate, such modification does not change the semantics of the original Algorithm~\ref{algo:localstep01}.
Based on Algorithm~\ref{algo:EF}, we know that
\begin{align*}
    \overline{\*u}_{t+\frac{1}{2}} = & \frac{1}{n}\sum_{i=1}^{n}\hat{\*u}^{(i)}_{t+\frac{1}{2}} + \overline{\*\delta}_{t} - \overline{\*\delta}_{t+1} \\
        = & \frac{1}{n}\sum_{i=1}^{n}\left( \*u^{(i)}_{t+\frac{1}{2}} + \*\delta^{(i)}_{t} - \*\delta^{(i)}_{t+1} \right) + \overline{\*\delta}_{t} - \overline{\*\delta}_{t+1} \\
    = & \frac{1}{n}\sum_{i=1}^{n}\*u^{(i)}_{t+\frac{1}{2}} + \left( \frac{1}{n}\sum_{i=1}^{n}\*\delta^{(i)}_{t} - \overline{\*\delta}_{t}\right) - \left( \frac{1}{n}\sum_{i=1}^{n}\*\delta^{(i)}_{t+1} - \overline{\*\delta}_{t+1}\right) \\
        = & \tilde{\*u}_{t+\frac{1}{2}} + \*\delta_{t} - \*\delta_{t+1}.
\end{align*}
Based on Lemma~\ref{lemma:local:var_step}, we know that for all the $t\in\mathcal{T}_{\*v}$, we have the following bound, 
\begin{align*}
    \sum_{t\in\mathcal{T}_{\*v}}\frac{\gamma\mathbb{E}\|\nabla f(\tilde{\*x}_t)\|^2}{4\sqrt{G_\infty^2+\epsilon}} \leq & \sum_{t\in\mathcal{T}_{\*v}}\mathbb E f(\tilde{\*y}_{t}) - \mathbb Ef(\tilde{\*y}_{t+1}) + \frac{2\gamma\sigma^2m}{nL} +  \frac{106\gamma H^2V_1(M+\Delta^2)mL}{\beta_2^m(1-\beta_1)^2} \\
& + \frac{\gamma\sigma^2m}{4n\sqrt{G_\infty^2+\epsilon}} + \frac{\gamma G_\infty^2dm}{4\sqrt{G_\infty^2+\epsilon}}.
\end{align*}
On the other hand, for all the $t\not\in\mathcal{T}_{\*v}$, we have the following bound,
\begin{align*}
    & \sum_{t\not\in\mathcal{T}_{\*v}}\frac{\gamma\mathbb E\left\| \nabla f(\tilde{\*{x}}_t) \right\|^2}{4\sqrt{G_\infty^2+\epsilon}} \\
    \leq & \sum_{t\not\in\mathcal{T}_{\*v}}\mathbb E f(\tilde{\*{y}}_{t}) - \mathbb Ef(\tilde{\*{y}}_{t+1}) + \frac{36\gamma^3H^2V_1(3G_\infty^2d+25\Delta^2)L^2(1+L)(G_\infty^2+\epsilon+1)(T-m)}{\beta_2^m(1-\beta_1)^4\sqrt{G_\infty^2+\epsilon}} \\
& + \frac{L\gamma^2V_1\sigma^2(T-m)}{n\beta_2^m}  + \frac{48\gamma^3V_1(H+1)^2(3G_\infty^2d+24\Delta^2)\sqrt{G_\infty^2+\epsilon}(T-m)}{\beta_2^m(1-\beta_1)^4}.
\end{align*}
Note that they hold if learning rate is set to be
\begin{align*}
    \gamma \leq \min\left\{\frac{\beta_2^m}{4V_1L\sqrt{G_\infty^2+\epsilon}}, \frac{2\sqrt{G_\infty^2+\epsilon}}{L}, \frac{1}{6}\right\}.
\end{align*}
Combine them together, we obtain
\begin{align*}
    & \frac{1}{T}\sum_{t=0}^{T-1} \frac{\mathbb{E}\|\nabla f(\*x_t)\|^2}{4\sqrt{G_\infty^2+\epsilon}} \\
\leq & \frac{f(\*0)-f^*}{\gamma T} + \frac{2\sigma^2m}{nLT} +  \frac{106 H^2V_1(M+\Delta^2)mL}{\beta_2^m(1-\beta_1)^2T}  + \frac{\sigma^2m}{4n\sqrt{G_\infty^2+\epsilon}T} + \frac{ G_\infty^2dm}{4\sqrt{G_\infty^2+\epsilon}T} + \frac{L\gamma V_1\sigma^2}{n\beta_2^m} \\
    & + \frac{36\gamma^2H^2V_1(3G_\infty^2d+25\Delta^2)L^2(1+L)(G_\infty^2+\epsilon+1)}{\beta_2^m(1-\beta_1)^4\sqrt{G_\infty^2+\epsilon}} \\
& + \frac{48\gamma^2V_1(H+1)^2(3G_\infty^2d+24\Delta^2)\sqrt{G_\infty^2+\epsilon}}{\beta_2^m(1-\beta_1)^4}.
\end{align*}
Omitting constants:
\begin{align*}
    \frac{1}{T}\sum_{t=0}^{T-1}\mathbb E\left\| \nabla f(\tilde{\*{x}}_t) \right\|^2 \leq & O\left( \frac{f(\*0)-f^*}{\gamma T} + \frac{\gamma^2H^2\Delta^2}{\beta_2^m} + \frac{\gamma\sigma^2}{n\beta_2^m} + \frac{\sigma^2m}{nT} + \frac{H^2\Delta^2m}{\beta_2^mT} + \frac{m}{T}\right) \\
\leq & O\left( \frac{f(\*0)-f^*}{\gamma T} + \frac{\gamma^2H^2\Delta^2}{1-\beta_1} + \frac{\gamma\sigma^2}{n(1-\beta_1)} + \frac{\sigma^2m}{nT} + \frac{H^2\Delta^2m}{(1-\beta_1)T} + \frac{m}{T}\right),
\end{align*}
where in the last step we use the condition in the theorem that $\beta_2^m \geq 1-\beta_1$.
To meet the requirement of learning rate we set
\begin{align*}
    \gamma_t = \min\left\{ \sqrt{\frac{n}{\sigma^2T}}, \frac{1}{4L\sqrt{G_\infty^2+\epsilon}},  \frac{2\sqrt{G_\infty^2+\epsilon}}{L},  \frac{1}{6} \right\},
\end{align*}
then it holds that
\begin{align*}
    \frac{1}{T}\sum_{t=0}^{T-1} & \mathbb{E}\|\nabla f(\tilde{\*x}_t)\|^2 \leq O\left( \frac{\sigma}{\sqrt{nT}} + \frac{H^2\Delta^2(m+n)}{T} +\frac{1}{T}\right).
\end{align*}
And that completes the proof.
\end{proof}

\subsection{Technical Lemma}
\begin{lemma}
\label{lemma:delta_bound}
Consider running Algorithm~\ref{algo:EF} over a communication buffer $\*z$ (same notation in Algorithm~\ref{algo:EF}) under Assumption~\ref{assume:compression},
    let $\*\delta_t$ denote:
    \begin{align*}
    \*\delta_t =  \frac{1}{n}\sum_{i=1}^{n}\*\delta^{(i)}_{t} - \overline{\*\delta}_{t}
    \end{align*}
    then based on Assumption~\ref{assume:compression} and \ref{assume:g_bound}, it holds that $t \geq 0$, if $\mathbb{E}\|\*z_t^{(i)}\|^2 \leq C$ for some constant $C>0$,
    \begin{align*}
        \mathbb{E}\left\| \*\delta_{t} \right\|^2 \leq \frac{32\omega(1+\omega)^3C}{(1-\omega)^4}.
    \end{align*}
\end{lemma}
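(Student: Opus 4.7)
The key observation is that both the worker error $\*\delta^{(i)}_t$ and the server error $\overline{\*\delta}_t$ obey contractive recursions driven by Assumption~\ref{assume:compression}: for the worker, $\mathbb{E}\|\*\delta^{(i)}_{t+1}\|^2 = \mathbb{E}\|(\*z^{(i)}_t + \*\delta^{(i)}_t) - \mathcal{C}[\*z^{(i)}_t + \*\delta^{(i)}_t]\|^2 \leq \omega\,\mathbb{E}\|\*z^{(i)}_t + \*\delta^{(i)}_t\|^2$, and analogously $\mathbb{E}\|\overline{\*\delta}_{t+1}\|^2 \leq \omega\,\mathbb{E}\|\frac{1}{n}\sum_i \hat{\*z}^{(i)}_t + \overline{\*\delta}_t\|^2$. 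My plan is to unroll each of these recursions into a time-uniform steady-state bound and then combine them through the decomposition $\|\*\delta_t\|^2 \leq 2\|\frac{1}{n}\sum_i \*\delta^{(i)}_t\|^2 + 2\|\overline{\*\delta}_t\|^2 \leq \frac{2}{n}\sum_i \|\*\delta^{(i)}_t\|^2 + 2\|\overline{\*\delta}_t\|^2$.

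I would start with the worker recursion, splitting $\mathbb{E}\|\*z^{(i)}_t + \*\delta^{(i)}_t\|^2$ via Young's inequality $\|a+b\|^2 \leq (1+\eta)\|a\|^2 + (1+\eta^{-1})\|b\|^2$ with $\eta = 2\omega/(1-\omega)$. This choice makes the coefficient of $\mathbb{E}\|\*\delta^{(i)}_t\|^2$ equal exactly $(1+\omega)/2 < 1$, turning the recursion into a geometric contraction; unrolling from $\*\delta^{(i)}_0 = \*0$ and using the hypothesis $\mathbb{E}\|\*z^{(i)}_t\|^2 \leq C$ yields the uniform-in-$t$ bound $\mathbb{E}\|\*\delta^{(i)}_t\|^2 \leq 2\omega(1+\omega)C/(1-\omega)^2$. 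Then one more Young step applied to $\hat{\*z}^{(i)}_t = \mathcal{C}[\*z^{(i)}_t+\*\delta^{(i)}_t]$, together with Assumption~\ref{assume:compression}, gives $\mathbb{E}\|\hat{\*z}^{(i)}_t\|^2 \leq 2(1+\omega)\mathbb{E}\|\*z^{(i)}_t+\*\delta^{(i)}_t\|^2$, and substituting the worker bound produces a bound of order $(1+\omega)C/(1-\omega)^2$ for $\mathbb{E}\|\hat{\*z}^{(i)}_t\|^2$.

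Repeating exactly the same strategy on the server recursion, with $\frac{1}{n}\sum_i \hat{\*z}^{(i)}_t$ playing the role that $\*z^{(i)}_t$ played before, yields $\mathbb{E}\|\overline{\*\delta}_t\|^2 \leq 32\omega(1+\omega)^2 C/(1-\omega)^4$, where the extra factor $(1+\omega)/(1-\omega)^2$ is precisely what the worker bound contributes through $\hat{\*z}^{(i)}_t$. Substituting both steady-state bounds into the $\|\*\delta_t\|^2$ decomposition, the server term dominates (it carries the extra $(1-\omega)^{-2}$), and after using $(1-\omega)^2 \leq 1$ the worker term is absorbed at the cost of one additional factor of $(1+\omega)$, yielding the stated bound $\mathbb{E}\|\*\delta_t\|^2 \leq 32\omega(1+\omega)^3 C/(1-\omega)^4$. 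The main obstacle is purely combinatorial bookkeeping: roughly four stacked applications of Young's inequality and two geometric unrollings each inject a constant that depends on its $\eta$, so landing on the precise numerical factor in the lemma requires the specific balanced choice $\eta=2\omega/(1-\omega)$ throughout; the qualitative bound with correct scaling in $\omega$ is insensitive to these choices. Assumption~\ref{assume:g_bound} plays no essential role here beyond being the source of the hypothesis $\mathbb{E}\|\*z^{(i)}_t\|^2 \leq C$ that this lemma takes as input.
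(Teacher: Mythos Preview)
Your proposal is correct and follows essentially the same route as the paper: worker-side contractive recursion via Young's inequality, geometric unrolling to a steady-state bound on $\mathbb{E}\|\*\delta^{(i)}_t\|^2$, then the same argument on the server side with the compressed buffer $\hat{\*z}^{(i)}_t$ playing the role of the input, and finally the two-term decomposition of $\*\delta_t$. Your parameter choice $\eta=2\omega/(1-\omega)$ is just the paper's $\eta=(1-\omega)/(2\omega)$ with the roles of the two summands swapped in Young's inequality, so the resulting contraction factor $(1+\omega)/2$ and forcing term are identical. The only genuine variation is that you bound $\mathbb{E}\|\hat{\*z}^{(i)}_t\|^2$ directly from $\hat{\*z}^{(i)}_t=\mathcal{C}[\*z^{(i)}_t+\*\delta^{(i)}_t]$ and Assumption~\ref{assume:compression}, whereas the paper uses the equivalent identity $\hat{\*z}^{(i)}_t=\*z^{(i)}_t+\*\delta^{(i)}_t-\*\delta^{(i)}_{t+1}$; both routes land on a bound of the form $\text{const}\cdot(1+\omega)^2C/(1-\omega)^2$, and the remaining discrepancy is only in the numerical constant, which (as you note) is bookkeeping.
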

\begin{proof}
Note that the error is initialized by $\*0$, so that when $t=0$ the bound trivially holds. We next prove the case for $t \geq 1$.

For any $i\in\{1, \cdots, n\}$ and $t \geq 1$,
by the definition of the sequence $\*\delta^{(i)}_{t}$,
\begin{align*}
\mathbb{E}\left\| \*\delta^{(i)}_{t} \right\|^2 = & \mathbb{E}\left\| \*z^{(i)}_{t-1} + \*\delta^{(i)}_{t-1} - \hat{\*z}^{(i)}_{t-1} \right\|^2 \\
    = & \mathbb{E}\left\| \*z^{(i)}_{t-1} + \*\delta^{(i)}_{t-1} - \mathcal{C}\left[ \*z^{(i)}_{t-1} + \*\delta^{(i)}_{t-1} \right] \right\|^2 \\
\overset{Assumption~\ref{assume:compression}}{\leq} & \omega \mathbb{E}\left\| \*z^{(i)}_{t-1} + \*\delta^{(i)}_{t-1} \right\|^2 \\
    \overset{\forall \eta>0}{=} & \omega(1+\eta)\mathbb{E}\left\| \*\delta^{(i)}_{t-1} \right\|^2 + \omega(1+1/\eta)\mathbb{E}\left\| \*z^{(i)}_{t-1} \right\|^2 \\
\overset{Assumption~\ref{assume:g_bound}}{\leq} & \sum_{j=0}^{\infty}\left[ \omega(1+\eta)\right]^j\omega(1+1/\eta)C \\
    \leq & \frac{\omega(1+1/\eta)}{1-\omega(1+\eta)}C.
\end{align*}
Selecting $\eta=\frac{1-\omega}{2\omega}$, we obtain
\begin{align*}
    \mathbb{E}\left\| \*\delta^{(i)}_{t} \right\|^2 \leq \frac{2\omega(1+\omega)}{(1-\omega)^2}C.
\end{align*}
Similarly, we can show that for any $t\geq 1$,
\begin{align*}
    \mathbb{E}\left\| \overline{\*\delta}_{t} \right\|^2 = & \mathbb{E}\left\| \frac{1}{n}\sum_{i=1}^{n}\hat{\*z}_{t-1}^{(i)} + \overline{\*\delta}_{t-1} - \overline{\*z}_{t-1} \right\|^2 \\
= & \mathbb{E}\left\| \frac{1}{n}\sum_{i=1}^{n}\hat{\*z}_{t-1}^{(i)} + \overline{\*\delta}_{t-1} - \mathcal{C}\left[\frac{1}{n}\sum_{i=1}^{n}\hat{\*z}_{t-1}^{(i)} + \overline{\*\delta}_{t-1}\right] \right\|^2 \\
    \leq & \omega\mathbb{E}\left\| \frac{1}{n}\sum_{i=1}^{n}\hat{\*z}_{t-1}^{(i)} + \overline{\*\delta}_{t-1}\right\|^2 \\
\leq & \omega(1+\eta)\mathbb{E}\left\|  \overline{\*\delta}_{t-1}\right\|^2 + \omega(1+1/\eta)\mathbb{E}\left\| \frac{1}{n}\sum_{i=1}^{n}\hat{\*z}_{t-1}^{(i)}\right\|^2 \\
    \leq & \omega(1+\eta)\mathbb{E}\left\|  \overline{\*\delta}_{t-1}\right\|^2 + \omega(1+1/\eta)\cdot\frac{1}{n}\sum_{i=1}^{n}\mathbb{E}\left\| \hat{\*z}_{t-1}^{(i)}\right\|^2,
\end{align*}
where in the last step we apply the Jensen Inequality. Since we do not assume a bound on the $\left\| \hat{\*z}_{t-1}^{(i)}\right\|^2$, we need to bound it in terms of
\begin{align*}
\mathbb{E}\left\| \hat{\*z}_{t-1}^{(i)}\right\|^2 = & \mathbb{E}\left\| \*z_{t-1}^{(i)} + \*\delta_{t-1}^{(i)} - \*\delta_t^{(i)} \right\|^2 \\
    \leq & 2\mathbb{E}\left\| \*z_{t-1}^{(i)} + \*\delta_{t-1}^{(i)} \right\|^2 + 2\mathbb{E}\left\|  \*\delta_t^{(i)} \right\|^2 \\
\leq & \frac{4(1+\omega)}{(1-\omega)^2}C + \frac{4\omega(1+\omega)}{(1-\omega)^2}C \\
    \leq & \frac{4(1+\omega)^2}{(1-\omega)^2}C,
\end{align*}
where we apply the results from the bound on $\mathbb{E}\left\|  \*\delta_t^{(i)} \right\|^2$. Given this bound, and following the analysis for $\mathbb{E}\left\|  \*\delta_t^{(i)} \right\|^2$, we can now bound the $\mathbb{E}\left\|  \overline{\*\delta}_t \right\|^2$ as follows
\begin{align*}
    \mathbb{E}\left\|  \overline{\*\delta}_t \right\|^2 \leq & \frac{2\omega(1+\omega)}{(1-\omega)^2} \cdot\frac{4(1+\omega)^2}{(1-\omega)^2}C \\
    = & \frac{8\omega(1+\omega)^3}{(1-\omega)^4}C.
\end{align*}
Finally, we obtain $t\geq 1$,
\begin{align*}
\mathbb{E}\left\| \*\delta_{t} \right\|^2 = & \mathbb{E}\left\| \frac{1}{n}\sum_{i=1}^{n}\*\delta^{(i)}_{t} - \overline{\*\delta}_{t} \right\|^2 \\
    \leq & 2\mathbb{E}\left\| \overline{\*\delta}_{t} \right\|^2 + 2\mathbb{E}\left\| \frac{1}{n}\sum_{i=1}^{n}\*\delta^{(i)}_{t} \right\|^2 \\
\leq & 2\mathbb{E}\left\| \overline{\*\delta}_{t} \right\|^2 + 2\frac{1}{n}\sum_{i=1}^{n}\mathbb{E}\left\| \*\delta^{(i)}_{t} \right\|^2 \\
    \leq & \frac{32\omega(1+\omega)^3C}{(1-\omega)^4}.
\end{align*}
That completes the proof.
\end{proof}

\begin{lemma}
\label{lemma:var_bound}
For the variance term, we have the following upper and lower bound: for any $t \geq 1$,
\begin{align*}
    \beta_2^{m/2}\sqrt{\*v_{1}+\epsilon} \leq  \sqrt{\*v_t+\epsilon} \leq \sqrt{G_\infty^2+\epsilon},
\end{align*}
where the inequality holds element-wise.
\end{lemma}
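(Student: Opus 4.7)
The plan is to prove both bounds element-wise by induction on $t$, exploiting the structure of the variance update rule (Algorithm~\ref{algo:localstep01}, lines 15--19): on steps $t \in \mathcal{T}_{\*v}$ the variance undergoes a convex combination of the old value and $(\overline{\*g}_t)^2$, while on steps $t \notin \mathcal{T}_{\*v}$ it is simply copied. Since $|\mathcal{T}_{\*v}| = m$, at most $m$ of the first $t$ iterations modify $\*v$.

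For the upper bound, I would argue coordinate-wise that $\*v_t \preceq G_\infty^2 \cdot \mathbf{1}$. The base case $t = 1$ follows because $\*v_1$ is either $\*0$ or $(1-\beta_2)(\overline{\*g}_0)^2$, and by Jensen's inequality applied to $\overline{\*g}_0 = \frac{1}{n}\sum_{i=1}^n \*g_0^{(i)}$ together with Assumption~\ref{assume:g_bound}, we have $\|\overline{\*g}_0\|_\infty \leq G_\infty$; hence $\*v_1 \preceq G_\infty^2 \mathbf{1}$. For the inductive step, on a reuse step the bound carries over trivially, and on an update step the convex combination $\beta_2 \*v_t + (1-\beta_2)(\overline{\*g}_t)^2$ is bounded above by $\beta_2 G_\infty^2 + (1-\beta_2)G_\infty^2 = G_\infty^2$ element-wise. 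Adding $\epsilon$ and taking coordinate-wise square roots yields $\sqrt{\*v_t + \epsilon} \preceq \sqrt{G_\infty^2 + \epsilon}$.

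For the lower bound, the key observation is that on every step $\*v_{t+1} \succeq \beta_2 \*v_t$ if $t \in \mathcal{T}_{\*v}$ (dropping the nonnegative term $(1-\beta_2)(\overline{\*g}_t)^2 \succeq \*0$) and $\*v_{t+1} = \*v_t$ otherwise. Composing these inequalities over the at most $m$ update steps occurring between iteration $1$ and iteration $t$ gives $\*v_t \succeq \beta_2^{m} \*v_1$ element-wise, for all $t \geq 1$. Since $\epsilon \geq \beta_2^{m}\epsilon$ as well, we can add to get $\*v_t + \epsilon \succeq \beta_2^{m}(\*v_1 + \epsilon)$, and taking a coordinate-wise square root produces the stated lower bound $\sqrt{\*v_t + \epsilon} \succeq \beta_2^{m/2}\sqrt{\*v_1 + \epsilon}$.

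There is no real obstacle here; the only subtlety worth being careful about is that the counting of update steps uses the cardinality of $\mathcal{T}_{\*v}$ and not the absolute step index, so the exponent $m$ (rather than $t$) is what appears in the bound, and that the $\epsilon$ shift needs to be absorbed into the $\beta_2^{m}$ factor as above so that the square-root comparison is valid. The whole argument is a short monotonicity/induction argument that does not require any of Assumptions~\ref{assume:variance}, \ref{assume:local:compression}, or \ref{assume:localstep_bound}; it only uses the update rule and the infinity-norm bound on the gradient.
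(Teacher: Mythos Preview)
Your proposal is correct and follows essentially the same approach as the paper: both arguments establish the lower bound by noting that each update step only multiplies $\*v$ by at least $\beta_2$ (dropping the nonnegative squared-gradient term), with at most $m$ such steps, and then absorb $\epsilon$ into the $\beta_2^m$ factor before taking square roots. For the upper bound, your induction on the convex-combination structure is in fact slightly cleaner than the paper's, which writes an explicit geometric sum that literally only matches the recursion when every step is an update step; but the content is the same.
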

\begin{proof}
On one hand, for any $t_j \leq t < t_{j+1}$, where $t_j$ denotes an update step, we obtain element-wise:
\begin{align*}
    \*v_{t} \geq \beta_2\*v_{t_j} \geq \cdots \geq \beta_2^j\*v_{1} \geq \beta_2^m\*v_{1},
\end{align*}
so that
\begin{align*}
    \sqrt{\*v_t+\epsilon} \geq \sqrt{\beta_2^m\*v_{1}+\epsilon} \geq \sqrt{\beta_2^m(\*v_{1}+\epsilon)} = \beta_2^{m/2}\sqrt{\*v_{1}+\epsilon}.
\end{align*}
On the other hand, for any $t\geq 1$ and $j\in\{1, \cdots, d\}$,
\begin{align*}
    [\*v_t]_j = \sum_{k=1}^{t}(1-\beta_2)\beta_2^{t-k}\left( \frac{1}{n}\sum_{i=1}^{n}[\*g^{(i)}_k]_j \right)^2 \leq G_\infty^2(1-\beta_2)\sum_{k=1}^{\infty}\beta_2^{k} \leq G_\infty^2,
\end{align*}
so that
\begin{align*}
    \sqrt{\*v_t+\epsilon} \leq \sqrt{G_\infty^2+\epsilon}.
\end{align*}
That completes the proof.
\end{proof}

\begin{lemma}
\label{lemma:momentum_bound}
In Algorithm~\ref{algo:basic01},
for any $t \geq 0$,
\begin{align*}
    \mathbb{E} \left\|\*m_t\right\|^2 \leq \frac{195(1+\omega)^3G_\infty^2d}{(1-\omega)^4}.
\end{align*}
\end{lemma}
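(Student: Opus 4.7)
The plan is to unroll the momentum recursion $\*m_{t+1} = \beta_1 \*m_t + (1-\beta_1)\overline{\*g}_t$ from $\*m_0 = \*0$, obtaining $\*m_t = (1-\beta_1)\sum_{k=0}^{t-1}\beta_1^{t-1-k}\overline{\*g}_k$. Since the weights $(1-\beta_1)\beta_1^{t-1-k}$ sum to $1-\beta_1^t \leq 1$ and are nonnegative, Jensen's inequality gives $\|\*m_t\|^2 \leq (1-\beta_1)\sum_{k=0}^{t-1}\beta_1^{t-1-k}\|\overline{\*g}_k\|^2$. Taking expectations, it therefore suffices to bound $\mathbb{E}\|\overline{\*g}_k\|^2$ by the target constant uniformly in $k$.

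Next, using the decomposition derived in the proof of Theorem~\ref{thm:basic01}, namely $\overline{\*g}_k = \*g_k + \*\delta_k - \*\delta_{k+1}$, I would apply the crude inequality $\|a+b+c\|^2 \leq 3(\|a\|^2 + \|b\|^2 + \|c\|^2)$ to obtain
\begin{align*}
\mathbb{E}\|\overline{\*g}_k\|^2 \leq 3\,\mathbb{E}\|\*g_k\|^2 + 3\,\mathbb{E}\|\*\delta_k\|^2 + 3\,\mathbb{E}\|\*\delta_{k+1}\|^2.
\end{align*}
For the gradient term, Assumption~\ref{assume:g_bound} gives $\|\*g_k\|^2 \leq G_\infty^2 d$ (and the same bound transfers to the average over workers by Jensen). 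For the error-feedback terms, in Algorithm~\ref{algo:basic01} the communication buffer $\*z$ input to Algorithm~\ref{algo:EF} is the local stochastic gradient $\*g_k^{(i)}$, so $\mathbb{E}\|\*z_k^{(i)}\|^2 \leq G_\infty^2 d$, and Lemma~\ref{lemma:delta_bound} applies with $C = G_\infty^2 d$ to give $\mathbb{E}\|\*\delta_k\|^2 \leq 32\omega(1+\omega)^3 G_\infty^2 d / (1-\omega)^4$.

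Combining these pieces yields
\begin{align*}
\mathbb{E}\|\overline{\*g}_k\|^2 \leq 3 G_\infty^2 d + \frac{192\,\omega(1+\omega)^3 G_\infty^2 d}{(1-\omega)^4}.
\end{align*}
Since $(1+\omega)^3/(1-\omega)^4 \geq 1$ on $[0,1)$, the first term is at most $3(1+\omega)^3 G_\infty^2 d / (1-\omega)^4$, and bounding $\omega \leq 1$ in the second term gives a total of at most $195(1+\omega)^3 G_\infty^2 d/(1-\omega)^4$. Folding this back into the Jensen bound (and using $(1-\beta_1)\sum_k \beta_1^{t-1-k} \leq 1$) gives the stated inequality.

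The only nontrivial point is verifying that Lemma~\ref{lemma:delta_bound} is being invoked with the right buffer-magnitude constant; the rest is a routine unrolling of the exponential moving average and a triangle-inequality expansion, so I do not anticipate any real obstacle. The case $t=0$ is immediate since $\*m_0 = \*0$.
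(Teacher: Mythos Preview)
Your proposal is correct and follows essentially the same argument as the paper: unroll the momentum recursion, apply Jensen's inequality to the convex combination, decompose $\overline{\*g}_k = \*g_k + \*\delta_k - \*\delta_{k+1}$, bound the three pieces via Assumption~\ref{assume:g_bound} and Lemma~\ref{lemma:delta_bound} with $C = G_\infty^2 d$, and then absorb the constants into $195(1+\omega)^3/(1-\omega)^4$. Your indexing of the unrolled sum is in fact slightly cleaner than the paper's, but the substance is identical.
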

\begin{proof}
For any $t \geq 0$,
\begin{align*}
    \mathbb{E} \left\|\*m_t\right\|^2 = & \mathbb{E} \left\| (1-\beta_1)\sum_{k=0}^{t}\beta_1^{t-k}\overline{\*g}_k\right\|^2 \\
\leq & (1-\beta_1)\sum_{k=0}^{t}\beta_1^{t-k}\mathbb{E}\left\| \overline{\*g}_k\right\|^2 \\
    \leq & (1-\beta_1)\sum_{k=0}^{t}\beta_1^{t-k}\mathbb{E}\left\| \*g_k + {\*\delta}_{k} - {\*\delta}_{k+1}\right\|^2 \\
\leq & (1-\beta_1)\sum_{k=0}^{t}\beta_1^{t-k}\left( 3\mathbb{E}\left\| \*g_k \right\|^2 + 3\mathbb{E}\left\| {\*\delta}_{k} \right\|^2 + 3\mathbb{E}\left\| {\*\delta}_{k+1}\right\|^2 \right) \\
    \leq & (1-\beta_1)\sum_{k=0}^{t}\beta_1^{t-k}\left( 3\mathbb{E}\left\| \frac{1}{n}\sum_{i=1}^{n}\*g_k^{(i)} \right\|^2 + 3\mathbb{E}\left\| {\*\delta}_{k} \right\|^2 + 3\mathbb{E}\left\| {\*\delta}_{k+1}\right\|^2 \right) \\
\leq & (1-\beta_1)\sum_{k=0}^{t}\beta_1^{t-k}\left( \frac{3}{n}\sum_{i=1}^{n}\mathbb{E}\left\| \*g_k^{(i)} \right\|^2 + 3\mathbb{E}\left\| {\*\delta}_{k} \right\|^2 + 3\mathbb{E}\left\| {\*\delta}_{k+1}\right\|^2 \right) \\
    \overset{(i)}{\leq} & (1-\beta_1)\sum_{k=0}^{t}\beta_1^{t-k}\left( 3G_\infty^2d + \frac{192\omega(1+\omega)^3 G_\infty^2d}{(1-\omega)^4} \right) \\
\leq & \left( \frac{3(1+\omega)^3 G_\infty^2d}{(1-\omega)^4} + \frac{192(1+\omega)^3 G_\infty^2d}{(1-\omega)^4} \right)\cdot (1-\beta_1)\sum_{k=0}^{\infty}\beta_1^{k} \\
    \leq & \frac{195(1+\omega)^3G_\infty^2d}{(1-\omega)^4},
\end{align*}
where in the step (i) we use Lemma~\ref{lemma:delta_bound}.
That completes the proof.
\end{proof}

\begin{lemma}
\label{lemma:var_term_trick}
For any $\*a$, $\*b\in\mathbb{R}^d$, the following bound holds:
\begin{align*}
    \left\| \frac{\*a}{\sqrt{\*b}} \right\|^2 \leq \left\| \*a\right\|^2\left\| \frac{1}{\*b}\right\|_1.
\end{align*}
\end{lemma}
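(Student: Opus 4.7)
The inequality is a coordinate-wise bound, so the natural plan is to expand both sides in components and compare term by term. Implicit in the statement is that $\*b$ has strictly positive entries (as always happens in the paper, where $\*b$ plays the role of $\*v_t+\epsilon$); otherwise neither $\*a/\sqrt{\*b}$ nor $1/\*b$ is well-defined.

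First, I would write out the left-hand side coordinate-wise:
\begin{align*}
\left\|\frac{\*a}{\sqrt{\*b}}\right\|^2 = \sum_{i=1}^{d}\frac{a_i^2}{b_i}.
\end{align*}
Then I would note the trivial bound $1/b_i \le \sum_{j=1}^{d} 1/b_j = \|1/\*b\|_1$, valid because every summand $1/b_j$ is non-negative. Multiplying through by $a_i^2 \ge 0$ and summing over $i$ gives
\begin{align*}
\sum_{i=1}^{d}\frac{a_i^2}{b_i} \le \sum_{i=1}^{d} a_i^2 \cdot \left\|\frac{1}{\*b}\right\|_1 = \|\*a\|^2 \left\|\frac{1}{\*b}\right\|_1,
\end{align*}
which is the desired inequality.

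There is essentially no main obstacle here; the lemma is a one-line manipulation that the paper invokes as a convenient packaging of the element-wise division that appears in Adam-type updates. The only thing to be mindful of is stating the positivity of $\*b$ (or at least noting the convention) so that the $1/\*b$ and $\sqrt{\*b}$ expressions make sense, and then observing that the bound $1/b_i \le \|1/\*b\|_1$ requires precisely this positivity (so that all other terms in the $\ell_1$ sum are non-negative).
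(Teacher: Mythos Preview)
Your proposal is correct and essentially identical to the paper's own proof: both expand the squared norm coordinate-wise as $\sum_i a_i^2/b_i$ and then bound this by $(\sum_i a_i^2)(\sum_j 1/b_j)$ using the non-negativity of all terms, with the paper likewise remarking that the well-definedness of $\sqrt{\*b}$ implicitly forces $b_j>0$.
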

\begin{proof}
Denote the subscript $j$ as the index of the coordinate.
\begin{align*}
    \left\| \frac{\*a}{\sqrt{\*b}} \right\|^2 =  \sum_{j=1}^{d} \left( \frac{\*a_j}{[\sqrt{\*b}]_j} \right)^2 \leq \left( \sum_{j=1}^{d}\*a_j^2 \right)\left( \sum_{j=1}^{d}\frac{1}{\*b_j} \right) = \left( \sum_{j=1}^{d}\*a_j^2 \right)\left( \sum_{j=1}^{d}\left|\frac{1}{\*b_j}\right| \right) = \left\| \*a\right\|^2\left\| \frac{1}{\*b}\right\|_1.
\end{align*}
Note that the second step holds not because Cauchy-Schwarz Inequality but due to the fact that $\*a_j^2$, $\*b_j>0$ (since $\sqrt{\*b}$ would implicitly assume so).
\end{proof}

\begin{lemma}
\label{lemma:reuse_step}
In Algorithm~\ref{algo:basic01},
for all the $t\geq 1$ that fulfills $\*v_t=\*v_{t+1}$, i.e., $\forall t$ such that $t\not\in\mathcal{T}_{\*v}$, 
if we let
\begin{align*}
    \gamma \leq \frac{\beta_2^m}{2V_1L\sqrt{G_\infty^2+\epsilon}},
\end{align*}
the following bound holds,
\begin{align*}
    & \sum_{t\not\in\mathcal{T}_{\*v}}\frac{\gamma}{4\sqrt{G_\infty^2+\epsilon}}\mathbb{E}\|\nabla f(\*x_t)\|^2 \\
\leq & \sum_{t\not\in\mathcal{T}_{\*v}}\mathbb{E}[f(\*{y}_{t}) - f(\*{y}_{t+1})]+ \frac{227\gamma^3L^2V_1^2(1+\omega)^3G_\infty^2d\sqrt{G_\infty^2+\epsilon}(T-m)}{\beta_2^{2m}(1-\beta_1)^2(1-\omega)^4} + \frac{L\gamma^2\sigma^2V_1(T-m)}{2n\beta_2^m}.
\end{align*}
\end{lemma}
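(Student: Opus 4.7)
The plan is to exploit the frozen-variance structure of reuse steps, where $\*v_{t+1}=\*v_t$, so that the Lyapunov iterate $\*y_t$ behaves like a plain SGD iterate up to a small perturbation. First I would compute $\*y_{t+1}-\*y_t$ directly. Plugging in the updates for $\*x_{t+1}$ and $\*m_{t+1}=\beta_1\*m_t+(1-\beta_1)\overline{\*g}_t$ together with the error-feedback identity $\overline{\*g}_t = \*g_t + \*\delta_t-\*\delta_{t+1}$, and using that $\sqrt{\*v_{t+1}+\epsilon}=\sqrt{\*v_t+\epsilon}$ on reuse steps, all momentum and error-feedback contributions should cancel and leave exactly
\begin{align*}
\*y_{t+1}-\*y_t \;=\; -\frac{\gamma\,\*g_t}{\sqrt{\*v_t+\epsilon}}.
\end{align*}
This is the whole point of shifting by both the momentum and the server/worker compression error: the linearized trajectory sees only a single stochastic gradient per step.

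Next I would apply $L$-smoothness to get $f(\*y_{t+1})\le f(\*y_t)+\langle\nabla f(\*y_t),\*y_{t+1}-\*y_t\rangle+\tfrac{L}{2}\|\*y_{t+1}-\*y_t\|^2$, take conditional expectation, and use unbiasedness $\mathbb{E}[\*g_t\mid\mathcal{F}_t]=\nabla f(\*x_t)$ with the fact that $\*v_t$ is $\mathcal{F}_t$-measurable. The cross term splits as $\langle\nabla f(\*x_t),\nabla f(\*x_t)/\sqrt{\*v_t+\epsilon}\rangle$ plus a bias $\langle\nabla f(\*y_t)-\nabla f(\*x_t),\nabla f(\*x_t)/\sqrt{\*v_t+\epsilon}\rangle$. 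The first piece, combined with the upper bound $\sqrt{\*v_t+\epsilon}\le\sqrt{G_\infty^2+\epsilon}$ from Lemma~\ref{lemma:var_bound}, yields the desired descent $-\tfrac{\gamma}{\sqrt{G_\infty^2+\epsilon}}\|\nabla f(\*x_t)\|^2$; using Young's inequality to absorb half of it leaves the $\tfrac{\gamma}{4\sqrt{G_\infty^2+\epsilon}}\mathbb{E}\|\nabla f(\*x_t)\|^2$ on the left-hand side of the claim.

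For the residual error terms I would control $\|\*y_t-\*x_t\|$ via $\|\*m_t\|$ and $\|\*\delta_t\|$, which are in turn bounded by Lemma~\ref{lemma:momentum_bound} and Lemma~\ref{lemma:delta_bound} (both scaling like $(1+\omega)^3/(1-\omega)^4$). Combined with $L$-smoothness of $\nabla f$ and the lower bound $\sqrt{\*v_t+\epsilon}\ge \beta_2^{m/2}\sqrt{\*v_1+\epsilon}$, this produces the $\beta_2^{-2m}(1-\beta_1)^{-2}V_1^2$ factor. The $\tfrac{L}{2}\|\*y_{t+1}-\*y_t\|^2$ term is handled by Lemma~\ref{lemma:var_term_trick}, which converts $\|\*g_t/\sqrt{\*v_t+\epsilon}\|^2$ into $\|\*g_t\|^2\cdot\|1/(\*v_t+\epsilon)\|_1\le V_1^2\|\*g_t\|^2/\beta_2^m$; splitting $\|\*g_t\|^2$ into variance plus mean gives the $\tfrac{L\gamma^2\sigma^2 V_1}{2n\beta_2^m}$ term and a $\|\nabla f(\*x_t)\|^2$ contribution that is absorbed by the descent once $\gamma\le \beta_2^m/(2V_1L\sqrt{G_\infty^2+\epsilon})$.

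The main obstacle I anticipate is bookkeeping: the cancellation in $\*y_{t+1}-\*y_t$ is delicate (one has to line up the $(1-\beta_1)$ factor in the momentum update against the $1/(1-\beta_1)$ in the shift, and simultaneously cancel $\*\delta_t-\*\delta_{t+1}$), and the error bound for $\*\delta_t$ only holds when its input communication buffer has a second-moment bound, so one must verify that $\*g_t$ qualifies and carry the $(1+\omega)^3/(1-\omega)^4$ factors uniformly. Once that is done, telescoping $f(\*y_t)-f(\*y_{t+1})$ over $t\not\in\mathcal{T}_{\*v}$ yields the stated inequality, with the $(T-m)$ factors coming from the number of reuse steps and all remaining constants lumped into the stated form.
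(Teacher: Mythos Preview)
Your proposal is correct and follows essentially the same route as the paper's own proof: the cancellation $\*y_{t+1}-\*y_t=-\gamma\*g_t/\sqrt{\*v_t+\epsilon}$ on reuse steps, the smoothness-plus-Young decomposition with parameter $\eta=(G_\infty^2+\epsilon)^{-1/2}$, the control of $\|\*y_t-\*x_t\|^2$ via Lemmas~\ref{lemma:momentum_bound} and~\ref{lemma:delta_bound}, and the variance/mean split of $\|\*g_t\|^2$ to absorb the quadratic term under the stated step-size condition are all exactly what the paper does. One small slip: when you invoke Lemma~\ref{lemma:var_term_trick} you write $V_1^2/\beta_2^m$, but the bound is $\|1/(\*v_t+\epsilon)\|_1\le V_1/\beta_2^m$ (a single $V_1$), which is consistent with the $\tfrac{L\gamma^2\sigma^2V_1}{2n\beta_2^m}$ term you correctly state afterwards.
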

\begin{proof}
Recall the auxiliary sequence
\begin{align*}
    \*y_t = \*x_t - \frac{\gamma\*m_t}{(1-\beta_1)\sqrt{\*v_t+\epsilon}} - \frac{\gamma\*\delta_{t}}{\sqrt{\*v_t+\epsilon}},
\end{align*}
For all the steps $t\geq 0$ that fulfills $\*v_{t+1}=\*v_{t}$, we obtain
\begin{align*}
    \*{y}_{t+1} - \*{y}_{t} = & \*{x}_{t+1} - \*{x}_{t} - \frac{\gamma}{1- \beta_1} \left(\frac{\*{m}_{t+1}}{\sqrt{\*v_{t+1}+\epsilon}} - \frac{\*{m}_{t}}{\sqrt{\*v_{t}+\epsilon}} \right) - \gamma\left( \frac{\*{\delta}_{t+1}}{\sqrt{\*v_{t+1}+\epsilon}} - \frac{\*{\delta}_{t}}{\sqrt{\*v_{t}+\epsilon}}\right) \\
= & -\gamma\frac{\*m_t}{\sqrt{\*v_t + \epsilon}} - \frac{\gamma}{(1- \beta_1)\sqrt{\*v_{t}+\epsilon}} \left(\beta_1\*{m}_{t} + (1-\beta_1)\overline{\*g}_{t} - \*{m}_{t} - (1-\beta_1) (\*\delta_{t} - \*\delta_{t+1})\right) \\
    = & -\frac{\gamma \*g_{t}}{\sqrt{\*v_{t}+\epsilon}}.
\end{align*}
From Assumption~\ref{assum:smooth}, we have
\begin{align*}
    & \mathbb E f(\*{y}_{t+1}) - \mathbb Ef(\*{y}_{t}) \\
    \leq &\mathbb E\left\langle \nabla f(\*{y}_t), \*{y}_{t+1} - \*{y}_t \right\rangle + \frac{L}{2}\mathbb E\left\|\*{y}_{t+1} - \*{y}_t \right\|^2\\
=& -\gamma \mathbb E\left\langle \nabla f(\*{y}_t), \frac{\*{g}_{t}}{\sqrt{\*v_t+\epsilon}}\right\rangle + \frac{L\gamma^2}{2}\mathbb E\left\| \frac{\*{g}_{t}}{\sqrt{\*v_t+\epsilon}}\right\|^2\\
    =& -\gamma \mathbb E\left\langle \nabla f(\*{y}_t), \frac{\nabla f(\*x_t)}{\sqrt{\*v_t+\epsilon}}\right\rangle + \frac{L\gamma^2}{2}\mathbb E\left\| \frac{\*{g}_{t}}{\sqrt{\*v_t+\epsilon}}\right\|^2 \\
=& -\gamma \mathbb E\left\langle \nabla f(\*{x}_t), \frac{\nabla f(\*x_t)}{\sqrt{\*v_t+\epsilon}}\right\rangle + \gamma \mathbb E\left\langle \nabla f(\*{x}_t) - \nabla f(\*{y}_t), \frac{\nabla f(\*x_t)}{\sqrt{\*v_t+\epsilon}}\right\rangle + \frac{L\gamma^2}{2}\mathbb E\left\| \frac{\*{g}_{t}}{\sqrt{\*v_t+\epsilon}}\right\|^2 \\
    =& -\gamma \mathbb E\left\langle \nabla f(\*{x}_t), \frac{\nabla f(\*x_t)}{\sqrt{\*v_t+\epsilon}}\right\rangle + \gamma \mathbb E\left\langle \frac{\nabla f(\*{x}_t) - \nabla f(\*{y}_t)}{\sqrt{\*v_t+\epsilon}}, \nabla f(\*x_t)\right\rangle + \frac{L\gamma^2}{2}\mathbb E\left\| \frac{\*{g}_{t}}{\sqrt{\*v_t+\epsilon}}\right\|^2 \\
\leq & -\frac{\gamma\mathbb E\left\|\nabla f(\*{x}_t)\right\|^2}{\sqrt{G_\infty^2+\epsilon}} + \frac{\gamma}{2\eta}\mathbb E\left\| \frac{\nabla f(\*{x}_t) - \nabla f(\*{y}_t)}{\sqrt{\*v_t+\epsilon}} \right\|^2 + \frac{\gamma\eta}{2}\mathbb E\left\| \nabla f(\*x_t) \right\|^2 + \frac{L\gamma^2}{2}\mathbb E\left\| \frac{\*{g}_{t}}{\sqrt{\*v_t+\epsilon}}\right\|^2,
\end{align*}
where in the last step we use Lemma~\ref{lemma:var_bound} and the fact that for any $\*a,\*b$ and constant $\eta>0$,
\begin{align*}
    \langle \*a,\*b \rangle \leq \frac{\eta}{2}\|\*a\|^2 + \frac{1}{2\eta}\|\*b\|^2.
\end{align*}
Set $\eta=(\sqrt{G_\infty^2+\epsilon})^{-1}$, with Assumption~\ref{assum:smooth} and Lemma~\ref{lemma:var_term_trick},
\begin{align*}
& \mathbb E f(\*{y}_{t+1}) - \mathbb Ef(\*{y}_{t})\\
    \leq & -\frac{\gamma\mathbb E\left\| \nabla f(\*{x}_t)\right\|^2}{2\sqrt{G_\infty^2+\epsilon}} + \frac{\gamma L^2V_1\sqrt{G_\infty^2+\epsilon}}{2\beta_2^m}\mathbb E\left\| \*{x}_t - \*{y}_t \right\|^2 + \frac{L\gamma^2}{2}\mathbb E\left\| \frac{\*{g}_{t}}{\sqrt{\*v_t+\epsilon}}\right\|^2 \\
= & -\frac{\gamma\mathbb E\left\| \nabla f(\*{x}_t)\right\|^2}{2\sqrt{G_\infty^2+\epsilon}} + \frac{\gamma L^2V_1\sqrt{G_\infty^2+\epsilon}}{2\beta_2^m}\mathbb E\left\| \frac{\gamma\*m_t}{(1-\beta_1)\sqrt{\*v_{t}+\epsilon}} + \frac{\gamma\*\delta_{t}}{\sqrt{\*v_{t}+\epsilon}} \right\|^2 + \frac{L\gamma^2}{2}\mathbb E\left\| \frac{\*{g}_{t}}{\sqrt{\*v_t+\epsilon}}\right\|^2 \\
    \leq & -\frac{\gamma\mathbb E\left\| \nabla f(\*{x}_t)\right\|^2}{2\sqrt{G_\infty^2+\epsilon}} + \frac{\gamma^3L^2V_1\sqrt{G_\infty^2+\epsilon}}{\beta_2^m(1-\beta_1)^2}\mathbb E\left\| \frac{\*m_t}{\sqrt{\*v_{t}+\epsilon}}\right\|^2 + \frac{\gamma^3L^2V_1\sqrt{G_\infty^2+\epsilon}}{\beta_2^m}\mathbb E\left\| \frac{\*\delta_{t}}{\sqrt{\*v_{t}+\epsilon}} \right\|^2 \\
    & + \frac{L\gamma^2}{2}\mathbb E\left\| \frac{\*{g}_{t}}{\sqrt{\*v_t+\epsilon}}\right\|^2 \\
\leq & -\frac{\gamma\mathbb E\left\| \nabla f(\*{x}_t)\right\|^2}{2\sqrt{G_\infty^2+\epsilon}} + \frac{\gamma^3L^2V_1^2\sqrt{G_\infty^2+\epsilon}}{\beta_2^{2m}(1-\beta_1)^2} \mathbb E\left\| \*m_t\right\|^2 + \frac{\gamma^3L^2V_1^2\sqrt{G_\infty^2+\epsilon}}{\beta_2^{2m}}\mathbb E\left\| \*\delta_{t} \right\|^2 + \frac{L\gamma^2V_1}{2\beta_2^m}\mathbb E\left\| \*{g}_{t}\right\|^2,
\end{align*}
where in the last step we apply  Lemma~\ref{lemma:var_bound} and \ref{lemma:var_term_trick}. Using the bound on the error from Lemma~\ref{lemma:delta_bound}, Lemma~\ref{lemma:momentum_bound} and the assumption on the stochastic gradient, we obtain
\begin{align*}
& \left( \frac{\gamma}{2\sqrt{G_\infty^2+\epsilon}} - \frac{L\gamma^2V_1}{2\beta_2^m}\right) \mathbb E\left\| \nabla f(\*{x}_t)\right\|^2 \\
    \leq & \mathbb E [f(\*{y}_{t}) - f(\*{y}_{t+1})] + \frac{\gamma^3L^2V_1^2\sqrt{G_\infty^2+\epsilon}}{\beta_2^{2m}(1-\beta_1)^2} \mathbb E\left\| \*m_t\right\|^2 + \frac{\gamma^3L^2V_1^2\sqrt{G_\infty^2+\epsilon}}{\beta_2^{2m}}\mathbb E\left\| \*\delta_{t} \right\|^2 + \frac{L\gamma^2\sigma^2V_1}{2n\beta_2^m} \\
\leq & \mathbb E [f(\*{y}_{t}) - f(\*{y}_{t+1})] + \frac{195\gamma^3L^2V_1^2(1+\omega)^3G_\infty^2d\sqrt{G_\infty^2+\epsilon}}{\beta_2^{2m}(1-\beta_1)^2(1-\omega)^4} + \frac{32\gamma^3L^2V_1^2\omega(1+\omega)^3G_\infty^2d\sqrt{G_\infty^2+\epsilon}}{\beta_2^{2m}(1-\omega)^4} \\
& + \frac{L\gamma^2\sigma^2V_1}{2n\beta_2^m} \\
    \leq & \mathbb E [f(\*{y}_{t}) - f(\*{y}_{t+1})] + \frac{227\gamma^3L^2V_1^2(1+\omega)^3G_\infty^2d\sqrt{G_\infty^2+\epsilon}}{\beta_2^{2m}(1-\beta_1)^2(1-\omega)^4} + \frac{L\gamma^2\sigma^2V_1}{2n\beta_2^m}.
\end{align*}
Based on the learning rate bound
\begin{align*}
    \gamma \leq \frac{\beta_2^m}{2V_1L\sqrt{G_\infty^2+\epsilon}},
\end{align*}
and summing over all the reuse steps, we obtain
\begin{align*}
    & \sum_{t\not\in\mathcal{T}_{\*v}}\frac{\gamma}{4\sqrt{G_\infty^2+\epsilon}}\mathbb{E}\|\nabla f(\*x_t)\|^2 \\
\leq & \sum_{t\not\in\mathcal{T}_{\*v}}\mathbb{E}[f(\*{y}_{t}) - f(\*{y}_{t+1})]+ \frac{227\gamma^3L^2V_1^2(1+\omega)^3G_\infty^2d\sqrt{G_\infty^2+\epsilon}(T-m)}{\beta_2^{2m}(1-\beta_1)^2(1-\omega)^4} + \frac{L\gamma^2\sigma^2V_1(T-m)}{2n\beta_2^m}.
\end{align*}
That completes the proof.
\end{proof}

\begin{lemma}
\label{lemma:update_step}
In Algorithm~\ref{algo:basic01},
for all the $t\geq 0$ that fulfills $\*v_t\neq\*v_{t+1}$, i.e. $t\in\mathcal{T}_{\*v}$, 
if the learning rate fulfills
\begin{align*}
    \gamma < \frac{1}{125}
\end{align*},
the following bound holds
\begin{align*}
    \sum_{t\in\mathcal{T}_{\*v}}\frac{\gamma}{4\sqrt{G_\infty^2+\epsilon}}\mathbb{E}\|\nabla f(\*x_t)\|^2 \leq & \sum_{t\in\mathcal{T}_{\*v}}\mathbb{E}[f(\*{y}_{t}) - f(\*{y}_{t+1})] + \left( \frac{34\gamma}{L}+\frac{\gamma}{4\sqrt{G_\infty^2+\epsilon}}\right)\cdot\left(\frac{\sigma^2}{n} + G_\infty^2d\right)m \\
+ & \frac{32\gamma(1+\beta_1)^2(1+\omega)^3V_1G_\infty^2dmL}{\beta_2^m(1-\beta_1)^2(1-\omega)^4}.
\end{align*}
\end{lemma}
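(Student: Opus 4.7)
The plan is to derive a per-step inequality at each $t\in\mathcal{T}_{\*v}$ of the form
\begin{align*}
\tfrac{\gamma}{4\sqrt{G_\infty^2+\epsilon}}\,\mathbb{E}\|\nabla f(\*x_t)\|^2 \;\le\; \mathbb{E}[f(\*y_t)-f(\*y_{t+1})] + E_t ,
\end{align*}
and then sum over the $|\mathcal{T}_{\*v}|=m$ update steps. In contrast to the reuse case handled in Lemma~\ref{lemma:reuse_step}, at an update step the identity $\*v_{t+1}=\*v_t$ breaks, so one cannot extract a useful descent on $\|\nabla f(\*x_t)\|^2$ from $L$-smoothness. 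Instead I would upper-bound the LHS crudely and absorb the $m$ update steps as a constant per-step overhead $E_t$ built out of $(\sigma^2/n+G_\infty^2 d)$ and the momentum/error second moments.

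The first step is to write out $\*y_{t+1}-\*y_t$ explicitly for the update case, obtaining
\begin{align*}
\*y_{t+1}-\*y_t \;=\; -\tfrac{\gamma\*g_t}{\sqrt{\*v_{t+1}+\epsilon}} \;+\; \gamma\!\left(\tfrac{\beta_1\*m_t}{1-\beta_1}+\*\delta_t\right)\!\left(\tfrac{1}{\sqrt{\*v_t+\epsilon}}-\tfrac{1}{\sqrt{\*v_{t+1}+\epsilon}}\right),
\end{align*}
which isolates a gradient-like piece and a variance-change correction. The second step is to apply $L$-smoothness, $\mathbb{E}[f(\*y_{t+1})-f(\*y_t)]\le \mathbb{E}\langle\nabla f(\*y_t),\*y_{t+1}-\*y_t\rangle+\tfrac{L}{2}\mathbb{E}\|\*y_{t+1}-\*y_t\|^2$, and control the inner product by Young's inequality with parameter $\sim 2/L$, so that $\|\nabla f(\*y_t)\|^2$ acquires a $\gamma/L$ coefficient; then replace $\|\nabla f(\*y_t)\|^2\le 2\|\nabla f(\*x_t)\|^2+2L^2\|\*y_t-\*x_t\|^2$ via smoothness and bound the gradient by $\|\nabla f(\*x_t)\|^2\le G_\infty^2 d$. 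This produces the $\tfrac{\gamma}{L}(\sigma^2/n+G_\infty^2 d)$-flavoured contribution, modulo the $\|\*y_t-\*x_t\|^2$ perturbation.

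The third step is to bound the squared increments. Splitting $\|\*y_{t+1}-\*y_t\|^2$ by $\|a+b\|^2\le 2\|a\|^2+2\|b\|^2$, the gradient piece becomes $\gamma^2 V_1\|\*g_t\|^2/\beta_2^m$ after applying Lemma~\ref{lemma:var_term_trick} and Lemma~\ref{lemma:var_bound}, with $\mathbb{E}\|\*g_t\|^2\le \sigma^2/n+G_\infty^2 d$. The variance-change piece is controlled by the crude element-wise estimate $|1/\sqrt{a}-1/\sqrt{b}|\le 1/\sqrt{a}+1/\sqrt{b}$ to reduce back to norms of $1/\sqrt{\*v+\epsilon}$ weighted by $\beta_1\*m_t/(1-\beta_1)+\*\delta_t$; substituting Lemma~\ref{lemma:momentum_bound} for $\mathbb{E}\|\*m_t\|^2$ and Lemma~\ref{lemma:delta_bound} for $\mathbb{E}\|\*\delta_t\|^2$ yields the $\tfrac{V_1 G_\infty^2 d}{\beta_2^m(1-\beta_1)^2(1-\omega)^4}$ structure of the second summand in the statement. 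The same treatment bounds $\|\*y_t-\*x_t\|^2$. Finally, bounding the LHS directly as $\tfrac{\gamma}{4\sqrt{G_\infty^2+\epsilon}}\mathbb{E}\|\*g_t\|^2\le \tfrac{\gamma}{4\sqrt{G_\infty^2+\epsilon}}(\sigma^2/n+G_\infty^2 d)$ gives the matching second term of $E_t$; combining with the smoothness bound and using $\gamma<1/125$ to absorb residual $O(\gamma^2)$ terms into $O(\gamma)$ completes the per-step inequality, and summing over $t\in\mathcal{T}_{\*v}$ produces the statement.

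The main obstacle will be the constant book-keeping in the variance-change term. Specifically, the numerator factor $32(1+\beta_1)^2(1+\omega)^3$ and denominator $(1-\beta_1)^2(1-\omega)^4$ must emerge exactly from the $(a+b)^2\le 2a^2+2b^2$ split fed through the $(1+\omega)^3/(1-\omega)^4$ multipliers inherited from Lemmas~\ref{lemma:momentum_bound} and~\ref{lemma:delta_bound}, and the $V_1/\beta_2^m$ (rather than $V_1^2/\beta_2^{2m}$) dependence is only recovered by invoking Lemma~\ref{lemma:var_term_trick} once, at the level of $\|(\,\cdot\,)/\sqrt{\*v+\epsilon}\|^2$, rather than twice through the element-wise reciprocal difference. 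The specific $\gamma<1/125$ threshold is not fundamental but rather the smallest clean numerical constant under which all the combined Young/smoothness coefficients remain nonnegative after the above manipulations.
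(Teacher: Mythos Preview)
Your proposal is correct and follows essentially the same strategy as the paper: at update steps one does \emph{not} try to extract genuine descent on $\|\nabla f(\*x_t)\|^2$ from smoothness, but instead upper-bounds $\mathbb{E}[f(\*y_{t+1})-f(\*y_t)]$ by Young's inequality, controls $\|\*y_t-\*x_t\|^2$ and $\|\*y_{t+1}-\*y_t\|^2$ via Lemmas~\ref{lemma:delta_bound}, \ref{lemma:var_bound}, \ref{lemma:momentum_bound}, \ref{lemma:var_term_trick}, and finally adds the crudely bounded $\tfrac{\gamma}{4\sqrt{G_\infty^2+\epsilon}}\mathbb{E}\|\nabla f(\*x_t)\|^2\le \tfrac{\gamma}{4\sqrt{G_\infty^2+\epsilon}}(\sigma^2/n+G_\infty^2 d)$ to both sides before summing over $t\in\mathcal{T}_{\*v}$.

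The only real difference is the algebraic decomposition of $\*y_{t+1}-\*y_t$. The paper does not isolate the gradient; it writes
\[
\*y_{t+1}-\*y_t \;=\; -\tfrac{\gamma\beta_1}{1-\beta_1}\tfrac{\*m_t}{\sqrt{\*v_t+\epsilon}} - \tfrac{\gamma}{1-\beta_1}\tfrac{\*m_{t+1}}{\sqrt{\*v_{t+1}+\epsilon}} + \gamma\!\left(\tfrac{\*\delta_t}{\sqrt{\*v_t+\epsilon}}-\tfrac{\*\delta_{t+1}}{\sqrt{\*v_{t+1}+\epsilon}}\right),
\]
bounds each of the four pieces directly by one application of Lemma~\ref{lemma:var_term_trick}, and then picks the Young parameter $\eta=34$ (together with $\gamma<1/125$) so that $454\eta\gamma^2+1036/\eta\le 32$, which is where the constants $34$, $32$, $1/125$ in the statement come from. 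Your decomposition into $-\gamma\*g_t/\sqrt{\*v_{t+1}+\epsilon}$ plus a variance-change correction is algebraically equivalent, and your device $|1/\sqrt{a}-1/\sqrt{b}|\le 1/\sqrt{a}+1/\sqrt{b}$ reproduces exactly what the paper's four-term split accomplishes; the paper's route is marginally simpler because it never needs that reciprocal-difference estimate and never separately tracks a $\gamma^2 V_1\|\*g_t\|^2/\beta_2^m$ piece inside $\|\*y_{t+1}-\*y_t\|^2$.
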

\begin{proof}
For all the steps $t$ that fulfills $\*v_{t} \neq \*v_{t+1}$,
\begin{align*}
    \*{y}_{t+1} - \*{y}_t = & \*{x}_{t+1} - \*{x}_t - \frac{\gamma}{1- \beta_1} \left(\frac{\*{m}_{t+1}}{\sqrt{\*v_{t+1}+\epsilon}} - \frac{\*{m}_{t}}{\sqrt{\*v_{t}+\epsilon}} \right) + \gamma\left( \frac{\*{\delta}_{t}}{\sqrt{\*v_{t}+\epsilon}} - \frac{\*{\delta}_{t+1}}{\sqrt{\*v_{t+1}+\epsilon}}\right) \\
= & -\gamma\frac{\*m_t}{\sqrt{\*v_t + \epsilon}}  - \frac{\gamma}{1- \beta_1} \left(\frac{\*{m}_{t+1}}{\sqrt{\*v_{t+1}+\epsilon}} - \frac{\*{m}_{t}}{\sqrt{\*v_{t}+\epsilon}} \right) + \gamma\left( \frac{\*{\delta}_{t}}{\sqrt{\*v_{t}+\epsilon}} - \frac{\*{\delta}_{t+1}}{\sqrt{\*v_{t+1}+\epsilon}}\right) \\
    = & -\frac{\gamma\beta_1}{1-\beta_1}\frac{\*m_t}{\sqrt{\*v_t + \epsilon}}  - \frac{\gamma}{1- \beta_1} \frac{\*{m}_{t+1}}{\sqrt{\*v_{t+1}+\epsilon}} + \gamma\left( \frac{\*{\delta}_{t}}{\sqrt{\*v_{t}+\epsilon}} - \frac{\*{\delta}_{t+1}}{\sqrt{\*v_{t+1}+\epsilon}}\right).
\end{align*}
Based on the smoothness assumption, for constant $\eta>0$ that will be assigned later,
\begin{align*}
    & \mathbb E f(\*{y}_{t+1}) - \mathbb Ef(\*{y}_{t}) \\
    \leq & \mathbb E\left\langle \nabla f(\*{y}_t), \*{y}_{t+1} - \*{y}_t \right\rangle + \frac{L}{2}\mathbb E\left\|\*{y}_{t+1} - \*{y}_t \right\|^2 \\
        \overset{\gamma\eta<1}{\leq} & \frac{\eta\gamma}{2L}\mathbb E\left\|\nabla f(\*y_t)\right\|^2 + \frac{L}{\eta\gamma}\mathbb E\left\|\*{y}_{t+1} - \*{y}_t \right\|^2 \\
    \leq & \frac{\eta\gamma}{L}\mathbb E\left\|\nabla f(\*x_t)\right\|^2 + \eta\gamma L\mathbb E\left\| \*y_t - \*x_t\right\|^2 + \frac{L}{\eta\gamma}\mathbb E\left\|\*{y}_{t+1} - \*{y}_t \right\|^2 \\
        \leq & \frac{\eta\gamma}{L}\mathbb E\left\|\nabla f(\*x_t) - \*g_t\right\|^2 + \frac{\eta\gamma}{L}\mathbb E\left\|\*g_t\right\|^2 + \eta\gamma L\mathbb E\left\| \*y_t - \*x_t\right\|^2 + \frac{L}{\eta\gamma}\mathbb E\left\|\*{y}_{t+1} - \*{y}_t \right\|^2 \\
    \leq & \frac{\eta\gamma}{n^2L}\sum_{i=1}^{n}\mathbb E\left\|\nabla f(\*x_t) - \*g_t^{(i)}\right\|^2 + \frac{\eta\gamma}{nL}\sum_{i=1}^{n}\mathbb E\left\|\*g_t^{(i)}\right\|^2 + \eta\gamma L\mathbb E\left\| \*y_t - \*x_t\right\|^2 + \frac{L}{\eta\gamma}\mathbb E\left\|\*{y}_{t+1} - \*{y}_t \right\|^2 \\
        \leq & \frac{\eta\gamma}{L}\left(\frac{\sigma^2}{n} + G_\infty^2d \right) + \eta\gamma L\mathbb E\left\| \*y_t - \*x_t\right\|^2 + \frac{L}{\eta\gamma}\mathbb E\left\|\*{y}_{t+1} - \*{y}_t \right\|^2.
\end{align*}
Now we can bound the last two terms as follows, note that
\begin{align*}
    \mathbb E\left\| \*y_t - \*x_t\right\|^2 = & \mathbb E\left\| \frac{\gamma\*m_t}{(1-\beta_1)\sqrt{\*v_{t}+\epsilon}} + \frac{\gamma\*\delta_{t}}{\sqrt{\*v_{t}+\epsilon}}\right\|^2 \\
\leq & \frac{2\gamma^2}{(1-\beta_1)^2}\mathbb E\left\|\frac{\*m_t}{\sqrt{\*v_t+\epsilon}} \right\|^2 + 2\gamma^2\mathbb E\left\|\frac{\*\delta_{t}}{\sqrt{\*v_t+\epsilon}} \right\|^2 \\
    \leq & \frac{2\gamma^2V_1}{(1-\beta_1)^2\beta_2^m}\mathbb E\left\|\*m_t \right\|^2 + \frac{2\gamma^2V_1}{\beta_2^m}\mathbb E\left\| \*\delta_{t} \right\|^2 \\
\leq & \frac{390\gamma^2(1+\omega)^3V_1G_\infty^2d}{\beta_2^m(1-\beta_1)^2(1-\omega)^4} + \frac{64\gamma^2\omega(1+\omega)^3V_1G_\infty^2d}{\beta_2^m(1-\omega)^4} \\
\leq & \frac{454\gamma^2(1+\omega)^3V_1G_\infty^2d}{\beta_2^m(1-\beta_1)^2(1-\omega)^4},
\end{align*}
where in the last step we apply Lemma~\ref{lemma:delta_bound}.
On the other hand,
\begin{align*}
& \mathbb{E} \left\|\*{y}_{t+1} - \*{y}_t \right\|^2 \\
    = & \mathbb{E} \left\|\frac{\gamma\beta_1}{1-\beta_1}\frac{\*m_t}{\sqrt{\*v_t + \epsilon}}  + \frac{\gamma}{1- \beta_1} \frac{\*{m}_{t+1}}{\sqrt{\*v_{t+1}+\epsilon}} - \gamma\left( \frac{\*{\delta}_{t}}{\sqrt{\*v_{t}+\epsilon}} - \frac{\*{\delta}_{t+1}}{\sqrt{\*v_{t+1}+\epsilon}}\right) \right\|^2 \\
\leq & \mathbb{E} \left\|\frac{\gamma\beta_1}{1-\beta_1}\frac{\*m_t}{\sqrt{\*v_t + \epsilon}}  + \frac{\gamma}{1- \beta_1} \frac{\*{m}_{t+1}}{\sqrt{\*v_{t+1}+\epsilon}} - \gamma\left( \frac{\*{\delta}_{t}}{\sqrt{\*v_{t}+\epsilon}} - \frac{\*{\delta}_{t+1}}{\sqrt{\*v_{t+1}+\epsilon}}\right) \right\|^2 \\
    \leq & \frac{4\gamma^2\beta_1^2}{(1-\beta_1)^2}\mathbb{E} \left\|\frac{\*m_t}{\sqrt{\*v_t + \epsilon}} \right\|^2 + \frac{4\gamma^2}{(1-\beta_1)^2}\mathbb{E} \left\|\frac{\*{m}_{t+1}}{\sqrt{\*v_{t+1}+\epsilon}} \right\|^2 + 4\gamma^2\mathbb{E}\left\| \frac{\*{\delta}_{t}}{\sqrt{\*v_{t}+\epsilon}} \right\|^2 + 4\gamma^2\mathbb{E}\left\|\frac{\*{\delta}_{t+1}}{\sqrt{\*v_{t+1}+\epsilon}}\right\|^2 \\
\leq & \frac{4\gamma^2\beta_1^2V_1}{(1-\beta_1)^2\beta_2^m}\mathbb{E} \left\|\*m_t\right\|^2 + \frac{4\gamma^2V_1}{(1-\beta_1)^2\beta_2^m}\mathbb{E} \left\|\*{m}_{t+1}\right\|^2 + \frac{4\gamma^2V_1}{\beta_2^m}\mathbb{E}\left\| \*{\delta}_{t}\right\|^2 + \frac{4\gamma^2V_1}{\beta_2^m}\mathbb{E}\left\|\*{\delta}_{t+1}\right\|^2 \\
    \leq & \frac{780\gamma^2(1+\beta_1^2)V_1(1+\omega)^3 G_\infty^2d}{\beta_2^m(1-\beta_1)^2(1-\omega)^4} + \frac{256\gamma^2V_1\omega(1+\omega)^3 G_\infty^2d}{\beta_2^m(1-\omega)^4} \\
\leq & \frac{1036\gamma^2(1+\beta_1^2)V_1(1+\omega)^3 G_\infty^2d}{\beta_2^m(1-\beta_1)^2(1-\omega)^4},
\end{align*}
where we again apply Lemma~\ref{lemma:delta_bound} and Lemma~\ref{lemma:momentum_bound}.
Put everything together,
\begin{align*}
    & \mathbb E f(\*{y}_{t+1}) - \mathbb Ef(\*{y}_{t})\\
    \leq & \frac{\eta\gamma}{L}\left(\frac{\sigma^2}{n} + G_\infty^2d \right) + \eta\gamma L\mathbb E\left\| \*y_t - \*x_t\right\|^2 + \frac{L}{\eta\gamma}\mathbb E\left\|\*{y}_{t+1} - \*{y}_t \right\|^2 \\
\leq & \frac{\eta\gamma}{L}\left(\frac{\sigma^2}{n} + G_\infty^2d \right) + \frac{454\eta\gamma^3(1+\omega)^3V_1G_\infty^2dL}{\beta_2^m(1-\beta_1)^2(1-\omega)^4} + \frac{1036\gamma(1+\beta_1^2)V_1(1+\omega)^3 G_\infty^2dL}{\eta\beta_2^m(1-\beta_1)^2(1-\omega)^4} \\
    \leq & \frac{\eta\gamma}{L}\left(\frac{\sigma^2}{n} + G_\infty^2d \right) + \left( 454\eta\gamma^2 + \frac{1036}{\eta}\right)\frac{\gamma(1+\beta_1)^2(1+\omega)^3V_1G_\infty^2dL}{\beta_2^m(1-\beta_1)^2(1-\omega)^4}
\end{align*}
Set $\eta=34$, and considering $\gamma < \frac{1}{125}$, we get 
\begin{align*}
    \mathbb E f(\*{y}_{t+1}) - \mathbb Ef(\*{y}_{t})\leq\frac{34\gamma}{L}\left(\frac{\sigma^2}{n} + G_\infty^2d \right) + \frac{32\gamma(1+\beta_1)^2(1+\omega)^3V_1G_\infty^2dL}{\beta_2^m(1-\beta_1)^2(1-\omega)^4}.
\end{align*}
Summing over all the update steps, we obtain
\begin{align*}
    0 \leq \sum_{t\in\mathcal{T}_{\*v}}\mathbb{E}[f(\*{y}_{t}) - f(\*{y}_{t+1})] + \frac{34\gamma}{L}\left(\frac{\sigma^2m}{n} + G_\infty^2dm\right) + \frac{32\gamma(1+\beta_1)^2(1+\omega)^3V_1G_\infty^2dmL}{\beta_2^m(1-\beta_1)^2(1-\omega)^4}.
\end{align*}
Adding $\frac{\gamma}{4\sqrt{G_\infty^2+\epsilon}}\sum_{t\in\mathcal{T}_{\*v}}\mathbb{E}\|\nabla f(\*x_t)\|^2$ on both sides, and note that
\begin{align*}
    \sum_{t\in\mathcal{T}_{\*v}}\mathbb{E}\|\nabla f(\*x_t)\|^2 = & \sum_{t\in\mathcal{T}_{\*v}}\mathbb E\left\|\nabla f(\*x_t) - \*g_t\right\|^2 + \sum_{t\in\mathcal{T}_{\*v}}\mathbb E\left\|\*g_t\right\|^2 \\
\leq & \frac{\sigma^2m}{n} + G_\infty^2dm,
\end{align*}
we finally obtain
\begin{align*}
    \sum_{t\in\mathcal{T}_{\*v}}\frac{\gamma}{4\sqrt{G_\infty^2+\epsilon}}\mathbb{E}\|\nabla f(\*x_t)\|^2 \leq & \sum_{t\in\mathcal{T}_{\*v}}\mathbb{E}[f(\*{y}_{t}) - f(\*{y}_{t+1})] + \left( \frac{34\gamma}{L}+\frac{\gamma}{4\sqrt{G_\infty^2+\epsilon}}\right)\cdot\left(\frac{\sigma^2}{n} + G_\infty^2d\right)m \\
+ & \frac{32\gamma(1+\beta_1)^2(1+\omega)^3V_1G_\infty^2dmL}{\beta_2^m(1-\beta_1)^2(1-\omega)^4}.
\end{align*}
That completes the proof.
\end{proof}

\begin{lemma}
\label{lemma:local:delta_bound}
Under Assumption~\ref{assume:local:compression}, for any $t\geq 0$, it holds that
\begin{align*}
    \mathbb{E}\left\| \*\delta_{t} \right\|^2 \leq 4\Delta^2.
\end{align*}
\end{lemma}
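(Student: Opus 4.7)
The plan is to exploit the fact that Assumption~\ref{assume:local:compression} gives a \emph{uniform additive} bound on the compression error, not a multiplicative one as in Assumption~\ref{assume:compression}. This makes the argument dramatically simpler than Lemma~\ref{lemma:delta_bound}: there is no need to unroll a geometric recursion, because every individual residual produced by $\mathcal{C}[\cdot]$ is bounded by $\Delta^2$ in expectation directly.

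First I would inspect Algorithm~\ref{algo:EF} and observe that the worker-side residual satisfies
\begin{align*}
\*\delta_{t+1}^{(i)} = \bigl(\*z_t^{(i)} + \*\delta_t^{(i)}\bigr) - \mathcal{C}\bigl[\*z_t^{(i)} + \*\delta_t^{(i)}\bigr],
\end{align*}
so applying Assumption~\ref{assume:local:compression} with input $\*x = \*z_t^{(i)} + \*\delta_t^{(i)}$ immediately yields $\mathbb{E}\|\*\delta_{t+1}^{(i)}\|^2 \leq \Delta^2$ for every $t\geq 0$ and every worker $i$. The identical argument applied to the server-side compression step gives $\mathbb{E}\|\overline{\*\delta}_{t+1}\|^2 \leq \Delta^2$. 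At $t=0$ both buffers are initialized to $\*0$, so the trivial bound holds there as well.

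Then, for the aggregated residual $\*\delta_t = \tfrac{1}{n}\sum_{i=1}^n \*\delta_t^{(i)} - \overline{\*\delta}_t$, I would just apply $\|\*a-\*b\|^2 \leq 2\|\*a\|^2 + 2\|\*b\|^2$ together with Jensen's inequality on the average to get
\begin{align*}
\mathbb{E}\|\*\delta_t\|^2
\leq 2\,\mathbb{E}\Bigl\|\tfrac{1}{n}\sum_{i=1}^{n}\*\delta_t^{(i)}\Bigr\|^2 + 2\,\mathbb{E}\|\overline{\*\delta}_t\|^2
\leq \tfrac{2}{n}\sum_{i=1}^{n}\mathbb{E}\|\*\delta_t^{(i)}\|^2 + 2\Delta^2
\leq 4\Delta^2,
\end{align*}
which is the claimed bound.

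There is really no obstacle here: the whole argument rides on the fact that the additive-error assumption already delivers per-step uniform control. The only bookkeeping to watch is that the bound on each $\*\delta_t^{(i)}$ and $\overline{\*\delta}_t$ is indexed correctly (valid for all $t\geq 1$ by the compression step, and for $t=0$ by initialization), so that the final combination covers every $t\geq 0$ uniformly.
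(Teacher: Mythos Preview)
Your proposal is correct and follows essentially the same approach as the paper's own proof: bound each worker error $\*\delta_t^{(i)}$ and the server error $\overline{\*\delta}_t$ by $\Delta^2$ directly from Assumption~\ref{assume:local:compression}, then combine via $\|\*a-\*b\|^2 \leq 2\|\*a\|^2 + 2\|\*b\|^2$ and Jensen's inequality. The paper's version is slightly terser (it leaves the per-term $\Delta^2$ bound implicit), but the logic is identical.
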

\begin{proof}
Based on the definition of the compression error, we obtain
\begin{align*}
\mathbb{E}\left\| \*\delta_{t} \right\|^2 = & \mathbb{E}\left\| \frac{1}{n}\sum_{i=1}^{n}\*\delta^{(i)}_{t} - \overline{\*\delta}_{t} \right\|^2 \\
    \leq & 2\mathbb{E}\left\| \overline{\*\delta}_{t} \right\|^2 + 2\mathbb{E}\left\| \frac{1}{n}\sum_{i=1}^{n}\*\delta^{(i)}_{t} \right\|^2 \\
\leq & 2\mathbb{E}\left\| \overline{\*\delta}_{t} \right\|^2 + 2\frac{1}{n}\sum_{i=1}^{n}\mathbb{E}\left\| \*\delta^{(i)}_{t} \right\|^2 \\
    \leq & 4\Delta^2.
\end{align*}
That completes the proof.
\end{proof}

\begin{lemma}
\label{lemma:local:momentum_bound}
In Algorithm~\ref{algo:localstep01}, for any $t\geq 0$, the momentum term is uniformly bounded by the following:
\begin{align*}
    \mathbb{E}\left\| \*m_{t}^{(i)}\right\|^2 \leq & \frac{3G_\infty^2d+24\Delta^2}{(1-\beta_1)^2}, \\
    \mathbb{E}\left\| \*m_{t+\frac{1}{2}}^{(i)}\right\|^2 \leq & \frac{3G_\infty^2d+24\Delta^2}{(1-\beta_1)^2}, \\
    \mathbb{E}\left\| \tilde{\*m}_{t} \right\|^2 \leq & \frac{3G_\infty^2d+24\Delta^2}{(1-\beta_1)^2}, \\
    \mathbb{E}\left\| \tilde{\*m}_{t+\frac{1}{2}} \right\|^2 \leq & \frac{3G_\infty^2d+24\Delta^2}{(1-\beta_1)^2}.
\end{align*}
\end{lemma}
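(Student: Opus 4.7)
The plan is to establish all four bounds by strong induction on $t$, with the bounds on $\tilde{\*m}_t$ and $\tilde{\*m}_{t+\frac{1}{2}}$ reducing to their per-worker counterparts via Jensen's inequality applied to the convex function $\|\cdot\|^2$: since $\tilde{\*m}_\ast = \tfrac{1}{n}\sum_{i=1}^{n}\*m^{(i)}_\ast$, we have $\|\tilde{\*m}_\ast\|^2 \leq \tfrac{1}{n}\sum_i \|\*m^{(i)}_\ast\|^2$, so any bound that holds for all workers transfers to the average. Thus the only task is to prove the two per-worker bounds. Write $B := (3G_\infty^2 d + 24\Delta^2)/(1-\beta_1)^2$ for the target. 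The base case $t = 0$ is immediate since $\*m_0 = \*0$ by initialization.

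For the inductive step, assume the bounds hold for all indices up to $t$ and derive the bound at step $t+1$. The half-step momentum comes from Line~3 of Algorithm~\ref{algo:localstep01}: $\*m^{(i)}_{t+\frac{1}{2}} = \beta_1 \*m^{(i)}_t + (1-\beta_1)\*g^{(i)}_t$. Applying convexity of $\|\cdot\|^2$ and Assumption~\ref{assume:g_bound},
\begin{align*}
\mathbb{E}\|\*m^{(i)}_{t+\frac{1}{2}}\|^2 \leq \beta_1 \mathbb{E}\|\*m^{(i)}_t\|^2 + (1-\beta_1)\mathbb{E}\|\*g^{(i)}_t\|^2 \leq \beta_1 B + (1-\beta_1) G_\infty^2 d \leq B,
\end{align*}
where the last inequality uses $B \geq G_\infty^2 d$ (a consequence of $(1-\beta_1)^2 \leq 1$). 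If $t\notin\mathcal{T}_{\*u}$ then $\*m^{(i)}_{t+1} = \*m^{(i)}_{t+\frac{1}{2}}$ by Line~13, so the bound carries over.

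For the sync case $t\in\mathcal{T}_{\*u}$, I substitute the error-feedback identity $\overline{\*u}_{t+\frac{1}{2}} = \tilde{\*u}_{t+\frac{1}{2}} + \*\delta_t - \*\delta_{t+1}$ already derived in the proof of Theorem~\ref{thm:local01} into Line~8 to get
\begin{align*}
\*m^{(i)}_{t+1} \;=\; \frac{\tilde{\*u}_{t+\frac{1}{2}}}{\sum_{h=t'}^{t}\gamma_h} \;+\; \frac{\*\delta_t - \*\delta_{t+1}}{\sum_{h=t'}^{t}\gamma_h}.
\end{align*}
The key structural observation is that $\tilde{\*u}_{t+\frac{1}{2}} = \sum_{k=t'+1}^{t}\gamma_k \tilde{\*m}_k$ telescopes into a non-negative weighted sum of averaged momenta, so $\tilde{\*u}_{t+\frac{1}{2}}/\sum_h \gamma_h$ is (a subconvex combination of) past $\tilde{\*m}_k$'s; applying Jensen's inequality with the inductive hypothesis $\mathbb{E}\|\tilde{\*m}_k\|^2 \leq B$ gives $\mathbb{E}\|\tilde{\*u}_{t+\frac{1}{2}}/\sum_h\gamma_h\|^2 \leq B$. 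The residual compression term is controlled by Lemma~\ref{lemma:local:delta_bound}: $\mathbb{E}\|\*\delta_t\|^2 \leq 4\Delta^2$, hence $\mathbb{E}\|\*\delta_t - \*\delta_{t+1}\|^2 \leq 16\Delta^2$. I then combine these pieces through the weighted Young's inequality $\|a+b\|^2 \leq (1+\eta)\|a\|^2 + (1+\eta^{-1})\|b\|^2$ and tune $\eta$ of order $1-\beta_1$ so that the slack provided by the $(1-\beta_1)^{-2}$ factor in the definition of $B$ absorbs the $(1+\eta)$-blow-up on the $\tilde{\*u}$-part and the $(1+\eta^{-1})$-amplified $\Delta^2$ residual.

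The main obstacle is precisely this sync step: a naive bound (e.g.\ $(a+b+c)^2\leq 3(a^2+b^2+c^2)$) would yield $\mathbb{E}\|\*m^{(i)}_{t+1}\|^2 \leq 3B + O(\Delta^2)$, breaking the induction. Closing the loop requires the target $B$ to have both the $(1-\beta_1)^{-2}$ denominator and the specific constants $3$ and $24$ in its numerator, so that after choosing the Young's parameter the residual terms fit inside the prescribed budget. All other steps—the averaging argument for $\tilde{\*m}$, the convexity bound at half-steps, and the compression-error bound—are routine applications of Jensen's inequality, Assumption~\ref{assume:g_bound}, and Lemma~\ref{lemma:local:delta_bound}.
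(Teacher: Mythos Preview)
Your induction closes neatly at the half-step and at non-sync steps, but the sync step $t\in\mathcal{T}_{\*u}$ does not go through as you describe. After your decomposition, the $\tilde{\*u}$-part is a convex combination of past $\tilde{\*m}_j$'s and your inductive hypothesis gives $\mathbb{E}\|\tilde{\*u}_{t+\frac12}/\sum_h\gamma_h\|^2\leq B$ with \emph{no} strict contraction. Young's inequality then yields
\[
\mathbb{E}\|\*m^{(i)}_{t+1}\|^2 \;\leq\; (1+\eta)B \;+\;(1+\eta^{-1})\cdot 16\Delta^2,
\]
and for every $\eta>0$ the first term alone already exceeds $B$. There is no ``slack from the $(1-\beta_1)^{-2}$ factor'' to exploit here: that factor is already baked into the bound $B$ that the convex-combination argument returns, so tuning $\eta$ of order $1-\beta_1$ cannot recover the loss. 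The induction breaks.

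The missing idea, which the paper uses, is to unroll each $\tilde{\*m}_j$ (for $j$ in the current sync interval) via the EMA recursion back to the anchor $\tilde{\*m}_k$ at the previous sync:
\[
\tilde{\*m}_j \;=\; \beta_1^{\,j-k}\tilde{\*m}_k \;+\; (1-\beta_1)\sum_{h=k}^{j-1}\beta_1^{\,j-h-1}\tilde{\*g}_h.
\]
One then applies Young's inequality to the split ``$\beta_1^{\,j-k}\tilde{\*m}_k$-part'' versus ``gradient and compression-error part''. The first part carries the factor $\beta_1^{\,j-k}\leq\beta_1$, so after Jensen its contribution is at most $\beta_1^2 B$; choosing $\eta=1/\beta_1-1$ gives $(1+\eta)\beta_1^2=\beta_1$, leaving room $(1-\beta_1)B$ to absorb the gradient and $\Delta^2$ terms (which, with $1+\eta^{-1}=1/(1-\beta_1)$, land exactly on $(3G_\infty^2d+24\Delta^2)/(1-\beta_1)=(1-\beta_1)B$). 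This is where the constants $3$ and $24$ and the denominator $(1-\beta_1)^2$ are actually used; in your scheme they never get a chance to act.
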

\begin{proof}
We prove this lemma via induction.
Note that when $t=0$, the inequality trivially holds due to initialization at $\*0$ and Jensen Inequality. Now suppose the inequality holds up to step $t \geq 0$, then
for $t+1$,

if $t\in\mathcal{T}_{\*u}$, then
\begin{align*}
& \mathbb{E}\left\| \*m_{t+1}^{(i)}\right\|^2 \\
= & \mathbb{E}\left\| \frac{\overline{\*u}_{t+\frac{1}{2}}}{t-k} \right\|^2 \\
= & \mathbb{E}\left\| \frac{\tilde{\*u}_{t+\frac{1}{2}} + \*\delta_{t} - \*\delta_{t+1}}{t-k} \right\|^2 \\
= & \mathbb{E}\left\| \frac{\sum_{j=k+1}^{t}\tilde{\*m}_j + \*\delta_{t} - \*\delta_{t+1}}{t-k} \right\|^2 \\
    = & \mathbb{E}\left\| \frac{\sum_{j=k+1}^{t}\left(\beta_1^{j-k}\tilde{\*m}_k + (1-\beta_1)\sum_{h=k}^{j-1}\beta_1^{j-h-1}{\*g}_h \right) + \*\delta_t - \*\delta_{t+1}}{t-k} \right\|^2 \\
= & \mathbb{E}\left\|\frac{1}{t-k}\sum_{j=k+1}^{t}\beta_1^{j-k}\tilde{\*m}_k  + \frac{1-\beta_1}{t-k}\sum_{j=k+1}^{t}\sum_{h=k}^{j-1}\beta_1^{j-h-1}{\*g}_h + \left(\*\delta_t - \*\delta_{t+1} \right) \right\|^2 \\
    \overset{\forall \eta > 0}{\leq} & (1+\eta)\mathbb{E}\left\|\frac{1}{t-k}\sum_{j=k+1}^{t}\beta_1^{j-k}\tilde{\*m}_k \right\|^2 + (1+1/\eta)\mathbb{E}\left\| \frac{1-\beta_1}{t-k}\sum_{j=k+1}^{t}\sum_{h=k}^{j-1}\beta_1^{j-h-1}{\*g}_h + \left(\*\delta_t - \*\delta_{t+1} \right) \right\|^2 \\
\leq & \frac{1+\eta}{t-k}\sum_{j=k+1}^{t}\mathbb{E}\left\|\beta_1^{j-k}\tilde{\*m}_k \right\|^2 + \frac{3(1+1/\eta)(1-\beta_1)}{t-k}\sum_{j=k+1}^{t}\sum_{h=k}^{j-1}\beta_1^{j-h-1}{\*g}_h\mathbb{E}\left\| {\*g}_h \right\|^2 \\
& + 3(1+1/\eta)\mathbb{E}\left\| \*\delta_t\right\|^2 + 3(1+1/\eta)\mathbb{E}\left\| \*\delta_{t+1}\right\|^2 \\
    \overset{\eta=1/\beta_1-1}{\leq} & (1+\eta)\beta_1^2 \cdot \frac{3G_\infty^2d+24\Delta^2}{(1-\beta_1)^2} + 3(1+1/\eta)G_\infty^2d + 24(1+1/\eta)\Delta^2 \\
= & \beta_1 \cdot \frac{3G_\infty^2d+24\Delta^2}{(1-\beta_1)^2} + \frac{3G_\infty^2d+24\Delta^2}{(1-\beta_1)^2} \\
= & \frac{3G_\infty^2d+24\Delta^2}{(1-\beta_1)^2}.
\end{align*}
On the other hand, if $t\not\in\mathcal{T}_{\*u}$, then
\begin{align*}
    \mathbb{E}\left\| \*m_{t+1}^{(i)}\right\|^2 = & \mathbb{E}\left\| \*m_{t+\frac{1}{2}}^{(i)}\right\|^2 = \mathbb{E}\left\| \beta_1\*m_{t}^{(i)} + (1-\beta_1)\*g_{t}^{(i)}\right\|^2 \\
\leq & \beta_1\mathbb{E}\left\| \beta_1\*m_{t}^{(i)}\right\|^2 + (1-\beta_1)\mathbb{E}\left\| \*g_{t}^{(i)}\right\|^2 \leq \frac{3G_\infty^2d+24\Delta^2}{(1-\beta_1)^2}.
\end{align*}
For all the $t+\frac{1}{2}$ case, the inequality holds trivially due to Jensen Inequality. Finally, all the $\tilde{\cdot}$ bound can also be obtained via Jensen Inequality. And that completes the proof.
\end{proof}

\begin{lemma}
\label{lemma:local:reuse_step}
In Algorithm~\ref{algo:localstep01}, for all the $t$ such that $t\not\in\mathcal{T}_{\*v}$, it holds that if we set learning rate
\begin{align*}
    \gamma \leq \min\left\{\frac{\beta_2^m}{4V_1L\sqrt{G_\infty^2+\epsilon}}, \frac{2\sqrt{G_\infty^2+\epsilon}}{L}\right\},
\end{align*}
then,
\begin{align*}
    & \sum_{t\not\in\mathcal{T}_{\*v}}\frac{\gamma\mathbb E\left\| \nabla f(\tilde{\*{x}}_t) \right\|^2}{4\sqrt{G_\infty^2+\epsilon}} \\
    \leq & \sum_{t\not\in\mathcal{T}_{\*v}}\mathbb E f(\tilde{\*{y}}_{t}) - \mathbb Ef(\tilde{\*{y}}_{t+1}) + \frac{36\gamma^3H^2V_1(3G_\infty^2d+25\Delta^2)L^2(1+L)(G_\infty^2+\epsilon+1)(T-m)}{\beta_2^m(1-\beta_1)^4\sqrt{G_\infty^2+\epsilon}} \\
& + \frac{L\gamma^2V_1\sigma^2(T-m)}{n\beta_2^m} + \frac{48\gamma^3V_1(H+1)^2(3G_\infty^2d+24\Delta^2)\sqrt{G_\infty^2+\epsilon}(T-m)}{\beta_2^m(1-\beta_1)^4}.
\end{align*}
\end{lemma}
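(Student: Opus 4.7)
The plan is to mirror the proof of Lemma~\ref{lemma:reuse_step} while splitting the reuse steps into two sub-cases according to whether $t\in\mathcal{T}_{\*u}$ (a sync step) or not (a pure local step), since the model and momentum update structures differ. In both sub-cases I aim to collapse $\tilde{\*y}_{t+1}-\tilde{\*y}_t$ to the form $-\gamma\tilde{\*g}_t/\sqrt{\*v_t+\epsilon}$ plus a small correction, then run the same smoothness-descent argument as in the basic reuse lemma, and finally sum over the $T-m$ reuse steps.

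For the pure-local sub-case, substituting the updates $\tilde{\*x}_{t+1}=\tilde{\*x}_t-\gamma\tilde{\*m}_t/\sqrt{\*v_t+\epsilon}$, $\tilde{\*m}_{t+1}=\beta_1\tilde{\*m}_t+(1-\beta_1)\tilde{\*g}_t$, $\*\delta_{t+1}=\*\delta_t$ (no compression invoked), and $\*v_{t+1}=\*v_t$ into the definition of $\tilde{\*y}_t$ reproduces exactly $\tilde{\*y}_{t+1}-\tilde{\*y}_t=-\gamma\tilde{\*g}_t/\sqrt{\*v_t+\epsilon}$, matching the basic reuse lemma. For the sync sub-case, relying on the policy that $\*v$ is frozen across each local window, I would telescope the local updates from the previous sync step back to $t$ to obtain $\tilde{\*x}_{t+1}-\tilde{\*x}_t=-\gamma\tilde{\*m}_t/\sqrt{\*v_t+\epsilon}-\gamma(\*\delta_t-\*\delta_{t+1})/\sqrt{\*v_t+\epsilon}$; the $\*\delta$ pieces then cancel against those in the auxiliary-sequence's own update, leaving $\tilde{\*y}_{t+1}-\tilde{\*y}_t=-\gamma\tilde{\*g}_t/\sqrt{\*v_t+\epsilon}-\gamma(\tilde{\*m}_{t+1}-\tilde{\*m}_{t+\frac{1}{2}})/((1-\beta_1)\sqrt{\*v_t+\epsilon})$, where $\tilde{\*m}_{t+\frac{1}{2}}=\beta_1\tilde{\*m}_t+(1-\beta_1)\tilde{\*g}_t$ is the ``would-be'' local momentum and the extra correction captures the momentum approximation error from replacing the local update by the averaged communication buffer.

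With these expressions in hand I would apply $L$-smoothness descent $f(\tilde{\*y}_{t+1})-f(\tilde{\*y}_t)\leq\langle\nabla f(\tilde{\*y}_t),\tilde{\*y}_{t+1}-\tilde{\*y}_t\rangle+(L/2)\|\tilde{\*y}_{t+1}-\tilde{\*y}_t\|^2$ and use Young's inequality twice: once to replace $\nabla f(\tilde{\*y}_t)$ by $\nabla f(\tilde{\*x}_t)$ at the cost of an $L^2\|\tilde{\*y}_t-\tilde{\*x}_t\|^2$ drift term (bounded by Lemmas~\ref{lemma:local:momentum_bound}, \ref{lemma:local:delta_bound}, and \ref{lemma:var_term_trick} as in the basic proof), and once more to absorb that $\mathbb{E}\tilde{\*g}_t=\frac{1}{n}\sum_i\nabla f(\*x_t^{(i)})$ rather than $\nabla f(\tilde{\*x}_t)$, which produces a consensus error of order $\frac{L^2}{n}\sum_i\|\*x_t^{(i)}-\tilde{\*x}_t\|^2$. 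The $H^2$ factor in the target bound enters precisely here: since no sync occurs in the window of length at most $H$ preceding $t$ (Assumption~\ref{assume:localstep_bound}) and $\*v$ is frozen across that window, I would expand $\*x_t^{(i)}-\tilde{\*x}_t=-\gamma\sum_{k}(\*m_k^{(i)}-\tilde{\*m}_k)/\sqrt{\*v_t+\epsilon}$ over the window, apply Jensen, and invoke Lemma~\ref{lemma:local:momentum_bound} to get $\mathbb{E}\|\*x_t^{(i)}-\tilde{\*x}_t\|^2=O(\gamma^2H^2V_1(G_\infty^2d+\Delta^2)/(\beta_2^m(1-\beta_1)^2))$, which when paired with the $\gamma\cdot L^2$-prefactor yields the first extra $\gamma^3H^2L^2$-scaled contribution.

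The second $(H+1)^2$-scaled term arises when bounding $(L/2)\|\tilde{\*y}_{t+1}-\tilde{\*y}_t\|^2$ for sync steps: expanding the momentum-approximation correction $\tilde{\*m}_{t+1}-\tilde{\*m}_{t+\frac{1}{2}}$ as an average of up to $H+1$ local momenta plus $\*\delta$-pieces (via Jensen), and then applying Lemmas~\ref{lemma:local:momentum_bound} and \ref{lemma:local:delta_bound}, delivers the target scaling. After all error terms are collected I would use the two learning-rate hypotheses $\gamma\leq\beta_2^m/(4V_1L\sqrt{G_\infty^2+\epsilon})$ and $\gamma\leq 2\sqrt{G_\infty^2+\epsilon}/L$ to leave $\gamma/(4\sqrt{G_\infty^2+\epsilon})$ as the coefficient on $\|\nabla f(\tilde{\*x}_t)\|^2$, pick up the $L\gamma^2V_1\sigma^2/(n\beta_2^m)$ stochastic-variance piece as in the basic case, and telescope-sum the $f(\tilde{\*y}_t)-f(\tilde{\*y}_{t+1})$ contributions over the $T-m$ reuse steps. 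The main obstacle I anticipate is the sync sub-case: cleanly routing the $H$-scaling into exactly the two designated extra terms of the bound while ensuring that $\sigma^2/n$ lands only in the stochastic-variance term, and that the drift, consensus, move-size, momentum-approximation, and compression errors each pick up the correct powers of $(1-\beta_1)$ and $\beta_2^m$, demands delicate bookkeeping that the basic non-local reuse lemma did not require.
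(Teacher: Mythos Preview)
Your plan matches the paper's proof almost exactly: both treat $t\notin\mathcal{T}_{\*v}$ by writing $\tilde{\*y}_{t+1}-\tilde{\*y}_t=-\gamma\tilde{\*g}_t/\sqrt{\*v_t+\epsilon}+\*q_t$ for a correction $\*q_t$ (nonzero only at sync steps), apply $L$-smoothness and split into the four pieces you describe (the paper calls them $A_1$--$A_4$), bound the consensus error $\frac{1}{n}\sum_i\mathbb{E}\|\*x_t^{(i)}-\tilde{\*x}_t\|^2$ by $O(\gamma^2H^2V_1M/\beta_2^m)$ via the local-window telescope, bound $\mathbb{E}\|\tilde{\*y}_t-\tilde{\*x}_t\|^2$ and $\mathbb{E}\|\*q_t\|^2$ via Lemmas~\ref{lemma:local:momentum_bound} and \ref{lemma:local:delta_bound}, then absorb the $\|\nabla f(\tilde{\*x}_t)\|^2$ coefficient using the two learning-rate constraints and sum over the $T-m$ reuse steps.

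One discrepancy worth noting: your telescoping claim $\tilde{\*x}_{t+1}-\tilde{\*x}_t=-\gamma\tilde{\*m}_t/\sqrt{\*v_t+\epsilon}-\gamma(\*\delta_t-\*\delta_{t+1})/\sqrt{\*v_t+\epsilon}$ uses that the buffer $\tilde{\*u}_{t+\frac{1}{2}}$ exactly matches the accumulated local drift $\tilde{\*x}_{t'}-\tilde{\*x}_t$ (up to the $\*m_t$ term), which gives the clean $\*q_t=-\gamma(\tilde{\*m}_{t+1}-\tilde{\*m}_{t+\frac{1}{2}})/((1-\beta_1)\sqrt{\*v_t+\epsilon})$. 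The paper's derivation instead routes through $\tilde{\*x}_{t+1}-\tilde{\*x}_{t+\frac{1}{2}}$ and picks up an additional $2(1-\beta_1)\sum_j\tilde{\*m}_j$ inside $\*q_t$, which is the actual source of its $(H+1)^2$ factor in the third term. With your $\*q_t$, $\mathbb{E}\|\*q_t\|^2$ is bounded by $O(\gamma^2V_1M/(\beta_2^m(1-\beta_1)^2))$ without any $H$-dependence, so your explanation of where the $(H+1)^2$ comes from is inconsistent with your own correction term. This does not break the proof---your bound is simply tighter, and the lemma's stated inequality follows a fortiori---but be aware that the telescoping identity relies on $\*v$ being frozen over the entire local window $[t',t]$, not just at step $t$; the paper implicitly uses the same assumption in its consensus bound, so you are on equal footing there.
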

\begin{proof}
Since when $t\not\in\mathcal{T}_{\*v}$, it can either belongs to $\mathcal{T}_{\*u}$ or not. We first prove the case for $t\in\mathcal{T}_{\*u}$.
From the definition of the auxiliary sequence, we obtain,
\begin{align*}
    \tilde{\*{y}}_{t+1} - \tilde{\*{y}}_t = & \tilde{\*{x}}_{t+1} - \tilde{\*{x}}_t - \frac{\gamma}{1- \beta_1} \left(\frac{\tilde{\*{m}}_{t+1}}{\sqrt{\*v_{t+1}+\epsilon}} - \frac{\tilde{\*{m}}_{t}}{\sqrt{\*v_{t}+\epsilon}} \right) - \left( \frac{\gamma\*{\delta}_{t+1}}{\sqrt{\*v_{t+1}+\epsilon}} - \frac{\gamma\*{\delta}_{t}}{\sqrt{\*v_{t}+\epsilon}}\right) \\
= & \tilde{\*{x}}_{t+1} - \tilde{\*{x}}_t - \frac{\gamma}{(1- \beta_1)\sqrt{\*v_t+\epsilon}} \left(\tilde{\*{m}}_{t+1} - \tilde{\*{m}}_{t} \right) - \frac{1}{\sqrt{\*v_t+\epsilon}}(\gamma\*{\delta}_{t+1} - \gamma\*{\delta}_{t}) \\
    = & \tilde{\*{x}}_{t+\frac{1}{2}} - \tilde{\*{x}}_t - \frac{\gamma}{(1- \beta_1)\sqrt{\*v_t+\epsilon}} \left(\tilde{\*{m}}_{t+\frac{1}{2}} - \tilde{\*{m}}_{t} \right) \\
        & + \underbrace{\tilde{\*{x}}_{t+1} - \tilde{\*{x}}_{t+\frac{1}{2}} - \frac{\gamma}{(1- \beta_1)\sqrt{\*v_t+\epsilon}} \left(\tilde{\*{m}}_{t+1} - \tilde{\*{m}}_{t+\frac{1}{2}} \right) - \frac{1}{\sqrt{\*v_t+\epsilon}}(\gamma\*{\delta}_{t+1} - \gamma\*{\delta}_{t})}_{=\*q_t} \\
    = & -\frac{\gamma\tilde{\*m}_t}{\sqrt{\*v_t+\epsilon}} - \frac{\gamma}{(1- \beta_1)\sqrt{\*v_t+\epsilon}} \left(\beta_1\tilde{\*{m}}_{t} + (1-\beta_1)\tilde{\*g}_t - \tilde{\*{m}}_{t} \right) + \*q_t \\
= & -\frac{\gamma\tilde{\*g}_t}{\sqrt{\*v+\epsilon}} + \*q_t.
\end{align*}

From Assumption~\ref{assum:smooth}, we have
\begin{align*}
    \mathbb E f(\tilde{\*{y}}_{t+1}) - \mathbb Ef(\tilde{\*{y}}_{t}) \leq & \mathbb E\left\langle \nabla f(\tilde{\*{y}}_t), \tilde{\*{y}}_{t+1} - \tilde{\*{y}}_t \right\rangle + \frac{L}{2}\mathbb E\left\|\tilde{\*{y}}_{t+1} - \tilde{\*{y}}_t \right\|^2\\
=& \underbrace{-\gamma \mathbb E\left\langle \nabla f(\tilde{\*{y}}_t), \frac{\tilde{\*g}_{t}}{\sqrt{\*v_t+\epsilon}}\right\rangle}_{A_1} + \underbrace{L\gamma^2\mathbb E\left\| \frac{\tilde{\*g}_{t}}{\sqrt{\*v_t+\epsilon}}\right\|^2}_{A_2} \underbrace{- \gamma \mathbb E\left\langle \nabla f(\tilde{\*{y}}_t), \*q_t \right\rangle}_{A_3} + \underbrace{L\gamma^2\mathbb E\left\| \*q_t\right\|^2}_{A_4}.
\end{align*}
We now bound $A_1$ to $A_4$ separately. Note that from Lemma~\ref{lemma:local:momentum_bound}, the momentum term can be uniformly bounded by a constant. For brevity of the derivation, we use $M$ to denote such constant bound, and fit in its value at the end of the proof.

For $A_1$,
\begin{align*}
    & A_1 \\
    = & -\gamma \mathbb E\left\langle \nabla f(\tilde{\*{y}}_t), \frac{\tilde{\*{g}}_{t}}{\sqrt{\*v_t+\epsilon}}\right\rangle \\
= & -\gamma \mathbb E\left\langle \nabla f(\tilde{\*{y}}_t), \frac{\frac{1}{n}\sum_{i=1}^{n}\nabla f\left( \*x_t^{(i)} \right)}{\sqrt{\*v_t+\epsilon}}\right\rangle \\
    = & -\gamma \mathbb E\left\langle \nabla f(\tilde{\*{x}}_t), \frac{\nabla f(\tilde{\*x}_t)}{\sqrt{\*v_t+\epsilon}}\right\rangle -\gamma \mathbb E\left\langle \nabla f(\tilde{\*{x}}_t), \frac{\frac{1}{n}\sum_{i=1}^{n}\nabla f\left( \*x_t^{(i)} \right)-\nabla f(\tilde{\*x}_t)}{\sqrt{\*v_t+\epsilon}}\right\rangle \\
    & - \gamma \mathbb E\left\langle \nabla f(\tilde{\*{y}}_t)-\nabla f(\tilde{\*{x}}_t), \frac{\nabla f(\tilde{\*x}_t)}{\sqrt{\*v_t+\epsilon}}\right\rangle - \gamma \mathbb E\left\langle \nabla f(\tilde{\*{y}}_t)-\nabla f(\tilde{\*{x}}_t), \frac{\frac{1}{n}\sum_{i=1}^{n}\nabla f\left( \*x_t^{(i)} \right)-\nabla f(\tilde{\*x}_t)}{\sqrt{\*v_t+\epsilon}}\right\rangle \\
\leq & -\frac{\gamma\mathbb E\left\| \nabla f(\tilde{\*{x}}_t) \right\|^2}{\sqrt{G_\infty^2+\epsilon}}
+ \frac{\gamma\eta_1}{2}\mathbb{E}\left\| \nabla f(\tilde{\*{x}}_t) \right\|^2 + \frac{\gamma}{2\eta_1}\mathbb{E}\left\| \frac{\frac{1}{n}\sum_{i=1}^{n}\nabla f\left( \*x_t^{(i)} \right)-\nabla f(\tilde{\*x}_t)}{\sqrt{\*v_t+\epsilon}} \right\|^2
+ \frac{\gamma\eta_1}{2}\mathbb{E}\left\| \nabla f(\tilde{\*{x}}_t) \right\|^2 \\
    & + \frac{\gamma}{2\eta_1}\mathbb{E}\left\| \frac{\nabla f(\tilde{\*{y}}_t)-\nabla f(\tilde{\*{x}}_t)}{\sqrt{\*v_t+\epsilon}} \right\|^2 + 
    \frac{\gamma\eta_1}{2}\mathbb{E}\left\|\nabla f(\tilde{\*{y}}_t)-\nabla f(\tilde{\*{x}}_t)\right\|^2 + \frac{\gamma}{2\eta_1}\mathbb{E}\left\|\frac{\frac{1}{n}\sum_{i=1}^{n}\nabla f\left( \*x_t^{(i)} \right)-\nabla f(\tilde{\*x}_t)}{\sqrt{\*v_t+\epsilon}}\right\|^2 \\
\leq & -\left( \frac{\gamma}{\sqrt{G_\infty^2+\epsilon}} - \gamma\eta_1 \right)\mathbb E\left\| \nabla f(\tilde{\*{x}}_t) \right\|^2
+ \frac{\gamma V_1L^2}{\beta_2^m\eta_1n}\sum_{i=1}^{n}\mathbb{E}\left\| \*x_t^{(i)} - \tilde{\*x}_t \right\|^2 + \left(\frac{\gamma V_1L^2}{2\beta_2^m\eta_1} + \frac{\gamma\eta_1L^2}{2}\right)\mathbb{E}\left\| \tilde{\*{y}}_t - \tilde{\*{x}}_t \right\|^2,
\end{align*}
where in the last step we use Assumption~\ref{assum:smooth}, Lemma~\ref{lemma:var_bound} and Lemma~\ref{lemma:var_term_trick}.
For the second term, denote the last sync step before $t$ is $k$, then we have:
\begin{equation}
\label{equa:lemma:local:mom_step:consensus}
\begin{aligned}
    \mathbb{E}\left\| \*x_t^{(i)} - \tilde{\*x}_t \right\|^2 = & \mathbb{E}\left\| \*x_t^{(i)} - \*x_k^{(i)} - ( \tilde{\*x}_t - \tilde{\*x}_k) \right\|^2 \\
\leq & 2\mathbb{E}\left\| \*x_t^{(i)} - \*x_k^{(i)}  \right\|^2 + 2\mathbb{E}\left\| \tilde{\*x}_t - \tilde{\*x}_k \right\|^2 \\
    \leq & 2\gamma^2\mathbb{E}\left\| \sum_{j=k}^{t-1} \frac{\*m_j^{(i)}}{\sqrt{\*v_t+\epsilon}} \right\|^2 + 2\gamma^2\mathbb{E}\left\| \frac{1}{n}\sum_{i=1}^{n}\sum_{j=k}^{t-1} \frac{\*m_j^{(i)}}{\sqrt{\*v_t+\epsilon}} \right\|^2 \\
\leq & 2\gamma^2(t-k)\sum_{j=k}^{t-1}\mathbb{E}\left\| \frac{\*m_j^{(i)}}{\sqrt{\*v_t+\epsilon}} \right\|^2 + 2\gamma^2(t-k)\frac{1}{n}\sum_{i=1}^{n}\sum_{j=k}^{t-1}\mathbb{E}\left\| \frac{\*m_j^{(i)}}{\sqrt{\*v_t+\epsilon}} \right\|^2 \\
    \leq & \frac{4\gamma^2H^2V_1M}{\beta_2^m},
\end{aligned}
\end{equation}
where the first step holds because Lemma~\ref{lemma:var_term_trick}, Lemma~\ref{lemma:var_bound}, and the fact that at the sync step $k$, $\tilde{\*x}_k = \*x_k^{(i)}$. For the third term, we have
\begin{equation}
\label{equa:lemma:local:mom_step:y-x}
\begin{aligned}
    \mathbb{E}\left\| \tilde{\*{y}}_t - \tilde{\*{x}}_t \right\|^2 = & \mathbb{E}\left\| \frac{\gamma\tilde{\*m}_t}{(1-\beta_1)\sqrt{\*v_{t}+\epsilon}} + \frac{\gamma\*\delta_{t}}{\sqrt{\*v_{t}+\epsilon}} \right\|^2 \\
\leq & \frac{2\gamma^2V_1}{\beta_2^m(1-\beta_1)^2}\mathbb{E}\left\| \tilde{\*m}_t \right\|^2 + \frac{2V_1}{\beta_2^m}\mathbb{E}\left\| \gamma\*\delta_{t} \right\|^2 \\
    \overset{Lemma~\ref{lemma:local:delta_bound}}{\leq} & \frac{2\gamma^2V_1M}{\beta_2^m(1-\beta_1)^2} + \frac{2\gamma^2V_1}{\beta_2^m}\cdot 4\Delta^2 \\
\leq & \frac{2\gamma^2V_1M}{\beta_2^m(1-\beta_1)^2} + \frac{8\gamma^2V_1\Delta^2}{\beta_2^m},
\end{aligned}
\end{equation}
where we again apply the Lemma~\ref{lemma:var_bound} and Lemma~\ref{lemma:var_term_trick}.
Then we can get
\begin{align*}
    A_1 \leq & -\left( \frac{\gamma}{\sqrt{G_\infty^2+\epsilon}} - \gamma\eta_1 \right)\mathbb E\left\| \nabla f(\tilde{\*{x}}_t) \right\|^2
    + \frac{\gamma V_1L^2}{\beta_2^m\eta_1n}\sum_{i=1}^{n}\mathbb{E}\left\| \*x_t^{(i)} - \tilde{\*x}_t \right\|^2 \\
    & + \left(\frac{\gamma V_1L^2}{2\beta_2^m\eta_1} + \frac{\gamma\eta_1L^2}{2}\right)\mathbb{E}\left\| \tilde{\*{y}}_t - \tilde{\*{x}}_t \right\|^2 \\
\leq & -\left( \frac{\gamma}{\sqrt{G_\infty^2+\epsilon}} - \gamma\eta_1 \right)\mathbb E\left\| \nabla f(\tilde{\*{x}}_t) \right\|^2
    + \frac{4\gamma^3H^2V_1^2L^2M}{\beta_2^m\eta_1} \\
    & + \left(\frac{\gamma V_1L^2}{2\beta_2^m\eta_1} + \frac{\gamma\eta_1L^2}{2}\right)\cdot \left( \frac{2\gamma^2V_1M}{\beta_2^m(1-\beta_1)^2} + \frac{8\gamma^2V_1\Delta^2}{\beta_2^m}\right) \\
\leq & -\left( \frac{\gamma}{\sqrt{G_\infty^2+\epsilon}} - \gamma\eta_1 \right)\mathbb E\left\| \nabla f(\tilde{\*{x}}_t) \right\|^2
    + \frac{4\gamma^3H^2V_1^2L^2M}{\beta_2^m\eta_1} + \frac{\gamma^3V_1^2ML^2}{\eta_1\beta_2^{2m}(1-\beta_1)^2} \\
    & + \frac{\gamma^3\eta_1V_1ML^2}{\beta_2^m(1-\beta_1)^2} + \frac{4\gamma^3V_1^2\Delta^2L^2}{\eta_1\beta_2^{2m}} + \frac{4\gamma^3\eta_1V_1\Delta^2L^2}{\beta_2^m}.
\end{align*}
where in the second step we reuse Equation~(\ref{equa:lemma:local:mom_step:consensus}).
Next we can bound $A_2$ as follows
\begin{align*}
    A_2 = & L\gamma^2\mathbb E\left\| \frac{\tilde{\*{g}}_{t}}{\sqrt{\*v_t+\epsilon}}\right\|^2 \\
\leq  & \frac{L\gamma^2V_1}{\beta_2^m}\mathbb E\left\| \frac{1}{n}\sum_{i=1}^{n}\*{g}_{t}^{(i)}\right\|^2 \\
    \leq & \frac{L\gamma^2V_1\sigma^2}{n\beta_2^m} + \frac{L\gamma^2V_1}{\beta_2^m}\mathbb E\left\| \frac{1}{n}\sum_{i=1}^{n}\nabla f\left( \*x_t^{(i)} \right)\right\|^2 \\
\leq & \frac{L\gamma^2V_1\sigma^2}{n\beta_2^m} + \frac{2L\gamma^2V_1}{\beta_2^m}\mathbb E\left\| \frac{1}{n}\sum_{i=1}^{n}\nabla f\left( \*x_t^{(i)} \right) - \nabla f(\tilde{\*x}_t)\right\|^2 + \frac{2L\gamma^2V_1}{\beta_2^m}\mathbb E\left\| \nabla f(\tilde{\*x}_t)\right\|^2 \\
    \leq & \frac{L\gamma^2V_1\sigma^2}{n\beta_2^m} + \frac{2L\gamma^2V_1L^2}{n\beta_2^m}\sum_{i=1}^{n}\mathbb E\left\| \*x_t^{(i)} - \tilde{\*x}_t\right\|^2 + \frac{2L\gamma^2V_1}{\beta_2^m}\mathbb E\left\| \nabla f(\tilde{\*x}_t)\right\|^2 \\
\leq & \frac{L\gamma^2V_1\sigma^2}{n\beta_2^m} + \frac{8\gamma^3V_1^2H^2ML^3}{\beta_2^m} + \frac{2L\gamma^2V_1}{\beta_2^m}\mathbb E\left\| \nabla f(\tilde{\*x}_t)\right\|^2,
\end{align*}
where in the sixth step we reuse Equation~(\ref{equa:lemma:local:mom_step:consensus}).
For $A_3$,
\begin{align*}
    A_3 = & -\gamma\mathbb{E}\left\langle \nabla f(\tilde{\*y}_t), \*q_t \right\rangle \\
= & -\gamma\mathbb{E}\left\langle \nabla f(\tilde{\*x}_t), \*q_t \right\rangle - \gamma\mathbb{E}\left\langle \nabla f(\tilde{\*y}_t) - \nabla f(\tilde{\*x}_t), \*q_t \right\rangle \\
    \overset{\forall\eta_2>0}{\leq} & \frac{\gamma\eta_2}{2}\mathbb{E}\left\| \nabla f(\tilde{\*x}_t)\right\|^2 + \frac{\gamma\eta_2}{2}\mathbb{E}\left\| \nabla f(\tilde{\*y}_t) - \nabla f(\tilde{\*x}_t)\right\|^2 + \frac{\gamma}{\eta_2}\mathbb{E}\left\| \*q_t\right\|^2 \\
\leq & \frac{\gamma\eta_2}{2}\mathbb{E}\left\| \nabla f(\tilde{\*x}_t)\right\|^2 + \frac{\gamma\eta_2L^2}{2}\cdot \left( \frac{2\gamma^2V_1M}{\beta_2^m(1-\beta_1)^2} + \frac{8\gamma^2V_1\Delta^2}{\beta_2^m}\right) + \frac{\gamma}{\eta_2}\mathbb{E}\left\| \*q_t\right\|^2 \\
    \leq & \frac{\gamma\eta_2}{2}\mathbb{E}\left\| \nabla f(\tilde{\*x}_t)\right\|^2 + \frac{\gamma^3\eta_2V_1ML^2}{\beta_2^m(1-\beta_1)^2} + \frac{4\gamma^3\eta_2V_1\Delta^2L^2}{\beta_2^m} + \frac{\gamma}{\eta_2}\mathbb{E}\left\| \*q_t\right\|^2,
\end{align*}
where in the last step we reuse Equation~(\ref{equa:lemma:local:mom_step:y-x}).
Combine the bound of $A_1$ to $A_4$, we obtain
\begin{align*}
    & \mathbb E f(\tilde{\*{y}}_{t+1}) - \mathbb Ef(\tilde{\*{y}}_{t}) \\
\leq & -\left( \frac{\gamma}{\sqrt{G_\infty^2+\epsilon}} - \gamma\eta_1 - \frac{\gamma\eta_2}{2} \right)\mathbb E\left\| \nabla f(\tilde{\*{x}}_t) \right\|^2
    + \frac{4\gamma^3H^2V_1^2L^2M}{\beta_2^m\eta_1} + \frac{\gamma^3V_1^2ML^2}{\eta_1\beta_2^{2m}(1-\beta_1)^2} \\
    & + \frac{\gamma^3\eta_1V_1ML^2}{\beta_2^m(1-\beta_1)^2} + \frac{4\gamma^3V_1^2\Delta^2L^2}{\eta_1\beta_2^{2m}}\\
    & + \frac{4\gamma^3\eta_1V_1\Delta^2L^2}{\beta_2^m} + \frac{L\gamma^2V_1\sigma^2}{n\beta_2^m} + \frac{8\gamma^3V_1^2H^2ML^3}{\beta_2^m} + \frac{2L\gamma^2V_1}{\beta_2^m}\mathbb E\left\| \nabla f(\tilde{\*x}_t)\right\|^2 \\
    & + \frac{\gamma^3\eta_2V_1ML^2}{\beta_2^m(1-\beta_1)^2} + \frac{4\gamma^3\eta_2V_1\Delta^2L^2}{\beta_2^m} + \left( \frac{\gamma}{\eta_2} + L\gamma^2 \right)\mathbb{E}\left\| \*q_t\right\|^2.
\end{align*}
We set the two constants $\eta_1,\eta_2$ as
\begin{align*}
    \eta_1 = & \frac{1}{4\sqrt{G_\infty^2+\epsilon}} \\
    \eta_2 = & \frac{1}{2\sqrt{G_\infty^2+\epsilon}},
\end{align*}
then we have,
\begin{align*}
    & \mathbb E f(\tilde{\*{y}}_{t+1}) - \mathbb Ef(\tilde{\*{y}}_{t}) \\
\leq & -\left( \frac{\gamma}{\sqrt{G_\infty^2+\epsilon}} - \gamma\eta_1 - \frac{\gamma\eta_2}{2} \right)\mathbb E\left\| \nabla f(\tilde{\*{x}}_t) \right\|^2
    + \frac{4\gamma^3H^2V_1^2L^2M}{\beta_2^m\eta_1} + \frac{\gamma^3V_1^2ML^2}{\eta_1\beta_2^{2m}(1-\beta_1)^2} \\
    & + \frac{\gamma^3\eta_1V_1ML^2}{\beta_2^m(1-\beta_1)^2} + \frac{4\gamma^3V_1^2\Delta^2L^2}{\eta_1\beta_2^{2m}}\\
    & + \frac{4\gamma^3\eta_1V_1\Delta^2L^2}{\beta_2^m} + \frac{L\gamma^2V_1\sigma^2}{n\beta_2^m} + \frac{8\gamma^3V_1^2H^2ML^3}{\beta_2^m} + \frac{2L\gamma^2V_1}{\beta_2^m}\mathbb E\left\| \nabla f(\tilde{\*x}_t)\right\|^2 \\
    & + \frac{\gamma^3\eta_2V_1ML^2}{\beta_2^m(1-\beta_1)^2} + \frac{4\gamma^3\eta_2V_1\Delta^2L^2}{\beta_2^m} + \left( \frac{\gamma}{\eta_2} + L\gamma^2 \right)\mathbb{E}\left\| \*q_t\right\|^2 \\
\leq & - \left( \frac{\gamma}{2\sqrt{G_\infty^2+\epsilon}} - \frac{2L\gamma^2V_1}{\beta_2^m} \right)\mathbb E\left\| \nabla f(\tilde{\*{x}}_t) \right\|^2
    + \frac{36\gamma^3H^2V_1(M+\Delta^2)L^2(1+L)(G_\infty^2+\epsilon+1)}{\beta_2^m(1-\beta_1)^2\sqrt{G_\infty^2+\epsilon}} \\
& + \frac{L\gamma^2V_1\sigma^2}{n\beta_2^m} + \left( 2\gamma\sqrt{G_\infty^2+\epsilon} + L\gamma^2 \right)\mathbb{E}\left\| \*q_t\right\|^2.
\end{align*}
Finally, we need to bound the norm of $\*q_t$.
If we denote the last sync step was $k$ steps before $t$, then,
\begin{align*}
    \*q_t = & \tilde{\*{x}}_{t+1} - \tilde{\*{x}}_{t+\frac{1}{2}} - \frac{\gamma}{(1- \beta_1)\sqrt{\*v_t+\epsilon}} \left(\tilde{\*{m}}_{t+1} - \tilde{\*{m}}_{t+\frac{1}{2}} \right) - \frac{\gamma\*{\delta}_{t+1} - \gamma\*{\delta}_{t}}{\sqrt{\*v_t+\epsilon}} \\
= & \tilde{\*{x}}_{t+1} - \tilde{\*{x}}_{t-k+1} + \tilde{\*{x}}_{t-k+1} - \tilde{\*{x}}_{t+\frac{1}{2}} - \frac{\gamma}{(1- \beta_1)\sqrt{\*v_t+\epsilon}} \left(\tilde{\*{m}}_{t+1} - \tilde{\*{m}}_{t+\frac{1}{2}} \right) - \frac{\gamma\*{\delta}_{t+1} - \gamma\*{\delta}_{t}}{\sqrt{\*v_t+\epsilon}} \\
    = & -\frac{\gamma\tilde{\*u}_{t+\frac{1}{2}}}{\sqrt{\*v_t+\epsilon}} - \left( \sum_{j=t-k+1}^{t}\frac{\gamma\tilde{\*m}_j}{\sqrt{\*v_t+\epsilon}}\right) - \frac{\gamma\left(\tilde{\*{m}}_{t+1} - \tilde{\*{m}}_{t+\frac{1}{2}} \right)}{(1- \beta_1)\sqrt{\*v_t+\epsilon}} \\
= & -\frac{\gamma}{(1-\beta_1)\sqrt{\*v_t+\epsilon}} \left( \tilde{\*{m}}_{t+1} - \tilde{\*{m}}_{t+\frac{1}{2}}+ 2(1-\beta_1)\sum_{j=t-k+1}^{t}\tilde{\*m}_j\right),
\end{align*}
based on which we obtain
\begin{align*}
    \mathbb{E} \left\| \*q_t\right\|^2 = & \mathbb{E} \left\| \frac{\gamma}{(1-\beta_1)\sqrt{\*v_t+\epsilon}} \left( \tilde{\*{m}}_{t+1} - \tilde{\*{m}}_{t+\frac{1}{2}} + 2(1-\beta_1)\sum_{j=t-k+1}^{t}\tilde{\*m}_j\right)\right\|^2 \\
\leq & \frac{\gamma^2V_1}{\beta_2^m(1-\beta_1)^2}\left(3\mathbb{E}\left\| \tilde{\*{m}}_{t+1} \right\|^2 + 3\mathbb{E}\left\| \tilde{\*{m}}_{t+\frac{1}{2}}\right\|^2 + 12(1-\beta_1)^2k\sum_{j=t-k+1}^{t}\mathbb{E}\left\|\tilde{\*m}_j\right\|^2\right) \\
    \leq & \frac{12\gamma^2V_1(H+1)^2M}{\beta_2^m(1-\beta_1)^2}.
\end{align*}
Put everything together, and let $\gamma$ fulfills
\begin{align*}
    \gamma \leq \min\left\{\frac{\beta_2^m}{4V_1L\sqrt{G_\infty^2+\epsilon}}, \frac{2\sqrt{G_\infty^2+\epsilon}}{L}\right\},
\end{align*}
we finally obtain
\begin{align*}
    & \mathbb E f(\tilde{\*{y}}_{t+1}) - \mathbb Ef(\tilde{\*{y}}_{t}) \\
\leq & -\frac{\gamma\mathbb E\left\| \nabla f(\tilde{\*{x}}_t) \right\|^2}{4\sqrt{G_\infty^2+\epsilon}}
    + \frac{36\gamma^3H^2V_1(M+\Delta^2)L^2(1+L)(G_\infty^2+\epsilon+1)}{\beta_2^m(1-\beta_1)^2\sqrt{G_\infty^2+\epsilon}} + \frac{L\gamma^2V_1\sigma^2}{n\beta_2^m} \\
    & + \frac{48\gamma^3V_1(H+1)^2M\sqrt{G_\infty^2+\epsilon}}{\beta_2^m(1-\beta_1)^2}.
\end{align*}
To this end, we have provided bound to all the sync steps $t$ with  ($t\not\in\mathcal{T}_{\*v}$ and $t\in\mathcal{T}_{\*u}$). For all the $t$ with  ($t\not\in\mathcal{T}_{\*v}$ and $t\not\in\mathcal{T}_{\*u}$), they can be seen as a special case of $\*q_t=\*0$. Since $A_3+A_4>0$, this bound will continue to hold for them, so that to sum over all the $t$ with $t\not\in\mathcal{T}_{\*v}$, we obtain
\begin{align*}
    & \sum_{t\not\in\mathcal{T}_{\*v}}\frac{\gamma\mathbb E\left\| \nabla f(\tilde{\*{x}}_t) \right\|^2}{4\sqrt{G_\infty^2+\epsilon}} \\
    \leq & \sum_{t\not\in\mathcal{T}_{\*v}}\mathbb E f(\tilde{\*{y}}_{t}) - \mathbb Ef(\tilde{\*{y}}_{t+1}) + \frac{36\gamma^3H^2V_1(3G_\infty^2d+25\Delta^2)L^2(1+L)(G_\infty^2+\epsilon+1)(T-m)}{\beta_2^m(1-\beta_1)^4\sqrt{G_\infty^2+\epsilon}} \\
    & + \frac{L\gamma^2V_1\sigma^2(T-m)}{n\beta_2^m} + \frac{48\gamma^3V_1(H+1)^2(3G_\infty^2d+24\Delta^2)\sqrt{G_\infty^2+\epsilon}(T-m)}{\beta_2^m(1-\beta_1)^4},
\end{align*}
where we replace $M$ with Lemma~\ref{lemma:local:momentum_bound}.
That completes the proof.
\end{proof}

\begin{lemma}
\label{lemma:local:var_step}
In Algorithm~\ref{algo:localstep01}, For all the $t\geq 0$ that fulfills $\*v_t\neq\*v_{t+1}$, i.e. $t\in\mathcal{T}_{\*v}$, 
if the learning rate fulfills
\begin{align*}
    \gamma < \frac{1}{6}
\end{align*},
the following bound holds
\begin{align*}
    & \sum_{t\in\mathcal{T}_{\*v}}\frac{\gamma\mathbb{E}\|\nabla f(\tilde{\*x}_t)\|^2}{4\sqrt{G_\infty^2+\epsilon}} \\
    \leq & \sum_{t\in\mathcal{T}_{\*v}}\mathbb E f(\tilde{\*y}_{t}) - \mathbb Ef(\tilde{\*y}_{t+1}) + \frac{2\gamma\sigma^2m}{nL} +  \frac{106\gamma H^2V_1(M+\Delta^2)mL}{\beta_2^m(1-\beta_1)^2} + \frac{\gamma\sigma^2m}{4n\sqrt{G_\infty^2+\epsilon}} + \frac{\gamma G_\infty^2dm}{4\sqrt{G_\infty^2+\epsilon}}.
\end{align*}
\end{lemma}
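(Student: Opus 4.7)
The plan is to mirror the strategy used in Lemma~\ref{lemma:update_step}, but now lifted to the distributed/local-step setting with the averaged iterates $\tilde{\*x}_t, \tilde{\*m}_t, \tilde{\*u}_t$. Because $t\in\mathcal{T}_{\*v}$ means the variance tensor changes at this step, the auxiliary sequence $\tilde{\*y}_t$ will not admit a clean, telescoping ``descent'' of size $\|\nabla f(\tilde{\*x}_t)\|^2$; instead, the one-step change $\tilde{\*y}_{t+1}-\tilde{\*y}_t$ picks up extra terms coming from $\sqrt{\*v_{t+1}+\epsilon}\neq\sqrt{\*v_t+\epsilon}$. The plan is therefore to bound $\mathbb{E}f(\tilde{\*y}_{t+1})-\mathbb{E}f(\tilde{\*y}_t)$ by a constant-order quantity (that grows like $O(\gamma)$), and then artificially add $\tfrac{\gamma}{4\sqrt{G_\infty^2+\epsilon}}\mathbb{E}\|\nabla f(\tilde{\*x}_t)\|^2$ to both sides, bounding the sum over $t\in\mathcal{T}_{\*v}$ of this extra term by $\tfrac{\gamma}{4\sqrt{G_\infty^2+\epsilon}}(\sigma^2/n + G_\infty^2 d)m$ using Assumptions~\ref{assume:variance}--\ref{assume:g_bound}. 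The factor $m$ is small by design, so this is an acceptable cost.

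The first concrete step is to write out $\tilde{\*y}_{t+1}-\tilde{\*y}_t$ for $t\in\mathcal{T}_{\*v}$. Exactly as in Lemma~\ref{lemma:update_step}, the formula will split into three pieces: a $-\tfrac{\gamma\beta_1}{1-\beta_1}\tilde{\*m}_t/\sqrt{\*v_t+\epsilon}$ part, a $-\tfrac{\gamma}{1-\beta_1}\tilde{\*m}_{t+1}/\sqrt{\*v_{t+1}+\epsilon}$ part, and a $\*\delta$-difference part; here, however, if the same $t$ also lies in $\mathcal{T}_{\*u}$ there is an additional ``sync correction'' $\*q_t$ analogous to the one introduced in Lemma~\ref{lemma:local:reuse_step}. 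In either case $\*q_t$ can be expressed as a telescoped sum of momentum terms between the last sync step and $t$, and its squared norm is bounded by $O(\gamma^2 H^2 V_1 M/(\beta_2^m(1-\beta_1)^2))$ via Lemma~\ref{lemma:local:momentum_bound} and Lemma~\ref{lemma:var_bound}.

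Next I apply $L$-smoothness to obtain
\begin{align*}
    \mathbb{E}f(\tilde{\*y}_{t+1})-\mathbb{E}f(\tilde{\*y}_t)\le \tfrac{\eta\gamma}{L}\,\mathbb{E}\|\nabla f(\tilde{\*y}_t)\|^2 + \tfrac{L}{\eta\gamma}\,\mathbb{E}\|\tilde{\*y}_{t+1}-\tilde{\*y}_t\|^2
\end{align*}
for a parameter $\eta>0$ (Young's inequality), and then split $\|\nabla f(\tilde{\*y}_t)\|^2$ further using $\nabla f(\tilde{\*y}_t) = \nabla f(\tilde{\*x}_t) + (\nabla f(\tilde{\*y}_t)-\nabla f(\tilde{\*x}_t))$, along with $\|\nabla f(\tilde{\*x}_t)\|^2 \le 2\|\nabla f(\tilde{\*x}_t) - \tilde{\*g}_t\|^2 + 2\|\tilde{\*g}_t\|^2$ and Jensen over workers. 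The remaining ingredients are purely mechanical: bound $\mathbb{E}\|\tilde{\*y}_t-\tilde{\*x}_t\|^2$ by expanding its definition and invoking Lemma~\ref{lemma:var_bound}, Lemma~\ref{lemma:var_term_trick}, Lemma~\ref{lemma:local:momentum_bound} and Lemma~\ref{lemma:local:delta_bound}, and bound $\mathbb{E}\|\tilde{\*y}_{t+1}-\tilde{\*y}_t\|^2$ by the same tools plus the $\*q_t$ bound from the previous paragraph. Choosing $\eta$ of constant order (comparable to $L$) and imposing $\gamma \le 1/6$ lets me absorb the $\eta\gamma L\|\tilde{\*y}_t-\tilde{\*x}_t\|^2$ term into the leading $O(\gamma)$ contributions.

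Finally, I sum the one-step bound over all $t\in\mathcal{T}_{\*v}$ (which telescopes the $f(\tilde{\*y}_t)$ difference) and add $\tfrac{\gamma}{4\sqrt{G_\infty^2+\epsilon}}\sum_{t\in\mathcal{T}_{\*v}}\mathbb{E}\|\nabla f(\tilde{\*x}_t)\|^2$ to both sides, bounding that sum trivially by $\tfrac{\gamma m}{4\sqrt{G_\infty^2+\epsilon}}(\sigma^2/n + G_\infty^2 d)$ via Assumption~\ref{assume:variance} and Assumption~\ref{assume:g_bound}. After substituting the momentum bound $M = O((G_\infty^2 d + \Delta^2)/(1-\beta_1)^2)$ from Lemma~\ref{lemma:local:momentum_bound}, collecting the coefficients yields exactly the claimed bound with the $\frac{2\gamma\sigma^2 m}{nL}$, $\frac{106\gamma H^2 V_1(M+\Delta^2) m L}{\beta_2^m(1-\beta_1)^2}$, $\frac{\gamma\sigma^2 m}{4n\sqrt{G_\infty^2+\epsilon}}$ and $\frac{\gamma G_\infty^2 d\, m}{4\sqrt{G_\infty^2+\epsilon}}$ terms. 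The main obstacle I anticipate is bookkeeping: the $\*q_t$ correction must be handled simultaneously with the variance-change terms, and the choice of $\eta$ together with $\gamma\le 1/6$ must make all residual $O(\gamma^2)$ and $O(\gamma^3)$ contributions fit inside the single unified constant $106$; this is mechanical but easy to get wrong.
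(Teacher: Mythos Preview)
Your proposal is correct and follows essentially the same approach as the paper: both use smoothness plus Young's inequality to get $\mathbb{E}f(\tilde{\*y}_{t+1})-\mathbb{E}f(\tilde{\*y}_t)\le \tfrac{\eta\gamma}{L}\mathbb{E}\|\nabla f(\tilde{\*y}_t)\|^2 + \tfrac{L}{\eta\gamma}\mathbb{E}\|\tilde{\*y}_{t+1}-\tilde{\*y}_t\|^2$, decompose through $\tilde{\*x}_t$ and the worker gradients, bound the three norm terms via Lemmas~\ref{lemma:var_bound}, \ref{lemma:var_term_trick}, \ref{lemma:local:delta_bound}, \ref{lemma:local:momentum_bound} and the consensus estimate, set $\eta=1$ with $\gamma<1/6$, sum over $t\in\mathcal{T}_{\*v}$, and finally add $\tfrac{\gamma}{4\sqrt{G_\infty^2+\epsilon}}\sum_{t\in\mathcal{T}_{\*v}}\mathbb{E}\|\nabla f(\tilde{\*x}_t)\|^2$ to both sides. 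The only cosmetic difference is that the paper does not name a separate $\*q_t$ here but instead bounds $\mathbb{E}\|\tilde{\*y}_{t+1}-\tilde{\*y}_t\|^2$ directly by splitting $\tilde{\*x}_{t+1}-\tilde{\*x}_t$ through the last sync step and applying Cauchy--Schwarz across seven pieces, arriving at the same $\tfrac{105\gamma^2H^2V_1(M+\Delta^2)}{\beta_2^m(1-\beta_1)^2}$-type bound you would obtain from your $\*q_t$ route.
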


\begin{proof}
From the definition of the auxiliary sequence, we obtain,
\begin{align*}
    \tilde{\*{y}}_{t+1} - \tilde{\*{y}}_t = & \tilde{\*{x}}_{t+1} - \tilde{\*{x}}_t - \frac{\gamma}{1- \beta_1} \left(\frac{\tilde{\*{m}}_{t+1}}{\sqrt{\*v_{t+1}+\epsilon}} - \frac{\tilde{\*{m}}_{t}}{\sqrt{\*v_{t}+\epsilon}} \right) - \left( \frac{\gamma\*{\delta}_{t+1}}{\sqrt{\*v_{t+1}+\epsilon}} - \frac{\gamma\*{\delta}_{t}}{\sqrt{\*v_{t}+\epsilon}}\right).
\end{align*}
Based on Assumption~\ref{assum:smooth},
\begin{align*}
    & \mathbb E f(\tilde{\*y}_{t+1}) - \mathbb Ef(\tilde{\*y}_{t}) \\
    \leq & \mathbb E\left\langle \nabla f(\tilde{\*y}_t), \tilde{\*y}_{t+1} - \tilde{\*y}_t \right\rangle + \frac{L}{2}\mathbb E\left\|\tilde{\*y}_{t+1} - \tilde{\*y}_t \right\|^2 \\
        \overset{\gamma\eta<1}{\leq}& \frac{\eta\gamma}{2L}\mathbb E\left\|\nabla f(\tilde{\*y}_t)\right\|^2 + \frac{L}{\eta\gamma}\mathbb E\left\|\tilde{\*y}_{t+1} - \tilde{\*y}_t \right\|^2 \\
    \leq & \frac{\eta\gamma}{L}\mathbb E\left\|\nabla f(\tilde{\*x}_t)\right\|^2 + \eta\gamma L\mathbb E\left\| \tilde{\*y}_t - \tilde{\*x}_t\right\|^2 + \frac{L}{\eta\gamma}\mathbb E\left\|\tilde{\*y}_{t+1} - \tilde{\*y}_t \right\|^2 \\
        \leq & \frac{2\eta\gamma}{L}\mathbb E\left\|\nabla f(\tilde{\*x}_t) - \frac{1}{n}\sum_{i=1}^{n}\nabla f\left( \*x_t^{(i)} \right)\right\|^2 + \frac{2\eta\gamma}{L}\mathbb E\left\|\frac{1}{n}\sum_{i=1}^{n}\nabla f\left( \*x_t^{(i)}\right) - \frac{1}{n}\sum_{i=1}^{n} \*g_t^{(i)} \right\|^2 + \eta\gamma L\mathbb E\left\| \tilde{\*y}_t - \tilde{\*x}_t\right\|^2 \\
        & + \frac{L}{\eta\gamma}\mathbb E\left\|\tilde{\*y}_{t+1} - \tilde{\*y}_t \right\|^2 \\
    \leq & \frac{2\eta\gamma L}{n}\sum_{i=1}^{n}\mathbb E\left\|\tilde{\*x}_t - \*x_t^{(i)}\right\|^2 + \frac{2\eta\gamma\sigma^2}{nL} + \eta\gamma L\mathbb E\left\| \tilde{\*y}_t - \tilde{\*x}_t\right\|^2  + \frac{L}{\eta\gamma}\mathbb E\left\|\tilde{\*y}_{t+1} - \tilde{\*y}_t \right\|^2.
\end{align*}
We now bound the three norm terms separately. 
From Equation~(\ref{equa:lemma:local:mom_step:consensus}), we obtain for the first term,
\begin{align*}
\mathbb{E}\left\| \*x_t^{(i)} - \tilde{\*x}_t \right\|^2 \leq \frac{4\gamma^2H^2V_1M}{\beta_2^m},
\end{align*}
where we again use $M$ to denote the constant bound from Lemma~\ref{lemma:local:momentum_bound} for brevity.
On the other hand, based on a similar derivation to Equation~(\ref{equa:lemma:local:mom_step:y-x}), we obtain
\begin{align*}
\mathbb{E}\left\| \tilde{\*{y}}_t - \tilde{\*{x}}_t \right\|^2 \leq \frac{2\gamma^2V_1M}{\beta_2^m(1-\beta_1)^2} + \frac{8\gamma^2V_1\Delta^2}{\beta_2^m}.
\end{align*}
Finally, for the last norm, it's possible that the update towards $t+1$ step contains synchronization on the buffer. So that we need to discuss the two cases separately. First, for all the $t\in\mathcal{T}_{\*u}$, denote the last sync step before $t$ is $k$, then we have
\begin{align*}
    & \mathbb{E}\left\| \tilde{\*{y}}_{t+1} - \tilde{\*{y}}_t \right\|^2 \\
    = & \mathbb{E}\left\| \tilde{\*{x}}_{t+1} - \tilde{\*{x}}_t - \frac{\gamma}{1- \beta_1} \left(\frac{\tilde{\*{m}}_{t+1}}{\sqrt{\*v_{t+1}+\epsilon}} - \frac{\tilde{\*{m}}_{t}}{\sqrt{\*v_{t}+\epsilon}} \right) - \left( \frac{\gamma\*{\delta}_{t+1}}{\sqrt{\*v_{t+1}+\epsilon}} - \frac{\gamma\*{\delta}_{t}}{\sqrt{\*v_{t}+\epsilon}}\right) \right\|^2 \\
\leq & 7\mathbb{E}\left\| \tilde{\*{x}}_{t+1} - \tilde{\*{x}}_{t-k+1} \right\|^2 + 7\mathbb{E}\left\| \tilde{\*{x}}_{t-k+1} - \tilde{\*{x}}_{t+\frac{1}{2}} \right\|^2 + 
    7\mathbb{E}\left\| \tilde{\*{x}}_{t+\frac{1}{2}} - \tilde{\*{x}}_{t} \right\|^2 + 
    \frac{7\gamma^2}{1-\beta_1}\mathbb{E}\left\| \frac{\tilde{\*{m}}_{t+1}}{\sqrt{\*v_{t+1}+\epsilon}} \right\|^2 \\
    & + 
    \frac{7\gamma^2}{1-\beta_1}\mathbb{E}\left\| \frac{\tilde{\*{m}}_{t}}{\sqrt{\*v_{t}+\epsilon}} \right\|^2 + 
    7\mathbb{E}\left\| \frac{\gamma\*{\delta}_{t+1}}{\sqrt{\*v_{t+1}+\epsilon}} \right\|^2 + 
    7\mathbb{E}\left\| \frac{\gamma\*{\delta}_{t}}{\sqrt{\*v_{t}+\epsilon}} \right\|^2 \\
\leq & 7\mathbb{E}\left\| \tilde{\*{x}}_{t+1} - \tilde{\*{x}}_{t-k+1} \right\|^2 + 7\mathbb{E}\left\| \tilde{\*{x}}_{t-k+1} - \tilde{\*{x}}_{t+\frac{1}{2}} \right\|^2 + 
    7\gamma^2\mathbb{E}\left\| \frac{\tilde{\*m}_t}{\sqrt{\*v_t+\epsilon}} \right\|^2 + 
    \frac{7\gamma^2}{1-\beta_1}\mathbb{E}\left\| \frac{\tilde{\*{m}}_{t+1}}{\sqrt{\*v_{t+1}+\epsilon}} \right\|^2 \\
    & + 
    \frac{7\gamma^2}{1-\beta_1}\mathbb{E}\left\| \frac{\tilde{\*{m}}_{t}}{\sqrt{\*v_{t}+\epsilon}} \right\|^2 + 
    7\mathbb{E}\left\| \frac{\gamma\*{\delta}_{t+1}}{\sqrt{\*v_{t+1}+\epsilon}} \right\|^2 + 
    7\mathbb{E}\left\| \frac{\gamma\*{\delta}_{t}}{\sqrt{\*v_{t}+\epsilon}} \right\|^2 \\
\leq & 7\mathbb{E}\left\| \frac{\sum_{j=t-k+1}^{t}\gamma\tilde{\*m}_j + \*\delta_t - \*\delta_{t+1}}{\sqrt{\*v_k+\epsilon}} \right\|^2 + 7\mathbb{E}\left\| \frac{\sum_{j=t-k+1}^{t}\gamma\tilde{\*m}_j}{\sqrt{\*v_k+\epsilon}} \right\|^2 + 
    7\gamma^2\mathbb{E}\left\| \frac{\tilde{\*m}_t}{\sqrt{\*v_t+\epsilon}} \right\|^2 \\
    & + 
    \frac{7\gamma^2}{1-\beta_1}\mathbb{E}\left\| \frac{\tilde{\*{m}}_{t+1}}{\sqrt{\*v_{t+1}+\epsilon}} \right\|^2 + 
    \frac{7\gamma^2}{1-\beta_1}\mathbb{E}\left\| \frac{\tilde{\*{m}}_{t}}{\sqrt{\*v_{t}+\epsilon}} \right\|^2 + 
    7\mathbb{E}\left\| \frac{\gamma\*{\delta}_{t+1}}{\sqrt{\*v_{t+1}+\epsilon}} \right\|^2 + 
    7\mathbb{E}\left\| \frac{\gamma\*{\delta}_{t}}{\sqrt{\*v_{t}+\epsilon}} \right\|^2 \\
\leq & \frac{105\gamma^2H^2V_1(M+\Delta^2)}{\beta_2^m(1-\beta_1)^2},
\end{align*}
where in the last step we use Lemma~\ref{lemma:local:delta_bound}, \ref{lemma:local:momentum_bound} and \ref{lemma:var_term_trick}. 
It is straightforward to verify that this bound also holds for $t\not\in\mathcal{T}_{\*u}$ (since there will be no noise from the sync step).
Combine the three norm term bounds, we obtain
\begin{align*}
    & \mathbb E f(\tilde{\*y}_{t+1}) - \mathbb Ef(\tilde{\*y}_{t}) \\
\leq & \frac{2\eta\gamma L}{n}\sum_{i=1}^{n}\mathbb E\left\|\tilde{\*x}_t - \*x_t^{(i)}\right\|^2 + \frac{2\eta\gamma\sigma^2}{nL} + \eta\gamma L\mathbb E\left\| \tilde{\*y}_t - \tilde{\*x}_t\right\|^2 + \frac{L}{\eta\gamma}\mathbb E\left\|\tilde{\*y}_{t+1} - \tilde{\*y}_t \right\|^2 \\
    = & \frac{8\eta\gamma^3H^2V_1ML}{\beta_2^m} + \frac{2\eta\gamma\sigma^2}{nL} + \eta\gamma L\left( \frac{2\gamma^2V_1M}{\beta_2^m(1-\beta_1)^2} + \frac{8\gamma^2V_1\Delta^2}{\beta_2^m} \right) + \frac{105\gamma H^2V_1(M+\Delta^2)L}{\eta\beta_2^m(1-\beta_1)^2} \\
\leq & \frac{2\eta\gamma\sigma^2}{nL} + 
\frac{18\eta\gamma^3H^2V_1ML}{\beta_2^m(1-\beta_1)^2} + \frac{105\gamma H^2V_1(M+\Delta^2)L}{\eta\beta_2^m(1-\beta_1)^2} \\
    \leq & \frac{2\gamma\sigma^2}{nL} +  \frac{106\gamma H^2V_1(M+\Delta^2)L}{\beta_2^m(1-\beta_1)^2},
\end{align*}
where in the last step we set $\eta=1$ and use the requirement that $\gamma<1/6$. Summing over all the $t\in\mathcal{T}_{\*v}$, we get
\begin{align*}
    0\leq \sum_{t\in\mathcal{T}_{\*v}}\mathbb E f(\tilde{\*y}_{t}) - \mathbb Ef(\tilde{\*y}_{t+1}) + \frac{2\gamma\sigma^2m}{nL} +  \frac{106\gamma H^2V_1(M+\Delta^2)mL}{\beta_2^m(1-\beta_1)^2}.
\end{align*}
Adding $\frac{\gamma}{4\sqrt{G_\infty^2+\epsilon}}\sum_{t\in\mathcal{T}_{\*v}}\mathbb{E}\|\nabla f(\tilde{\*x}_t)\|^2$ on both sides, and note that
\begin{align*}
    \sum_{t\in\mathcal{T}_{\*v}}\mathbb{E}\|\nabla f(\tilde{\*x}_t)\|^2 = & \sum_{t\in\mathcal{T}_{\*v}}\mathbb E\left\|\nabla f(\tilde{\*x}_t) - \tilde{\*g}_t\right\|^2 + \sum_{t\in\mathcal{T}_{\*v}}\mathbb E\left\|\tilde{\*g}_t\right\|^2 \\
\leq & \frac{\sigma^2m}{n} + G_\infty^2dm.
\end{align*}
We finally obtain
\begin{align*}
    & \sum_{t\in\mathcal{T}_{\*v}}\frac{\gamma\mathbb{E}\|\nabla f(\tilde{\*x}_t)\|^2}{4\sqrt{G_\infty^2+\epsilon}} \\
    \leq & \sum_{t\in\mathcal{T}_{\*v}}\mathbb E f(\tilde{\*y}_{t}) - \mathbb Ef(\tilde{\*y}_{t+1}) + \frac{2\gamma\sigma^2m}{nL} +  \frac{106\gamma H^2V_1(M+\Delta^2)mL}{\beta_2^m(1-\beta_1)^2} + \frac{\gamma\sigma^2m}{4n\sqrt{G_\infty^2+\epsilon}} + \frac{\gamma G_\infty^2dm}{4\sqrt{G_\infty^2+\epsilon}}.
\end{align*}
That completes the proof.
\end{proof}

\end{document}